\journal{ArXiv}
\DeclareMathOperator*{\argmax}{argmax}
\definecolor{myblue}{RGB}{0,106,214}
\newtheorem{theorem}{Theorem}
\newtheorem{example}{Example}
\newtheorem{definition}{Definition}%for the colors to mark the things that are still not clear.
\begin{document}

\begin{frontmatter}
\title{Theoretical Foundations of Forward Feature Selection Methods based on Mutual Information}

\author[ist,epfl]{Francisco Macedo\corref{cor1}}
\ead{francisco.macedo@epfl.ch}

%\author[ist]{M. Ros\'{a}rio Oliveira \corref{cor2}}
%\ead{rosario.oliveira@tecnico.ulisboa.pt}
\author[ist]{M. Ros\'{a}rio Oliveira\corref{mycorrespondingauthor}}
\cortext[mycorrespondingauthor]{Corresponding author}
\ead{rosario.oliveira@tecnico.ulisboa.pt}

\author[ist]{Ant\'{o}nio Pacheco\corref{cor3}}
\ead{apacheco@math.tecnico.ulisboa.pt}

\author[ist2]{Rui Valadas\corref{cor4}}
\ead{rui.valadas@tecnico.ulisboa.pt}

\address[ist]{CEMAT and Department of Mathematics, Instituto Superior T\'{e}cnico, Universidade de Lisboa, Av. Rovisco Pais, 1049-001 Lisboa, Portugal}
\address[epfl]{EPF Lausanne, SB-MATHICSE-ANCHP, Station 8, CH-1015 Lausanne, Switzerland}
\address[ist2]{IT and Departament of Electrical and Computer Engineering, Instituto Superior T\'{e}cnico, Universidade de Lisboa, Av. Rovisco Pais, 1049-001 Lisboa, Portugal}

\begin{abstract}
	
	Feature selection problems arise in a variety of applications, such as microarray analysis, clinical prediction, text categorization, image classification and face recognition, multi-label learning, and classification of internet traffic. Among the various classes of methods, forward feature selection methods based on mutual information have become very popular and are widely used in practice. However, comparative evaluations of these methods have been limited by being based on specific datasets and classifiers. In this paper, we develop a theoretical framework that allows evaluating the methods based on their theoretical properties. Our framework is grounded on the properties of the target objective function that the methods try to approximate, and on a novel categorization of features, according to their contribution to the explanation of the class; we derive upper and lower bounds for the target objective function and relate these bounds with the feature types. Then, we characterize the types of approximations taken by the methods, and analyze how these approximations cope with the good properties of the target objective function. Additionally, we develop a distributional setting designed to illustrate the various deficiencies of the methods, and provide several examples of wrong feature selections. Based on our work, we identify clearly the methods that should be avoided, and the methods that currently have the best performance.
	
\end{abstract}

\begin{keyword}
	
	Mutual information \sep
	Feature selection methods \sep
	Forward greedy search \sep
	Performance measure \sep
	Minimum Bayes risk
	
\end{keyword}

%\date{Received: date / Accepted: date}
% The correct dates will be entered by the editor

\end{frontmatter}

%\linenumbers

\section{Introduction}
\label{sec:intro}

%RV Reviewed
In an era of data abundance, of a complex nature, it is of utmost importance to extract from the data useful and valuable knowledge for real problem solving. Companies seek in the pool of available information commercial value that can leverage them among competitors or give support for making strategic decisions. One important step in this process is the selection of relevant and non-redundant information in order to clearly define the problem at hand and aim for its solution \citep[see][]{bolon2015recent}. 
%FRANCISCO: important related references that were omitted: \citep[see][]{yu2014towards,li2015geospatial}.
%FRANCISCO: Prof. Pacheco wanted to explicitly use the words 'big data', but we do not do this any more.
%FRANCISCO: commas before and after 'from the data' and also 'in the pool of available information'?

%RV Reviewed
Feature selection problems arise in a variety of applications, reflecting their importance. Instances can be found in: microarray analysis \citep[see][]{xing2001feature,saeys2007review,bolon2013review,li2004comparative,liu2002comparative}, clinical prediction \citep[see][]{bagherzadeh2015tutorial,li2004comparative,liu2002comparative}, text categorization \citep[see][]{yang1997comparative,rogati2002high,varela2013empirical,Khan:2016:SWI:2912588.2912779}, image classification and face recognition \citep[see][]{bolon2015recent}, multi-label learning \citep[see][]{schapire2000boostexter,crammer2002new}, and classification of internet traffic \citep[see][]{pascoal2012robust}.

%RV Reviewed
Feature selection techniques can be categorized as classifier-dependent (\textit{wrapper} and \textit{embedded} methods) and classifier-independent (\textit{filter} methods). Wrapper methods \citep{kohavi1997wrappers} search the space of feature subsets, using the classifier accuracy as the measure of utility for a candidate subset. There are clear disadvantages in using such approach. The computational cost is huge, while the selected features are specific for the considered classifier. Embedded methods \citep[Ch. 5]{guyon2008feature} exploit the structure of specific classes of classifiers to guide the feature selection process. In contrast, filter methods \citep[Ch. 3]{guyon2008feature} separate the classification and feature selection procedures, and define a heuristic ranking criterion that acts as a measure of the classification accuracy. 

%RV Reviewed
Filter methods differ among them in the way they quantify the benefits of including a particular feature in the set used in the classification process. Numerous heuristics have been suggested. Among these, methods for feature selection that rely on the concept of \textit{mutual information} are the most popular. Mutual information (MI) captures linear and non-linear association between features, and is strongly related with the concept of \textit{entropy}. Since considering the complete set of candidate features is too complex, filter methods usually operate sequentially and in the forward direction, adding one candidate feature at a time to the set of selected features. Here, the selected feature is the one that, among the set of candidate features, maximizes an objective function expressing the contribution of the candidate to the explanation of the class. A unifying approach for characterizing the different forward feature selection methods based on MI has been proposed by \cite{Brown:2012:CLM:2188385.2188387}. \cite{vergara2014review} also provide an overview of the different feature selection methods, adding a list of open problems in the field.
%FRANCISCO: do not like 'here' that much.
%FRANCISCO: 'class' has not really been defined.

%RV Reviewed
Among the forward feature selection methods based on MI, the first proposed group \citep{Battiti94usingmutual,Peng05featureselection,MR2422423,claudia} is constituted by methods based on assumptions that were originally introduced by \cite{Battiti94usingmutual}. These methods attempt to select the candidate feature that leads to: maximum \textit{relevance} between the candidate feature and the class; and minimum \textit{redundancy} of the candidate feature with respect to the already selected features. Such redundancy, which we call \textit{inter-feature redundancy}, is measured by the level of association between the candidate feature and the previously selected features. Considering inter-feature redundancy in the objective function is important, for instance, to avoid later problems of collinearity. In fact, selecting features that do not add value to the set of already selected ones in terms of class explanation, should be avoided.

%RV Reviewed
A more recently proposed group of methods based on MI considers an additional term, resulting from the accommodation of possible dependencies between the features given the class \citep{Brown:2012:CLM:2188385.2188387}. This additional term is disregarded by the previous group of filter methods. Examples of methods from this second group are the ones proposed by: \cite{LinT06,YangM99,MR2248026}. The additional term expresses the contribution of a candidate feature to the explanation of the class, when taken together with already selected features, which corresponds to a \textit{class-relevant redundancy}. The effects captured by this type of redundancy are also called \textit{complementarity} effects.
%FRANCISCO: same idea that concepts are introduced here, so that we should make this coherent with the section where they are actually formally introduced (or maybe this is not done, but we need at least to be sure that the concepts are clear enough here, as noted for the previous paragraph).
%FRANCISCO: class-relevant redundancy is about the information given the already selected, and not taken together with them.

%RV Reviewed
In this work we provide a comparison of forward feature selection methods based on mutual information using a theoretical framework. The framework is independent of specific datasets and classifiers and, therefore, provides a precise evaluation of the relative merits of the feature selection methods; it also allows unveiling several of their deficiencies. Our framework is grounded on the definition of a target (ideal) objective function and of a categorization of features according to their contribution to explanation of the class. We derive lower and upper bounds for the target objective function and establish a relation between these bounds and the feature types. The categorization of features has two novelties regarding previous works: we introduce the category of fully relevant features, features that fully explain the class together with already selected features, and we separate non-relevant features into irrelevant and redundant since, as we show, these categories have different properties regarding the feature selection process.
%FRANCISCO: again 'based' instead of 'grounded'?

%RV Reviewed
This framework provides a reference for evaluating and comparing actual feature selection methods. Actual methods are based on approximations of the target objective function, since the latter is difficult to estimate. We select a set of methods representative of the various types of approximations, and discuss the various drawbacks they introduced. Moreover, we analyze how each method copes with the good properties of the target objective function. Additionally, we define a distributional setting, based on a specific definition of class, features, and a novel performance metric; it provides a feature ranking for each method that is compared with the ideal feature ranking coming out of the theoretical\textcolor{red}{ly} framework. The setting was designed to challenge the actual feature selection methods, and illustrate the consequences of their drawbacks. Based on our work, we identify clearly the methods that should be avoided, and the methods that currently have the best performance.

%RV Reviewed
Recently, there has been several attempts to undergo a theoretical evaluation of forward feature selection methods based on MI. \cite{Brown:2012:CLM:2188385.2188387} and \cite{vergara2014review} provide an interpretation of the objective function of actual methods as approximations of a target objective function, which is similar to ours. However, they do not study the consequences of these approximations from a theoretical point-of-view, i.e. how the various types of approximations affect the good properties of the target objective function, which is the main contribution of our work. Moreover, they do not cover all types of feature selection methods currently proposed. \cite{claudiapaper} evaluated methods based on a distributional setting similar to ours, but the analysis is restricted to the group of methods that ignore complementarity, and again, does not address the theoretical properties of the methods.
%FRANCISCO: JMIM still not included in any of the references (clear for the paper of Claudia since it does not include methods that do not ignore complementarity); drawbacks that were explicitly noted were complementarity ignored and redundancy overscaled, in the paper by \cite{Brown:2012:CLM:2188385.2188387} (this is noted in the section where the drawbacks are introduced).
%FRANCISCO: 'ignore complementarity' can be understood but it is a bit too strong for what is known until this point.

%RV Reviewed
The rest of the paper is organized as follows. We introduce some background on entropy and MI in Section \ref{sec:entropymi}. This is followed, in Section \ref{sec:tmicmi}, by the presentation of the main concepts associated with conditional MI and MI between three random vectors. In Section \ref{sec:problem}, we focus on explaining the general context concerning forward feature selection methods based on MI, namely the target objective function, the categorization of features, and the relation between the feature types and the bounds of the target objective function. In Section \ref{sec:methods}, we introduce representative feature selection methods based on MI, along with their properties and drawbacks. In Section \ref{sec:setting}, we present a distribution based setting where some of the main drawbacks of the representative methods are illustrated, using the minimum Bayes risk as performance evaluation measure to assess the quality of the methods. The main conclusions can be found in Section \ref{sec:concs}.

\section{Entropy and mutual information}
\label{sec:entropymi}

In this section, we present the main ideas behind the concepts of entropy and mutual information, along with their basic properties. In what follows, $\mathcal{X}$ denotes the support of a random vector $\boldsymbol{X}$. Moreover, we assume the convention $0\ln 0=0$, justified by continuity since $x \ln x \rightarrow 0$ as $x\rightarrow 0^+$.

\subsection{Entropy}
\label{subsec:entropy}

The concept of entropy \citep{MR0026286} was initially motivated by problems in the field of telecommunications. 
Introduced for discrete random variables, the entropy is a measure of uncertainty. In the following, $P(A)$ denotes the probability of $A$. 

\begin{definition}
\label{def:entropydiscrete}
The entropy of a discrete random vector $\boldsymbol{X}$ is:
\begin{equation}
\label{eq:entropydiscrete}
H(\boldsymbol{X})=-\sum_{\boldsymbol{x} \in \mathcal{X}} P(\boldsymbol{X}=\boldsymbol{x}) \ln P(\boldsymbol{X}=\boldsymbol{x}).
\end{equation}
Given an additional discrete random vector $\boldsymbol{Y}$, the conditional entropy of $\boldsymbol{X}$ given $\boldsymbol{Y}$ is
\begin{equation*}
H(\boldsymbol{X}|\boldsymbol{Y})=-\sum_{\boldsymbol{y} \in \mathcal{Y}}\sum_{\boldsymbol{x} \in \mathcal{X}}P(\boldsymbol{X}=\boldsymbol{x}|\boldsymbol{Y}=\boldsymbol{y}) P(\boldsymbol{Y}=\boldsymbol{y}) \ln P(\boldsymbol{X}=\boldsymbol{x}|\boldsymbol{Y}=\boldsymbol{y}).
\end{equation*}
\end{definition}

Note that the entropy of $\boldsymbol{X}$ does not depend on the particular values taken by the random vector but only on the corresponding probabilities. It is clear that entropy is non-negative since each term of the summation in \eqref{def:entropydiscrete} is non-positive. Additionally, the value $0$ is only obtained for a degenerate random variable.

An important property that results from Definition \ref{def:entropydiscrete} is the so-called \textit{chain rule} \citep[Ch. 2]{MR2239987}:
\begin{equation}
H(\boldsymbol{X}_1,...,\boldsymbol{X}_n)=\sum_{i=2}^{n}H(\boldsymbol{X}_i|\boldsymbol{X}_{i-1},...,\boldsymbol{X}_1)+H(\boldsymbol{X}_1),
\label{eq:chainruleall}
\end{equation}
where a sequence of random vectors, such as $(\boldsymbol{X}_1,...,\boldsymbol{X}_n)$ and $(\boldsymbol{X}_{i-1},...,\boldsymbol{X}_1)$ above, should be seen as the random vector that results from the concatenation of its elements.

\subsection{Differential entropy}
\label{subsec:diffentropy}

A logical way to adapt the definition of entropy to the case where we deal with an absolutely continuous random vector is to replace the probability (mass) function of a discrete random vector by the probability density function of an absolutely continuous random vector, as next presented. The resulting concept is called \textit{differential entropy}. We let $f_{\boldsymbol{X}}$ denote the probability density function of an absolutely continuous random vector $\boldsymbol{X}$.

\begin{definition}
\label{def:entropycontinuous}
The differential entropy of an absolutely continuous random vector $\boldsymbol{X}$ is:
\begin{equation}
\label{eq:entropycontinuous}
h(\boldsymbol{X})= -\int_{\boldsymbol{x}\in \mathcal{X}} f_{\boldsymbol{X}}(\boldsymbol{x}) \ln{f_{\boldsymbol{X}}(\boldsymbol{x})} d\boldsymbol{x}.
\end{equation}
Given an additional absolutely continuous random vector $\boldsymbol{Y}$, such that $(\boldsymbol{X},\boldsymbol{Y})$ is also absolutely continuous, the conditional differential entropy of $\boldsymbol{X}$ given $\boldsymbol{Y}$ is
\begin{equation*}
h(\boldsymbol{X}|\boldsymbol{Y})=-\int_{\boldsymbol{y} \in \mathcal{Y}} f_{\boldsymbol{Y}}(\boldsymbol{y}) \int_{\boldsymbol{x} \in \mathcal{X}} f_{\boldsymbol{X}|\boldsymbol{Y}=\boldsymbol{y}}(\boldsymbol{x}) \ln f_{\boldsymbol{X}|\boldsymbol{Y}=\boldsymbol{y}}(\boldsymbol{x}) d\boldsymbol{x}\, d\boldsymbol{y}.
\end{equation*}
\end{definition}

It can be proved \citep[Ch. 9]{MR2239987} that the chain rule \eqref{eq:chainruleall} still holds replacing entropy by differential entropy.

The notation that is used for differential entropy, $h$, is different from the notation used for entropy, $H$. This is justified by the fact that entropy and differential entropy do not share the same properties. For instance, non-negativity does not necessarily hold for differential entropy. Also note that $h(\boldsymbol{X},\boldsymbol{X})$ and $h(\boldsymbol{X}|\boldsymbol{X})$ are not defined given that the pair $(\boldsymbol{X},\boldsymbol{X})$ is not absolutely continuous. Therefore, relations involving entropy and differential entropy need to be interpreted in a different way.

\begin{example}
If $\boldsymbol{X}$ is a random vector, of dimension $n$, following a multivariate normal distribution with mean $\boldsymbol{\mu}$ and covariance matrix $\boldsymbol{\Sigma}$, $\boldsymbol{X}\sim \mathcal{N}_n (\boldsymbol{\mu}, \boldsymbol{\Sigma})$, the value of the corresponding differential entropy is $\frac{1}{2}\ln\left((2\pi e)^n |\boldsymbol{\Sigma}|\right)$ \citep[Ch. 9]{MR2239987}, where $|\boldsymbol{\Sigma}|$ denotes the determinant of $\boldsymbol{\Sigma}$. In particular, for the one-dimensional case, $X\sim \mathcal{N} (\mu, \sigma^2)$, the differential entropy is negative if $\sigma^2 <1/2\pi e$, positive if $\sigma^2 >1/2\pi e$, and zero if $\sigma^2 =1/2 \pi e$. Thus, a zero differential entropy does not have the same interpretation as in the discrete case. Moreover, the differential entropy can take arbitrary negative values. 
\label{ex:normal}
\end{example}

In the rest of the paper, when the context is clear, we will refer to differential entropy simply as entropy.

\subsection{Mutual information}
\label{subsec:mi}

We now introduce mutual information (MI), which is a very important measure since it measures both linear and non-linear associations between random vectors.
%FRANCISCO: 'two' was removed but we then have 'three' in the next section, so not sure if it should stay, but in fact this is the core case and so it should be normal to present it this way.

\subsubsection{Discrete case}

\begin{definition}\label{def:midiscrete}
The MI between two discrete random vectors $\boldsymbol{X}$ and $\boldsymbol{Y}$ is\textcolor{red}{:}
\begin{equation*}
{\rm MI}(\boldsymbol{X},\boldsymbol{Y})=\sum_{\boldsymbol{x} \in \mathcal{X}} \sum_{\boldsymbol{y} \in \mathcal{Y}} P(\boldsymbol{X}=\boldsymbol{x},\boldsymbol{Y}=\boldsymbol{y}) \ln \frac{P(\boldsymbol{X}=\boldsymbol{x}, \boldsymbol{Y}=\boldsymbol{y})}{P(\boldsymbol{X}=\boldsymbol{x})P(\boldsymbol{Y}=\boldsymbol{y})}.
\end{equation*}
\end{definition}

MI satisfies the following  \citep[cf.][Ch. 9]{MR2239987}:
\begin{align}
  \label{eq:micondproperty}
  {\rm MI}(\boldsymbol{X},\boldsymbol{Y})&= H(\boldsymbol{X})-H(\boldsymbol{X}|\boldsymbol{Y});\\
  \label{eq:midiscnonnegativity}
  {\rm MI}(\boldsymbol{X},\boldsymbol{Y})& \geq 0;\\
  \label{eq:midiscxx}
  {\rm MI}(\boldsymbol{X},\boldsymbol{X})&=H(\boldsymbol{X}).
\end{align}
Equality holds in \eqref{eq:midiscnonnegativity} if and only if $\boldsymbol{X}$ and $\boldsymbol{Y}$ are independent random vectors.

According to \eqref{eq:micondproperty}, ${\rm MI}(\boldsymbol{X},\boldsymbol{Y})$ can be interpreted as the reduction in the uncertainty of $\boldsymbol{X}$ due to the knowledge of $\boldsymbol{Y}$. Note that, applying \eqref{eq:chainruleall}, we also have
\begin{equation}
  \label{eq:micondproperty2}
  {\rm MI}(\boldsymbol{X},\boldsymbol{Y})= H(\boldsymbol{X})+H(\boldsymbol{Y})-H(\boldsymbol{X},\boldsymbol{Y}).
\end{equation}

Another important property that immediately follows from \eqref{eq:micondproperty} is
\begin{equation}
{\rm MI}(\boldsymbol{X},\boldsymbol{Y}) \leq \min (H(\boldsymbol{X}), H(\boldsymbol{Y})).
\label{eq:miupperbound}
\end{equation}

In sequence, in view of \eqref{eq:micondproperty} and \eqref{eq:midiscnonnegativity}, we can conclude that, for any random vectors $\boldsymbol{X}$ and $\boldsymbol{Y}$, 
\begin{equation}
\label{eq:infodonthurt}
H(\boldsymbol{X}|\boldsymbol{Y})\leq H(\boldsymbol{X}).
\end{equation}
This result is again coherent with the intuition that entropy measures uncertainty. In fact, if more information is added, about $\boldsymbol{Y}$ in this case, the uncertainty about $\boldsymbol{X}$ will not increase.

\subsubsection{Continuous case}

\begin{definition}\label{def:micontinuous}
The MI between two absolutely continuous random vectors $\boldsymbol{X}$ and $\boldsymbol{Y}$, such that $(\boldsymbol{X},\boldsymbol{Y})$ is also absolutely continuous, is\textcolor{red}{:}
\begin{equation*}
{\rm MI}(\boldsymbol{X},\boldsymbol{Y})=-\int_{\boldsymbol{y} \in \mathcal{Y}}\int_{\boldsymbol{x} \in \mathcal{X}} f_{\boldsymbol{X},\boldsymbol{Y}}(\boldsymbol{x},\boldsymbol{y}) \ln \frac{f_{\boldsymbol{X},\boldsymbol{Y}}(\boldsymbol{x},\boldsymbol{y})}{f_{\boldsymbol{X}}(\boldsymbol{x})f_{\boldsymbol{Y}}(\boldsymbol{y})} d\boldsymbol{x}\, d\boldsymbol{y}.
\end{equation*}
\end{definition}

It is straight-forward to check, given the similarities between this definition and Definition \ref{def:midiscrete}, that most properties from the discrete case still hold replacing entropy by differential entropy. In particular, the only property from \eqref{eq:micondproperty} to \eqref{eq:midiscxx} that cannot be restated for differential entropy is \eqref{eq:midiscxx} since Definition \ref{def:micontinuous} does not cover ${\rm MI}(\boldsymbol{X},\boldsymbol{X})$, again because the pair $(\boldsymbol{X},\boldsymbol{X})$ is not absolutely continuous. Additionally, restatements of \eqref{eq:micondproperty2} and \eqref{eq:infodonthurt} for differential entropy also hold.
%FRANCISCO: we could directly mention those that cannot be restated from \eqref{eq:micondproperty} to \eqref{eq:infodonthurt}, now that there is no theorem that concerns the first sequence, from \eqref{eq:micondproperty} to \eqref{eq:midiscxx}.

On the whole, MI for absolutely continuous random vectors verifies most important properties from the discrete case, including being symmetric and non-negative. Moreover, the value $0$ is obtained if and only if the random variables are independent. Concerning a parallel of \eqref{eq:miupperbound} for absolutely continuous random vectors, there is no natural finite upper bound for $h(\boldsymbol{X})$ in the continuous case. In fact, while the expression ${\rm MI}(\boldsymbol{X},\boldsymbol{Y})= h(\boldsymbol{X})-h(\boldsymbol{X}|\boldsymbol{Y})$, similar to \eqref{eq:micondproperty}, holds, $h(\boldsymbol{X}|\boldsymbol{Y})$ and $h(\boldsymbol{Y}|\boldsymbol{X})$ are not necessarily non-negative. Furthermore, as noted in Example \ref{ex:normal}, differential entropies can be become arbitrarily small, which applies, in particular, to the terms $h(\boldsymbol{X}|\boldsymbol{Y})$ and $h(\boldsymbol{Y}|\boldsymbol{X})$. As a result, ${\rm MI}(\boldsymbol{X},\boldsymbol{Y})$ can grow arbitrarily.

\subsubsection{Combination of continuous with discrete random vectors}

The definition of MI when we have an absolutely continuous random vector and a discrete random vector is also important in later stages of this article. For this reason, and despite the fact that the results that follow are naturally obtained from those that involve only either discrete or absolutely continuous vectors, we briefly go through them now.

\begin{definition}\label{def:micontdisc}
The MI between an absolutely continuous random vector $\boldsymbol{X}$ and a discrete random vector $\boldsymbol{Y}$ is given by either of the following two expressions:
\begin{align*}
{\rm MI}(\boldsymbol{X},\boldsymbol{Y})&=\sum_{\boldsymbol{y} \in \mathcal{Y}} P(\boldsymbol{Y}=\boldsymbol{y}) \int_{\boldsymbol{x}\in \mathcal{X}} f_{\boldsymbol{X}|\boldsymbol{Y}=\boldsymbol{y}}(\boldsymbol{x}) \ln \frac{f_{\boldsymbol{X}|\boldsymbol{Y}=\boldsymbol{y}}(\boldsymbol{x})}{f_{\boldsymbol{X}}(\boldsymbol{x})} d\boldsymbol{x} 
\\
&=
\int_{\boldsymbol{x}\in \mathcal{X}} f_{\boldsymbol{X}}(\boldsymbol{x}) \sum_{\boldsymbol{y} \in \mathcal{Y}} P(\boldsymbol{Y}=\boldsymbol{y}|\boldsymbol{X}=\boldsymbol{x}) \ln \frac{P(\boldsymbol{Y}=\boldsymbol{y}|\boldsymbol{X}=\boldsymbol{x})}{P(\boldsymbol{Y}=\boldsymbol{y})} d\boldsymbol{x}.
\end{align*}
\end{definition}

The majority of the properties stated for the discrete case are still valid in this case. In particular, analogues of \eqref{eq:micondproperty} hold, both in terms of entropies as well as in terms of differential entropies:
\begin{align}
\label{eq:micondpropertymix}
{\rm MI}(\boldsymbol{X},\boldsymbol{Y})&= h(\boldsymbol{X})-h(\boldsymbol{X}|\boldsymbol{Y})\\
\label{eq:micondpropertymix2}
&= H(\boldsymbol{Y})-H(\boldsymbol{Y}|\boldsymbol{X}).
\end{align}
Furthermore, ${\rm MI}(\boldsymbol{X},\boldsymbol{Y})\leq H(\boldsymbol{Y})$ is the analogue of \eqref{eq:miupperbound} for this setting. Note that \eqref{eq:micondpropertymix2}, but not \eqref{eq:micondpropertymix}, can be used to obtain an upper bound for ${\rm MI}(\boldsymbol{X},\boldsymbol{Y})$ since $h(\boldsymbol{X}|\boldsymbol{Y})$ may be negative.

\section{Triple mutual information and conditional mutual information}
\label{sec:tmicmi}

In this section, we discuss definitions and important properties associated with conditional MI and MI between three random vectors. Random vectors are considered to be discrete in this section as the generalization of the results for absolutely continuous random vectors would follow a similar approach.

\subsection{Conditional mutual information}
\label{subsec:conditionalmi}

\textit{Conditional MI} is defined in terms of entropies as follows, in a similar way to property \eqref{eq:micondproperty} \citep[cf.][]{MR2248026,meyer2006use}. 

\begin{definition}
\label{def:miconddiscrete}
The conditional MI between two random vectors $\boldsymbol{X}$ and $\boldsymbol{Y}$ given the random vector $\boldsymbol{Z}$ is written as
\begin{equation}
{\rm MI}(\boldsymbol{X},\boldsymbol{Y}|\boldsymbol{Z})= H(\boldsymbol{X}|\boldsymbol{Z})-H(\boldsymbol{X}|\boldsymbol{Y},\boldsymbol{Z}).
\label{eq:miconddiscrete}
\end{equation}
\end{definition}

Using \eqref{eq:miconddiscrete} and an analogue of the chain rule for conditional entropy, we conclude that:
\begin{equation}
{\rm MI}(\boldsymbol{X},\boldsymbol{Y}|\boldsymbol{Z})=H(\boldsymbol{X}|\boldsymbol{Z})+H(\boldsymbol{Y}|\boldsymbol{Z})-H(\boldsymbol{X},\boldsymbol{Y}|\boldsymbol{Z}).
\label{eq:miconddiscrete3}
\end{equation}

In view of Definition \ref{def:miconddiscrete}, developing the involved terms according to Definition \ref{def:midiscrete}, we obtain:
\begin{equation}
{\rm MI}(\boldsymbol{X},\boldsymbol{Y}|\boldsymbol{Z})= E_{\boldsymbol{Z}}[{\rm MI}(\tilde{\boldsymbol{X}}(\boldsymbol{Z}),\tilde{\boldsymbol{Y}}(\boldsymbol{Z})],
\label{eq:miconddiscreteexpvalue}
\end{equation}
where, for $\boldsymbol{z}\in \mathcal{Z}$, $(\tilde{\boldsymbol{X}}(\boldsymbol{z}),\tilde{\boldsymbol{Y}}(\boldsymbol{z}))$ is equal in distribution to $(\boldsymbol{X},\boldsymbol{Y})|\boldsymbol{Z}=\boldsymbol{z}$. 

Taking \eqref{eq:midiscnonnegativity} and \eqref{eq:miconddiscreteexpvalue} into account, 
\begin{equation}
{\rm MI}(\boldsymbol{X},\boldsymbol{Y}|\boldsymbol{Z})\geq 0,
\label{eq:micondnoneg}
\end{equation}
and ${\rm MI}(\boldsymbol{X},\boldsymbol{Y}|\boldsymbol{Z})= 0$ if and only if $\boldsymbol{X}$ and $\boldsymbol{Y}$ are conditionally independent given $\boldsymbol{Z}$.

Moreover, from \eqref{eq:miconddiscrete} and \eqref{eq:micondnoneg}, we conclude the following result similar to \eqref{eq:infodonthurt}:
\begin{equation}
H(\boldsymbol{X}|\boldsymbol{Y},\boldsymbol{Z}) \leq H(\boldsymbol{X}|\boldsymbol{Z}).
\label{eq:infodonthurtgen}
\end{equation}

\subsection{Triple mutual information}
\label{subsec:triplemi}

The generalization of the concept of MI to more than two random vectors is not unique. One such definition, associated with the concept of total correlation, was proposed by \cite{watanabe1960information}. An alternative one, proposed by \cite{Bell02}, is called \textit{triple MI} (TMI). We will consider the latter since it is the most meaningful in the context of objective functions associated with the problem of forward feature selection.

\begin{definition}
\label{def:mi3discrete}
The triple MI between three random vectors $\boldsymbol{X}$, $\boldsymbol{Y}$, and $\boldsymbol{Z}$ is defined as
\begin{align*}
%\nonumber
{\rm TMI}(\boldsymbol{X},\boldsymbol{Y},\boldsymbol{Z})&=\sum_{\boldsymbol{x} \in \mathcal{X}} \sum_{\boldsymbol{y} \in \mathcal{Y}} \sum_{\boldsymbol{z} \in \mathcal{Z}} P(\boldsymbol{X}=\boldsymbol{x},\boldsymbol{Y}=\boldsymbol{y},\boldsymbol{Z}=\boldsymbol{z})\times \\ 
%\label{eq:mi3discrete}
%again no equation needed since the expression is not cited again.
	& \hspace*{-2.0cm} \ln\frac{P(\boldsymbol{X}=\boldsymbol{x},\boldsymbol{Y}=\boldsymbol{y})P(\boldsymbol{Y}=\boldsymbol{y},\boldsymbol{Z}=\boldsymbol{z})P(\boldsymbol{X}=\boldsymbol{x},\boldsymbol{Z}=\boldsymbol{z})}{P(\boldsymbol{X}=\boldsymbol{x},\boldsymbol{Y}=\boldsymbol{y},\boldsymbol{Z}=\boldsymbol{z})P(\boldsymbol{X}=\boldsymbol{x})P(\boldsymbol{Y}=\boldsymbol{y})P(\boldsymbol{Z}=\boldsymbol{z})}.
\end{align*}
\end{definition}

Using the definition of MI and TMI, we can conclude that TMI and conditional MI are related in the following way, which provides extra intuition about the two concepts:
\begin{equation}
{\rm TMI}(\boldsymbol{X},\boldsymbol{Y},\boldsymbol{Z})= {\rm MI}(\boldsymbol{X},\boldsymbol{Y}) - {\rm MI}(\boldsymbol{X},\boldsymbol{Y}|\boldsymbol{Z}).
\label{eq:mi3discrete2}
\end{equation}
The TMI is not necessarily non-negative. This fact is exemplified and discussed in detail in the next section.

\section{The forward feature selection problem}
\label{sec:problem}

In this section, we focus on explaining the general context concerning forward feature selection methods based on mutual information. We first introduce target objective functions to be maximized in each step; we then define important concepts and prove some properties of such target objective functions. In the rest of this section, features are considered to be discrete for simplicity. The name target objective functions comes from the fact that, as we will argue, these are objective functions that perform exactly as we would desire ideally, so that a good method should reproduce its properties as well as possible.

\subsection{Target objective functions}
\label{subsec:idealof}

Let $C$ represent the class, which identifies the group each object belongs to. $\boldsymbol{S}$ ($\boldsymbol{F}$), in turn, denote the set of selected (unselected) features at a certain step of the iterative algorithm; in fact, $\boldsymbol{S} \cap \boldsymbol{F}=\emptyset$, and $\boldsymbol{S} \cup \boldsymbol{F}$ is the set with all input features. In what follows, when a set of random variables is in the argument of an entropy or MI term, it stands for the random vector composed by the random variables it contains.

Given the set of selected features, forward feature selection methods aim to select a candidate feature $X_j \in \boldsymbol{F}$ such that
\begin{equation*}
X_j=\arg{\max_{X_i \in \boldsymbol{F}}{{\rm  MI}(C,\boldsymbol{S} \cup \{X_i\})} }.
\end{equation*} 
Therefore, $X_j$ is, among the features in $\boldsymbol{F}$, the feature $X_i$ for which $\boldsymbol{S}\cup \{X_i\}$ maximizes the association (measured using MI) with the class, $C$. Note that we choose the feature that maximizes ${\rm MI}(C,X_i)$ in the first step (i.e., when $\boldsymbol{S}=\emptyset$).

Since ${\rm MI}(C,\boldsymbol{S} \cup \{X_i\}) = {\rm MI}(C,\boldsymbol{S}) + {\rm MI}(C,X_i|\boldsymbol{S})$ \citep[cf.][]{MR2422423}, in view of \eqref{eq:mi3discrete2}, the objective function evaluated at the candidate feature $X_i$ can be written as 
\begin{align}
\nonumber
 {\rm OF}(X_i)&={\rm MI}(C,\boldsymbol{S}) + {\rm MI}(C,X_i|\boldsymbol{S})\\
\nonumber
 &={\rm MI}(C,\boldsymbol{S}) + {\rm MI}(C,X_i) - {\rm TMI}(C,X_i,\boldsymbol{S})\\
\label{eq:maximization}
 &={\rm MI}(C,\boldsymbol{S}) + {\rm MI}(C,X_i) - {\rm MI}(X_i,\boldsymbol{S}) + {\rm MI}(X_i,\boldsymbol{S}|C).
\end{align}

The feature selection methods try to approximate this objective function. However, since the term ${\rm MI}(C,\boldsymbol{S})$ does not depend on $X_i$, most approximations can be studied taking as a reference the simplified form of objective function given by
\begin{equation*}
	{\rm OF'}(X_i)={\rm MI}(C,X_i) - {\rm MI}(X_i,\boldsymbol{S}) + {\rm MI}(X_i,\boldsymbol{S}|C).
\end{equation*}
This objective function has distinct properties from those of \eqref{eq:maximization} and, therefore, deserves being addressed separately. Moreover, it is the reference objective function for most feature selection methods.
%FRANCISCO: 'most approximations can be studied taking as a reference the simplified' and 'Moreover, it is the reference objective function for most feature selection methods.' have the same idea but I prefer the latter since it is much more clear. in fact 'can be' is strange since they are based on that one in fact.
%FRANCISCO: change OF and OF'?

The objective functions ${\rm OF}$ and ${\rm OF'}$ can be written in terms of entropies, which provides a useful interpretation. Using \eqref{eq:micondproperty}, we obtain for the first objective function:
\begin{equation}
{\rm OF}(X_i)=H(C)-H(C|X_i,\boldsymbol{S}).
\label{eq:maximization_entropy}
\end{equation}
Maximizing $H(C)-H(C|X_i,\boldsymbol{S})$ provides the same candidate feature $X_j$ as minimizing $H(C|X_i,\boldsymbol{S})$, for $X_i\in \boldsymbol{F}$. This means that the feature to be selected is the one leading to the minimal uncertainty of the class among the candidate features. As for the second objective function, we obtain, using again \eqref{eq:micondproperty}:
\begin{equation}
{\rm OF'}(X_i)=H(C|\boldsymbol{S})-H(C|X_i,\boldsymbol{S}).
\label{eq:OF_global_entropy}
\end{equation}
This emphasizes that a feature that maximizes \eqref{eq:maximization_entropy} also maximizes \eqref{eq:OF_global_entropy}. In fact, the term that depends on $X_i$ is the same in the two expressions.

We now provide bounds for the target objective functions.

\begin{theorem}
Given a general candidate feature $X_i$: 
\begin{enumerate}
\item $H(C)-H(C|\boldsymbol{S}) \leq {\rm OF}(X_i) \leq H(C)$.
\item $0 \leq {\rm OF'}(X_i) \leq H(C|\boldsymbol{S})$.
\end{enumerate}
\label{th:boundsgeneral}
\end{theorem}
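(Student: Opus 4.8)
The plan is to derive both pairs of inequalities directly from the entropy representations \eqref{eq:maximization_entropy} and \eqref{eq:OF_global_entropy} together with the non-negativity results for mutual information and conditional mutual information established in Section \ref{sec:tmicmi}. The key observation is that each objective function is a difference of entropies, so each bound reduces to an inequality between conditional entropies that has already been recorded in the excerpt.

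For part 1, I would start from ${\rm OF}(X_i)=H(C)-H(C|X_i,\boldsymbol{S})$, as given in \eqref{eq:maximization_entropy}. The upper bound ${\rm OF}(X_i)\leq H(C)$ follows immediately from non-negativity of conditional entropy, $H(C|X_i,\boldsymbol{S})\geq 0$, which holds since $C$ is discrete. For the lower bound, I would apply \eqref{eq:infodonthurtgen} with the roles $\boldsymbol{X}=C$, $\boldsymbol{Y}=X_i$, $\boldsymbol{Z}=\boldsymbol{S}$, giving $H(C|X_i,\boldsymbol{S})\leq H(C|\boldsymbol{S})$, hence ${\rm OF}(X_i)=H(C)-H(C|X_i,\boldsymbol{S})\geq H(C)-H(C|\boldsymbol{S})$. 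Alternatively one can note ${\rm OF}(X_i)={\rm MI}(C,\boldsymbol{S}\cup\{X_i\})\geq {\rm MI}(C,\boldsymbol{S})=H(C)-H(C|\boldsymbol{S})$ directly by monotonicity of MI under enlarging the conditioning vector, which is the same fact.

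For part 2, I would use ${\rm OF'}(X_i)=H(C|\boldsymbol{S})-H(C|X_i,\boldsymbol{S})$ from \eqref{eq:OF_global_entropy}. This equals ${\rm MI}(C,X_i|\boldsymbol{S})$ by Definition \ref{def:miconddiscrete}, so the lower bound ${\rm OF'}(X_i)\geq 0$ is exactly \eqref{eq:micondnoneg}. The upper bound ${\rm OF'}(X_i)\leq H(C|\boldsymbol{S})$ again follows from $H(C|X_i,\boldsymbol{S})\geq 0$. I should be mindful that for discrete $C$ all the entropies involved are non-negative and finite, so no subtraction is ill-defined; since the section explicitly restricts attention to discrete features this is unproblematic, though it is worth a remark that in the continuous-class case the argument would need the mixed-type bound ${\rm MI}\leq H(\cdot)$ from \eqref{eq:micondpropertymix2} rather than non-negativity of differential entropy.

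There is no real obstacle here: the proof is essentially bookkeeping, matching each of the four inequalities to either non-negativity of (conditional) entropy or to the ``information doesn't hurt'' inequality \eqref{eq:infodonthurtgen}. The only point requiring a moment's care is making sure the chosen substitution of variables into \eqref{eq:infodonthurtgen} is legitimate — i.e. that $\boldsymbol{S}$ may be empty in the first step, in which case \eqref{eq:infodonthurtgen} degenerates to \eqref{eq:infodonthurt} and part 1's lower bound becomes the trivial ${\rm OF}(X_i)\geq 0$, consistent with $H(C|\boldsymbol{S})=H(C)$.
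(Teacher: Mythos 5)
Your proof is correct and follows essentially the same route as the paper's: both obtain the upper bounds from $H(C|X_i,\boldsymbol{S})\geq 0$ applied to the entropy representations \eqref{eq:maximization_entropy} and \eqref{eq:OF_global_entropy}, and both obtain the lower bounds from ${\rm MI}(C,X_i|\boldsymbol{S})\geq 0$ (your invocation of \eqref{eq:infodonthurtgen} is the same fact, since that inequality is itself derived from $H(C|\boldsymbol{S})-H(C|X_i,\boldsymbol{S})={\rm MI}(C,X_i|\boldsymbol{S})\geq 0$). Your additional remarks on the $\boldsymbol{S}=\emptyset$ case and the discreteness of $C$ are sound but not needed beyond what the paper records.
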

\begin{proof}
Using the corresponding representations \eqref{eq:maximization_entropy} and \eqref{eq:OF_global_entropy} of the associated objective functions, the upper bounds follow from $H(C|X_i,\boldsymbol{S})\geq 0$. As for the lower bounds, in the case of statement 1, it comes directly from the fact that ${\rm OF'}(X_i)={\rm MI}(C,X_i|\boldsymbol{S})\geq 0$. As for statement 2, given that, from \eqref{eq:maximization_entropy}, ${\rm OF}(X_i)=H(C)-H(C|X_i,\boldsymbol{S})=H(C)-H(C|\boldsymbol{S})+{\rm MI}(C,X_i|\boldsymbol{S})$, we again only need to use the fact that ${\rm MI}(C,X_i|\boldsymbol{S})\geq 0$.
\end{proof}

The upper bound for ${\rm OF}$, $H(C)$, corresponds to the uncertainty in $C$, and the upper bound on ${\rm OF'}$, $H(C|\boldsymbol{S})$, corresponds to the uncertainty in $C$ not explained by the already selected features, $\boldsymbol{S}$. This is coherent with the fact that ${\rm OF'}$ ignores the term ${\rm MI}(C,\boldsymbol{S})$. The lower bound for ${\rm OF}$ corresponds to the uncertainty in $C$ already explained by $\boldsymbol{S}$.
%FRANCISCO: more connection would maybe be nice. last sentence has a clear connection to previous but it is not that clear from the way it is written. 
%FRANCISCO: in general, it is intuitive but intuition does not become that clear from the explanation above: 'This is coherent with the fact that ${\rm OF'}$ ignores the term ${\rm MI}(C,\boldsymbol{S})$.' is not easy to understand without explaining anything further.

\subsection{Feature types and their properties}
\label{subsec:localdefs}

Features can be characterized according to their usefulness in explaining the class at a particular step of the feature selection process. There are two broad types of features, those that add information to the explanation of the class, i.e. for which ${\rm MI}(C,X_i|\boldsymbol{S})>0$, and those that do not, i.e. for which ${\rm MI}(C,X_i|\boldsymbol{S})=0$. However, a finer categorization is needed to fully determine how the feature selection process should behave. We define four types of features: irrelevant, redundant, relevant, and fully relevant.

\begin{definition}
Given a subset of already selected features, $\boldsymbol{S}$, at a certain step of a forward sequential method, where the class is $C$, and a candidate feature $X_i$, then:
\begin{itemize}
\item $X_i$ is irrelevant given $(C,\boldsymbol{S})$ if ${\rm MI}(C,X_i|\boldsymbol{S})=0\, \wedge \, H(X_i|\boldsymbol{S})>0$;
\item $X_i$ is redundant given $\boldsymbol{S}$ if $H(X_i|\boldsymbol{S})=0$;
\item $X_i$ is relevant given $(C,\boldsymbol{S})$ if ${\rm MI}(C,X_i|\boldsymbol{S})>0$;
\item $X_i$ is fully relevant given $(C,\boldsymbol{S})$ if $H(C|X_i,\boldsymbol{S})=0\, \wedge \, H(C| \boldsymbol{S})>0$.
\end{itemize}
If $\boldsymbol{S}=\emptyset$, then ${\rm MI}(C,X_i|\boldsymbol{S})$, $H(X_i|\boldsymbol{S})$, $H(C|\boldsymbol{S})$, and $H(C|X_i,\boldsymbol{S})$ should be replaced by ${\rm MI}(C,X_i)$, $H(X_i)$, $H(C)$, and $H(C|X_i)$, respectively.
\label{def:subset}
\end{definition}

Under this definition, irrelevant, redundant, and relevant features form a partition of the set of candidate features $\boldsymbol{F}$. Note that fully relevant features are also relevant since $H(C|X_i, \boldsymbol{S})=0$ and $H(C|\boldsymbol{S})>0$ imply that ${\rm MI}(C,X_i|\boldsymbol{S})=H(C|\boldsymbol{S})-H(C|X_i,\boldsymbol{S})>0$.

Our definition introduces two novelties regarding previous works: first, we separate non-relevant features in two categories, of irrelevant and redundant features; second, we introduce the important category of fully relevant features.

Our motivation for separating irrelevant from redundant features is that, while a redundant feature remains redundant at all subsequent steps of the feature selection process, the same does not hold necessarily for irrelevant features. The following example illustrates how an irrelevant feature can later become relevant.

\begin{example}
We consider a class $C=(X+Y)^2$ where $X$ and $Y$ are two independent candidate features that follow uniform distributions on $\{-1,1\}$. $C$ follows a uniform distribution on $\{0,4\}$ and, as a result, the entropies of $X$, $Y$ and $C$ are $\ln(2)$. It can be easily checked that both $X$ and $Y$ are independent of the class. In the feature selection process, both features are initially irrelevant since, due to their independence from $C$, ${\rm MI}(C,X)={\rm MI}(C,Y)=0$. Suppose that $X$ is selected first. Then, $Y$ becomes relevant since ${\rm MI}(C,Y|X)=\ln(2)>0$, and it is even fully relevant since $H(C|Y,X)=0$ and $H(C|X)=\ln(2)>0$.
	\label{ex:complementarity}
\end{example}

The following theorem shows that redundant features always remain redundant.

\begin{theorem}
If a feature is redundant given $\boldsymbol{S}$, then it is also redundant given $\boldsymbol{S}'$, for $\boldsymbol{S}\subset \boldsymbol{S}'$.
\label{th:redundancy}
\end{theorem}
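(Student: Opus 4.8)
The plan is to work directly from the definition of redundancy in Definition \ref{def:subset}: a feature $X_i$ is redundant given $\boldsymbol{S}$ precisely when $H(X_i|\boldsymbol{S})=0$. So I need to show that $H(X_i|\boldsymbol{S})=0$ implies $H(X_i|\boldsymbol{S}')=0$ whenever $\boldsymbol{S}\subset \boldsymbol{S}'$.

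First I would write $\boldsymbol{S}' = \boldsymbol{S}\cup \boldsymbol{T}$, where $\boldsymbol{T}=\boldsymbol{S}'\setminus \boldsymbol{S}$ is the set of newly added features. Then I would invoke the conditional version of the ``information can't hurt'' inequality, namely \eqref{eq:infodonthurtgen}, which gives $H(X_i|\boldsymbol{S},\boldsymbol{T}) \leq H(X_i|\boldsymbol{S})$ (identifying $\boldsymbol{X}=X_i$, $\boldsymbol{Z}=\boldsymbol{S}$, and $\boldsymbol{Y}=\boldsymbol{T}$ in that inequality). Since the left-hand side is exactly $H(X_i|\boldsymbol{S}')$ and, by hypothesis, $H(X_i|\boldsymbol{S})=0$, we get $H(X_i|\boldsymbol{S}')\leq 0$. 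Combined with the non-negativity of (conditional) entropy, which was noted right after Definition \ref{def:entropydiscrete}, this forces $H(X_i|\boldsymbol{S}')=0$, i.e. $X_i$ is redundant given $\boldsymbol{S}'$.

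There is essentially no obstacle here; the only point requiring a moment of care is making sure the chosen inequality applies in the degenerate situation where $H(X_i|\boldsymbol{S})=0$ already. But \eqref{eq:infodonthurtgen} was derived with no restriction on the vectors involved, so it holds in particular when $H(X_i|\boldsymbol{S})=0$, and the sandwich argument $0\leq H(X_i|\boldsymbol{S}')\leq H(X_i|\boldsymbol{S})=0$ closes the proof. One could equally phrase it via conditional mutual information: $H(X_i|\boldsymbol{S}) = H(X_i|\boldsymbol{S}') + {\rm MI}(X_i,\boldsymbol{T}|\boldsymbol{S})$ by \eqref{eq:miconddiscrete}, and since both summands on the right are non-negative by \eqref{eq:micondnoneg} and the non-negativity of entropy, each must vanish when the left side is zero. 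I would present the short inequality-chain version as the main argument.
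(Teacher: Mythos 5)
Your proof is correct and follows essentially the same route as the paper's: both reduce the claim to the inequality $H(X_i|\boldsymbol{S}')\leq H(X_i|\boldsymbol{S})$ from \eqref{eq:infodonthurtgen} together with non-negativity of conditional entropy. Your version merely makes explicit the decomposition $\boldsymbol{S}'=\boldsymbol{S}\cup\boldsymbol{T}$ and the sandwich argument that the paper leaves implicit.
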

\begin{proof}
Suppose that $X_i$ is a redundant feature given $\boldsymbol{S}$, so that $H(X_i|\boldsymbol{S})=0$, and $\boldsymbol{S}\subset \boldsymbol{S}'$. This implies that $H(X_i|\boldsymbol{S}')=0$ by \eqref{eq:infodonthurtgen}. As a result, $X_i$ is also redundant given $\boldsymbol{S}'$.
\end{proof}

This result has an important practical consequence: features that are found redundant at a certain step of the feature selection process can be immediately removed from the set of candidate features $\boldsymbol{F}$, alleviating in this way the computational effort associated with the feature selection process.
%FRANCISCO: word 'found' is really needed?!

Regarding relevant features, note that there are several levels of relevancy, as measured by ${\rm MI}(C,X_i|\boldsymbol{S})$. Fully relevant features form an important subgroup of relevant features since, together with already selected features, they completely explain the class, i.e. $H(C|\boldsymbol{S})$ becomes $0$ after selecting a fully relevant feature. Thus, all remaining unselected features are necessarily either irrelevant or redundant and the algorithm must stop. This also means that detecting a fully relevant feature can be used as a stopping criterion of forward feature selection methods. The condition $H(C| \boldsymbol{S})>0$ in the definition of fully relevant feature is required since an unselected feature can no longer be considered of this type after $H(C| \boldsymbol{S})=0$.
%FRANCISCO: is 'necessarily' needed?
%FRANCISCO: 'after' could be instead 'if already' or something similar.

A stronger condition that could be considered as a stopping criterion is $H(C|\boldsymbol{S})=H(C|\boldsymbol{S},\boldsymbol{F})$, meaning that the (complete) set of candidate features $\boldsymbol{F}$ has no further information to explain the class. As in the previous case, the candidate features will all be irrelevant or redundant. However, since forward feature selection algorithms only consider one candidate feature at each iteration, and the previous condition requires considering all candidate features simultaneously, such condition cannot be used as a stopping criterion.

Regarding the categorization of features introduced by other authors, \cite{Brown:2012:CLM:2188385.2188387} considered only one category of non-relevant features, named irrelevant, consisting of the candidate features $X_i$ such that ${\rm MI}(C,X_i|\boldsymbol{S})=0$. \cite{meyer2008information} and \cite{vergara2014review} considered both irrelevant and redundant features. The definition of irrelevant feature is the one of \cite{Brown:2012:CLM:2188385.2188387}; redundant features are defined as features such that $H(X_i|\boldsymbol{S})=0$. Since the latter condition implies that ${\rm MI}(C,X_i|\boldsymbol{S})=0$ by \eqref{eq:micondproperty} and \eqref{eq:infodonthurtgen}, it turns out that redundant features are only a special case of irrelevant ones, which is not in agreement with our definition.

According to the feature types introduced above, a good feature selection method must select, at a given step, a relevant feature, preferably a fully relevant one, keep irrelevant features for future consideration and discard redundant features. The following theorem relates these desirable properties with the values taken by the target objective functions.

\begin{theorem}
\hfill\break
\vspace{-3ex}
\begin{enumerate}
\item If $X_i$ is a fully relevant feature given $(C,\boldsymbol{S})$, then ${\rm OF}(X_i) = H(C)$ and ${\rm OF'}(X_i) = H(C|\boldsymbol{S})$, i.e., the maximum possible values taken by the target objective functions are reached; recall Theorem \ref{th:boundsgeneral}. 
\item If $X_i$ is an irrelevant feature given $(C,\boldsymbol{S})$, then ${\rm OF}(X_i)=H(C)-H(C|\boldsymbol{S})$ and ${\rm OF'}(X_i)=0$, i.e., the minimum possible values of the target objective functions are reached; recall Theorem \ref{th:boundsgeneral}. 
\item If $X_i$ is a redundant feature given $\boldsymbol{S}$, then ${\rm OF}(X_i)=H(C)-H(C|\boldsymbol{S})$ and ${\rm OF'}(X_i)=0$, i.e., the minimum possible values of the target objective functions are reached; recall Theorem \ref{th:boundsgeneral}.
\item If $X_i$ is a relevant feature, but not fully relevant, given $(C,\boldsymbol{S})$, then $H(C)-H(C|\boldsymbol{S})<{\rm OF}(X_i)<H(C)$ and $0<{\rm OF'}(X_i)<H(C|\boldsymbol{S})$.
\end{enumerate}
\label{th:propsstandard}
\end{theorem}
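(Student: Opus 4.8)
The plan is to reduce everything to the two entropy representations established just before the theorem, namely ${\rm OF}(X_i)=H(C)-H(C|X_i,\boldsymbol{S})$ from \eqref{eq:maximization_entropy} and ${\rm OF'}(X_i)=H(C|\boldsymbol{S})-H(C|X_i,\boldsymbol{S})$ from \eqref{eq:OF_global_entropy}, together with the identity ${\rm OF'}(X_i)={\rm MI}(C,X_i|\boldsymbol{S})$ (which is immediate from \eqref{eq:OF_global_entropy} and \eqref{eq:miconddiscrete}). Once this is set up, each of the four statements becomes a matter of plugging in the defining condition of the relevant feature type from Definition \ref{def:subset} and reading off the value of $H(C|X_i,\boldsymbol{S})$ or ${\rm MI}(C,X_i|\boldsymbol{S})$.

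First I would handle statement 1: if $X_i$ is fully relevant, then by definition $H(C|X_i,\boldsymbol{S})=0$, so substituting into the two representations gives ${\rm OF}(X_i)=H(C)$ and ${\rm OF'}(X_i)=H(C|\boldsymbol{S})$ directly; these are exactly the upper bounds from Theorem \ref{th:boundsgeneral}. Next, for statement 2, if $X_i$ is irrelevant then ${\rm MI}(C,X_i|\boldsymbol{S})=0$, which forces $H(C|X_i,\boldsymbol{S})=H(C|\boldsymbol{S})$ via \eqref{eq:miconddiscrete}; substituting this gives ${\rm OF}(X_i)=H(C)-H(C|\boldsymbol{S})$ and ${\rm OF'}(X_i)=0$, the lower bounds from Theorem \ref{th:boundsgeneral}. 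For statement 3, I would first note that a redundant feature ($H(X_i|\boldsymbol{S})=0$) is automatically irrelevant in the sense that ${\rm MI}(C,X_i|\boldsymbol{S})=0$: indeed $0\le {\rm MI}(C,X_i|\boldsymbol{S})\le H(X_i|\boldsymbol{S})=0$, using \eqref{eq:micondnoneg} together with the conditional analogue of the upper bound \eqref{eq:miupperbound} (equivalently, ${\rm MI}(C,X_i|\boldsymbol{S})=H(X_i|\boldsymbol{S})-H(X_i|C,\boldsymbol{S})\le H(X_i|\boldsymbol{S})$ by \eqref{eq:infodonthurtgen}). Hence statement 3 reduces to statement 2.

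Finally, for statement 4, if $X_i$ is relevant but not fully relevant, then ${\rm MI}(C,X_i|\boldsymbol{S})>0$ by definition, which gives the strict lower inequalities ${\rm OF}(X_i)>H(C)-H(C|\boldsymbol{S})$ and ${\rm OF'}(X_i)>0$. For the strict upper inequalities, I need $H(C|X_i,\boldsymbol{S})>0$: this is where the "not fully relevant" hypothesis enters. Observe that $H(C|\boldsymbol{S})>0$ automatically holds here — otherwise ${\rm MI}(C,X_i|\boldsymbol{S})\le H(C|\boldsymbol{S})=0$ by \eqref{eq:infodonthurtgen} and \eqref{eq:miconddiscrete}, contradicting relevance — so the failure of full relevance must come from $H(C|X_i,\boldsymbol{S})\ne 0$, i.e. $H(C|X_i,\boldsymbol{S})>0$ by non-negativity of entropy. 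Substituting this strictly positive value into the two representations yields ${\rm OF}(X_i)<H(C)$ and ${\rm OF'}(X_i)<H(C|\boldsymbol{S})$, completing the proof. The one mildly delicate point, and the place I would be most careful, is exactly this last observation: correctly disentangling the two clauses in the definition of fully relevant so as to conclude $H(C|X_i,\boldsymbol{S})>0$ from relevance plus non-full-relevance, rather than accidentally needing an unstated hypothesis.
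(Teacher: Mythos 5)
Your proof is correct and follows essentially the same route as the paper's: reduce everything to the entropy representations \eqref{eq:maximization_entropy} and \eqref{eq:OF_global_entropy} and substitute the defining condition of each feature type. You are in fact slightly more careful than the paper in two spots---explicitly justifying that redundancy forces ${\rm MI}(C,X_i|\boldsymbol{S})=0$ in statement 3, and disentangling the two clauses of the definition of full relevance to conclude $H(C|X_i,\boldsymbol{S})>0$ in statement 4---both of which the paper asserts with less detail.
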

\begin{proof}
The two equalities in statement 1 are an immediate consequence of equations \eqref{eq:maximization_entropy} and \eqref{eq:OF_global_entropy}, using the fact that $H(C|X_i,\boldsymbol{S})=0$ if $X_i$ is fully relevant given $(C,\boldsymbol{S})$.

Suppose that $X_i$ is an irrelevant feature given $(C,\boldsymbol{S})$, so that ${\rm MI}(C,X_i|\boldsymbol{S})=0$. Then, the relation ${\rm OF'}(X_i)=0$ results directly from ${\rm OF'}(X_i)={\rm MI}(C,X_i|\boldsymbol{S})$. Conversely, the relation ${\rm OF}(X_i)=H(C)-H(C|\boldsymbol{S})$ follows from the fact that ${\rm OF}(X_i)=H(C)-H(C|\boldsymbol{S})+{\rm MI}(C,X_i|\boldsymbol{S})$. As a result, statement 2 is verified.

The equalities in statement 3 follow likewise since ${\rm MI}(C,X_i|\boldsymbol{S})=0$ if $X_i$ is a redundant feature given $\boldsymbol{S}$. 

As for statement 4, we need to prove that the objective functions neither take the minimum nor the maximum value for a relevant feature that is not fully relevant. We start by checking that the minimum values are not reached. The proof is similar to that of statement 2. Since ${\rm OF'}(X_i)={\rm MI}(C,X_i|\boldsymbol{S})$, and since the assumption is that ${\rm MI}(C,X_i|\boldsymbol{S})>0$, then ${\rm OF'}(X_i)$ is surely larger than $0$. Concerning ${\rm OF}(X_i)$, since ${\rm OF}(X_i)=H(C)-H(C|\boldsymbol{S})+{\rm MI}(C,X_i|\boldsymbol{S})$ and ${\rm MI}(C,X_i|\boldsymbol{S})>0$, ${\rm OF}(X_i)$ must be larger than $H(C)-H(C|\boldsymbol{S})$. Concerning the upper bounds, the proof is now similar to that of statement 1. If the feature $X_i$ is not fully relevant given $(C,\boldsymbol{S})$, meaning that $H(C|\boldsymbol{S},X_i)>0$, the desired conclusions immediately follow from \eqref{eq:maximization_entropy} and \eqref{eq:OF_global_entropy}.
\end{proof}

Thus, fully relevant (irrelevant and redundant) features achieve the maximum (minimum) of the objective functions, and relevant features that are not fully relevant achieve a value between the maximum and the minimum values of the objective functions. These properties assure that the ordering of features at a given step of the feature selection process is always correct. Note that irrelevant and redundant features can be discriminated by evaluating $H(X_i|\boldsymbol{S})$.
%FRANCISCO: do not like the word 'achieve' here at all. should be used for reaching something good.

\subsection{Complementarity}
\label{subsec:complementarity}

The concept of complementarity is associated with the TMI term of the target objective function, given by ${\rm TMI}(C,X_i,\boldsymbol{S})={\rm MI}(X_i,\boldsymbol{S}) - {\rm MI}(X_i,\boldsymbol{S}|C)$; recall \eqref{eq:mi3discrete2}. Following \cite{meyer2006use}, we say that $X_i$ and $\boldsymbol{S}$ are \textit{complementary} with respect to $C$ if $-{\rm TMI}(C,X_i,\boldsymbol{S})>0$. Interestingly, \cite{cheng2011conditional} refer to complementarity as the existence of \textit{positive interaction}, or \textit{synergy}, between $X_i$ and $\boldsymbol{S}$ with respect to $C$.

Given that ${\rm MI}(X_i,\boldsymbol{S}) \geq 0$, a negative TMI is necessarily associated with a positive value of ${\rm MI}(X_i,\boldsymbol{S}|C)$. This term expresses the contribution of a candidate feature to the explanation of the class, when taken together with already selected features. Following \cite{LinT06} and \cite{vinh2015can}, we call this term \textit{class-relevant redundancy}. \cite{Brown:2012:CLM:2188385.2188387} calls this term \textit{conditional redundancy}. Class-relevant redundancy is sometimes coined as the \textit{good} redundancy since it expresses an association that contributes to the explanation of the class. \cite{guyon2008feature} highlights that ``correlation does not imply redundancy'' to stress that association between $X_i$ and $\boldsymbol{S}$ is not necessarily bad.
%FRANCISCO: 'when taken together with already selected features' it is not together, but knowing the information from them. otherwise it would be the MI of the union and the class.

The remaining term of the decomposition of TMI, ${\rm MI}(X_i,\boldsymbol{S})$, measures the association between the candidate feature and the already selected features. Following \cite{LinT06}, we call this term \textit{inter-feature redundancy}. It is sometimes coined as the \textit{bad} redundancy since it expresses the information of the candidate feature already contained in the set of already selected features.

Note that TMI takes negative values whenever the class-relevant redundancy exceeds the inter-feature redundancy, i.e. ${\rm MI}(X_i,\boldsymbol{S}|C)>{\rm MI}(X_i,\boldsymbol{S})$. A candidate feature $X_i$ for which ${\rm TMI}(C,X_i,\boldsymbol{S})$ is negative is a relevant feature, i.e. ${\rm MI}(C,X_i|\boldsymbol{S})\geq 0$, since ${\rm MI}(C,X_i|\boldsymbol{S})={\rm MI}(C,X_i)-{\rm TMI}(C,X_i,\boldsymbol{S})$ by \eqref{eq:mi3discrete2}, and ${\rm MI}(C,X_i) \geq 0$. Thus, a candidate feature may be relevant even if it is strongly associated with the already selected features. Moreover, class-relevant redundancy may turn a feature that was initially irrelevant into a relevant feature, as illustrated in Example \ref{ex:complementarity}. In that example, the candidate feature $Y$ was independent of the already selected one, $X$, i.e. ${\rm MI}(X_i,\boldsymbol{S})={\rm MI}(Y,X)=0$, but $Y$ taken together with $X$ had a positive contribution to the explanation of the class (indeed it fully explained the class), since the class-relevant redundancy is positive, i.e. ${\rm MI}(X_i,\boldsymbol{S}|C)={\rm MI}(Y,X|C)=\ln(2)>0$.

Authors in \cite{meyer2006use} provided an interesting interpretation of complementarity, noting that
\begin{equation*}
-{\rm TMI}(C,X_i,\boldsymbol{S})={\rm MI}(\{X_i\}\cup \boldsymbol{S},C)-{\rm MI}(X_i,C)-{\rm MI}(\boldsymbol{S},C). 
\end{equation*}
Thus, if $-{\rm TMI}(C,X_i,\boldsymbol{S})>0$, then ${\rm MI}(\{X_i\}\cup \boldsymbol{S},C)>{\rm MI}(X_i,C)+{\rm MI}(\boldsymbol{S},C)$. Therefore, $-{\rm TMI}(C,X_i,\boldsymbol{S})>0$ measures the gain resulting from considering $X_i$ and $\boldsymbol{S}$ together, instead of considering them separately, when measuring the association with the class $C$.
%FRANCISCO: 'thus' does not seem to be the best word here. it works but it does not sound perfect.

\section{Representative feature selection methods}
\label{sec:methods}

%RV Reviewed
The target objective functions discussed in Section \ref{sec:problem} cannot be used in practice since they require the joint distribution of $(C,X_i,\boldsymbol{S})$, which is not known and has to be estimated. This becomes more and more difficult as the cardinality of $\boldsymbol{S}$, denoted by $|\boldsymbol{S}|$ from here on, increases.
%FRANCISCO: I think that this should be written in terms of the problems in estimating distributions that include $\boldsymbol{S}$ in general, because the way it is written we might have no problem with the methods ignoring class-relevant redundancy, but it should be clear that estimating $(X_i,\boldsymbol{S})$ is equally hard. it is true that it is this distribution that is needed in the target objective functions, but then also that of $(C,\boldsymbol{S})$ for the target OF; only if we remove this target OF would we be able to argue that this is the only distribution that is needed, and I still prefer to state in terms of $\boldsymbol{S}$.

%RV Reviewed
The common solution is to use approximations, leading to different feature selection methods. For the analysis in this paper, we selected a set of methods representative of the main types of approximations to the target objective functions. In what follows, we first describe the representative methods, and discuss drawbacks resulting from their underlying approximations; we then discuss how these methods cope with the desirable properties given by Theorem \ref{th:boundsgeneral} and Theorem \ref{th:propsstandard}; finally, we briefly refer to other methods proposed in the literature and how they relate to the representative ones. In this section, features are considered to be discrete for simplicity.

\subsection{Methods and their drawbacks}
\label{subsec:representative}

%RV Reviewed
The methods selected to represent the main types of approximations to the target objective functions are: MIM \citep{lewis1992feature}, MIFS \citep{Battiti94usingmutual}, mRMR \citep{Peng05featureselection}, maxMIFS \citep{claudiapaper}, CIFE \citep{LinT06}, JMI \citep{YangM99}, CMIM \citep{MR2248026}, and JMIM \citep{bennasar2015feature}. These methods are listed in Table \ref{tab:varselmethods}, together with their objective functions. Note that, for all methods, including mRMR and JMI, the objective function in the first step of the algorithm is simply ${\rm MI}(C,X_i)$. This implies, in particular, that the first feature to be selected is the same in all methods.
%FRANCISCO: 'note that' should be associated with something that is already clear, but this is not since this is not the case the those considering averages, which are in fact not well-defined when $\boldsymbol{S}=\emptyset$. this is still the logical objective function for those cases too.

\begin{table}[t!]
\centering
\caption{Objective functions of the representative feature selection methods, evaluated at candidate feature $X_i$.}
\small
\begin{tabular}{cl}
\toprule
Method & Objective function evaluated at $X_i$\\ 
\midrule
MIM & ${\rm MI}(C,X_i)$ \\ [0.2cm]
MIFS & $ {\rm MI}(C,X_i)-\beta \sum_{X_s \in \boldsymbol{S}} {\rm MI}(X_i,X_s)   $ \\ [0.2cm]
mRMR  & $ {\rm MI}(C,X_i)-\frac{1}{|\boldsymbol{S}|} \sum_{X_s \in \boldsymbol{S}} {\rm MI}(X_i,X_s) $\\ [0.2cm]
maxMIFS & $ {\rm MI}(C,X_i)-\max_{X_s \in \boldsymbol{S}}{\rm MI}(X_i,X_s)  $\\ [0.2cm]
CIFE  & $ {\rm MI}(C,X_i)-\sum_{X_s \in \boldsymbol{S}} \left({\rm MI}(X_i,X_s) - {\rm MI}(X_i,X_s|C) \right)$ \\ [0.2cm]
JMI  & $  {\rm MI}(C,X_i)-\frac{1}{|\boldsymbol{S}|} \sum_{X_s \in \boldsymbol{S}} \left({\rm MI}(X_i,X_s)- {\rm MI}(X_i,X_s|C)\right) $\\ [0.2cm]
CMIM  & $ {\rm MI}(C,X_i)-\max_{X_s \in \boldsymbol{S}}\left\{{\rm MI}(X_i,X_s)- {\rm MI}(X_i,X_s|C)\right\}  $\\[0.2cm] 
JMIM  & $ {\rm MI}(C,X_i)-\max_{X_s \in \boldsymbol{S}}\left\{{\rm MI}(X_i,X_s)- {\rm MI}(X_i,X_s|C)-{\rm MI}(C,X_s)\right\}$\\ 
\bottomrule
\end{tabular}
\label{tab:varselmethods}
\end{table}

%RV Reviewed
The methods differ in the way their objective functions approximate the target objective functions. All methods except JMIM have objective functions that can be seen as approximations of the target ${\rm OF'}$; the objective function of JMIM can be seen as an approximation of the target ${\rm OF}$. The approximations taken by the methods are essentially of three types: approximations that ignore both types of redundancy (inter-feature and class-relevant), that ignore class-relevant redundancy but consider an approximation for the inter-feature redundancy, and that consider an approximation for both the inter-feature and class-relevant redundancies. These approximations introduce drawbacks in the feature selection process with different degrees of severity, discussed next. The various drawbacks are summarized in Table \ref{tab:drawback}.
%FRANCISCO: word 'approximation' used in two contexts, which may be confusing, applied to the general approximation and to the actual terms inside that are approximated. use 'methods' instead? 
%FRANCISCO: also, I do not like the '..., that'.
%FRANCISCO: we are not discussing their severity here, at least the plan is to leave this for the next section. also, the general types of approximations are also a source of drawbacks, so that in this sense we should keep the idea of approximation in the two cases in fact. Prof. Pacheco wants this only in next section. in fact, only for complementarity ignored we discuss the severity in the subsections that follow, concerning the lost properties associated with the drawbacks.
%FRANCISCO: again 'made' instead of 'taken'? I guess in our context 'taken' works, and may actually be more emphatic, but 'made' is a more common word for this context.

%RV Reviewed
The simplest method is MIM. This method discards the TMI term of the target objective function ${\rm OF'}$, i.e.
\begin{equation}
{\rm OF'}(X_i) \approx {\rm MI}(C,X_i).
\label{eq:OF_MIM}
\end{equation}
Thus, MIM ranks features accounting only for relevance effects, and completely ignores redundancy. We call the drawback introduced by this approximation \textit{redundancy ignored}.

%RV Reviewed
The methods MIFS, mRMR, and maxMIFS ignore complementarity effects, by approximating the TMI term of ${\rm OF'}$ through the inter-feature redundancy term only, i.e. by discarding the class-relevant redundancy. Thus,
\begin{equation}
{\rm OF'}(X_i)\approx {\rm MI}(C,X_i) - {\rm MI}(X_i,\boldsymbol{S}).
\label{eq:OF_global_bi}
\end{equation}
%FRANCISCO: replace 'thus' by ';' simply.
In this case, the TMI can no longer take negative values, since it reduces to the term ${\rm MI}(X_i,\boldsymbol{S})$. As discussed in Section \ref{subsec:complementarity}, the complementarity expresses the contribution of a candidate feature to the explanation of the class, when taken together with already selected features, and ignoring this contribution may lead to gross errors in the feature selection process. This drawback will be called \textit{complementarity ignored}, and it was noted by \cite{Brown:2012:CLM:2188385.2188387}. These methods include an additional approximation, to calculate the TMI term ${\rm MI}(X_i,\boldsymbol{S})$, which is also used by the methods that do not ignore complementarity, and will be discussed next.
%FRANCISCO: it should be more clear that the fact that a negative TMI is not possible to obtain guarantees that complementarity is not noticed. otherwise, it would simply make more sense to call the drawback in terms of class-relevant redundancy.
%FRANCISCO: last sentence has a clear idea but saying 'an additional approximation' does not make it clear that there are again different types of approximations.
%FRANCISCO: we are now discussing the problems in the feature selection process caused by this drawback, but the idea was to delay this to next section, as Prof. Pacheco suggested, or at least to later in the section, as already noted, since this is in fact the only drawback whose severity is commented here immediately in this subsection.
%FRANCISCO: 'when taken together with already selected features' was replaced by 'when considering that the information contained in the already selected features is known' again.
%FRANCISCO: again, 'when taken together with already selected features' should probably be replaced by 'when considering that the information contained in the already selected features is known'.

The methods that do not ignore complementarity, i.e. CIFE, JMI, CMIM, and JMIM, approximate the terms of the objective functions that depend on the set $\boldsymbol{S}$, i.e. ${\rm MI}(C,\boldsymbol{S})$, ${\rm MI}(X_i,\boldsymbol{S})$, and ${\rm MI}(X_i,\boldsymbol{S}|C)$, which are difficult to estimate, through a function of the already selected features $X_s$, $X_s\in \boldsymbol{S}$, taken individually. Considering only individual associations neglects higher order associations, e.g. between a candidate and two or more already selected features. 
Specifically, for CIFE, JMI, and CMIM,
\begin{equation*}
{\rm OF'}(X_i)\approx {\rm MI}(C,X_i)- \Gamma({\rm TMI}(C,X_i,\boldsymbol{S}))
\end{equation*}
and for JMIM,
\begin{equation*}
{\rm OF}(X_i)\approx {\rm MI}(C,X_i)- \Gamma({\rm TMI}(C,X_i,\boldsymbol{S})-{\rm MI}(C,\boldsymbol{S})),
\end{equation*}
where $\Gamma$ denotes an approximating function. This type of approximation is also used by the methods that ignore complementarity. Hereafter, we denote an already selected feature $X_s\in \boldsymbol{S}$ simply by $X_s$. Three types of approximating functions have been used: a sum of $X_s$ terms scaled by a constant (MIFS and CIFE), an average of $X_s$ terms (mRMR and JMI), and a maximization over $X_s$ terms (maxMIFS, CMIM, and JMIM).
%FRANCISCO: 'function of the already selected features', but based on the associated terms.
%FRANCISCO: 'higher order' applies to second order or above. this should be connected with the loss of theorems in the next subsection. it is not just an e.g. what we have there, but exactly what we are referring to.

%RV Reviewed
MIFS and CIFE approximate the TMI by a sum of $X_s$ terms scaled by a constant. In particular, for CIFE,
\begin{align*}
{\rm TMI}(C,X_i,\boldsymbol{S}) \approx & \sum_{X_s\in\boldsymbol{S}} {\rm TMI}(C,X_i,X_s) = \sum_{X_s \in \boldsymbol{S}} \left[{\rm MI}(X_i,X_s) - {\rm MI}(X_i,X_s|C)\right]\\
= &\sum_{X_s \in \boldsymbol{S}} {\rm MI}(X_i,X_s) - \sum_{X_s \in \boldsymbol{S}} {\rm MI}(X_i,X_s|C).
\end{align*}
The MIFS approximation is similar, but without the class-relevant redundancy terms, and with the sum of inter-feature redundancy terms scaled by a constant $\beta$. In both cases, a problem arises because the TMI is approximated by a sum of terms which individually have the same scale as the term they try to approximate. This results in an approximation of the TMI that can have a much larger scale than the original term. Since these terms are both redundancy terms, we will refer to this as the \textit{redundancy overscaled} drawback. It becomes more and more severe as $\boldsymbol{S}$ grows. This drawback was also noted by \cite{Brown:2012:CLM:2188385.2188387}, referring to it as the problem of not balancing the magnitudes of the relevancy and the redundancy.
%FRANCISCO: 'can have a much larger scale' is not really what we want. the scale IS larger. and I also do not like the idea of saying that the individual terms have the exact same scale as this is not completely clear.

%RV Reviewed
Two other approximating functions were introduced to overcome the redundancy overscaled drawback. The first function, used by mRMR and JMI, replaces the TMI by an average of $X_s$ terms. In particular, for JMI,
\begin{align*}
{\rm TMI}(C,X_i,\boldsymbol{S})\approx & \frac{1}{|\boldsymbol{S}|}\sum_{X_s\in\boldsymbol{S}} {\rm TMI}(C,X_i,X_s) = \frac{1}{|\boldsymbol{S}|} \sum_{X_s \in \boldsymbol{S}} \left[{\rm MI}(X_i,X_s) - {\rm MI}(X_i,X_s|C)\right]\\
= &\frac{1}{|\boldsymbol{S}|} \sum_{X_s \in \boldsymbol{S}} {\rm MI}(X_i,X_s) - \frac{1}{|\boldsymbol{S}|} \sum_{X_s \in \boldsymbol{S}} {\rm MI}(X_i,X_s|C).
\end{align*}
The mRMR approximation is similar, but without the class-relevant redundancy terms. This approximation solves the overscaling problem but introduces another drawback. In fact, since ${\rm MI}(X_i,\boldsymbol{S}) \geq {\rm MI}(X_i,X_s)$, implying that ${\rm MI}(X_i,\boldsymbol{S}) \geq \frac{1}{|\boldsymbol{S}|}\sum_{X_s\in\boldsymbol{S}} {\rm MI}(X_i,X_s)$, the approximation undervalues the inter-feature redundancy; at the same time, given that ${\rm MI}(X_i,\boldsymbol{S}|C) \geq {\rm MI}(X_i,X_s|C)$, implying ${\rm MI}(X_i,\boldsymbol{S}|C)\geq \frac{1}{|\boldsymbol{S}|}\sum_{X_s\in\boldsymbol{S}} {\rm MI}(X_i,X_s|C)$, it also undervalues the class-relevant redundancy. We call this drawback \textit{redundancy undervalued}.

%RV Reviewed
The second approximating function introduced to overcome the redundancy overscaled drawback is a maximization over $X_s$ terms. This approximation is used differently in maxMIFS and CMIM, on one side, and JMIM, on the other. Methods maxMIFS and CMIM just replace the TMI by a maximization over $X_s$ terms. In particular, for CMIM,
\begin{equation*}
{\rm TMI}(C,X_i,\boldsymbol{S})\approx \max_{X_s\in\boldsymbol{S}} {\rm TMI}(C,X_i,X_s) = \max_{X_s \in \boldsymbol{S}} \left({\rm MI}(X_i,X_s)- {\rm MI}(X_i,X_s|C)\right).
\end{equation*}
The maxMIFS approximation is similar, but without the class-relevant redundancy terms. 
%FRANCISCO: 'This approximation is used differently in maxMIFS and CMIM, on one side, and JMIM, on the other.' too strong. they are really similar, but the difference is that JMIM has an extra term. we are applying the minimum to all terms in both cases.

%RV Reviewed
The discussion regarding the quality of the approximation is more complex in this case. We start by maxMIFS. In this case, since ${\rm MI}(X_i,\boldsymbol{S}) \geq {\rm MI}(X_i,X_s)$,
%FRANCISCO: it is only more complex for the methods that do not ignore the class-relevant redundancy term.
\begin{equation}
{\rm MI}(X_i,\boldsymbol{S}) \geq \max_{X_s \in \boldsymbol{S}}{\rm MI}(X_i,X_s)\geq \frac{1}{|\boldsymbol{S}|} \sum_{X_s \in \boldsymbol{S}} {\rm MI}(X_i,X_s).
\label{eq:maxr}
\end{equation}
Thus, this approximation still undervalues inter-feature redundancy, but is clearly better than the one considering an average. Indeed, we may say that maximizing over $X_s$ is the best possible approximation, under the restriction that only one $X_s$ is considered.
%FRANCISCO: last sentence has two ideas because it is better than the average and the best approximation by one single term, but average is not included in that comparison. we should say 'moreover' or 'furthermore' instead of 'indeed'. 'we may say' is possibly also not the best choice.

%RV Reviewed
Regarding CMIM, we first note that a relationship similar to \eqref{eq:maxr} also holds for the class-relevant redundancy, i.e.
\begin{equation*}
{\rm MI}(X_i,\boldsymbol{S}|C) \geq \max_{X_s \in \boldsymbol{S}}{\rm MI}(X_i,X_s|C)\geq \frac{1}{|\boldsymbol{S}|} \sum_{X_s \in \boldsymbol{S}} {\rm MI}(X_i,X_s|C),
%\label{eq:maxclassrr}
\end{equation*}
since ${\rm MI}(X_i,\boldsymbol{S}|C) \geq {\rm MI}(X_i,X_s|C)$. However, while it is true for the two individual terms that compose the TMI that ${\rm MI}(X_i,\boldsymbol{S}) \geq \max_{X_s \in \boldsymbol{S}}{\rm MI}(X_i,X_s)$ and ${\rm MI}(X_i,\boldsymbol{S}|C) \geq \max_{X_s \in \boldsymbol{S}}{\rm MI}(X_i,X_s|C)$, it is no longer true that ${\rm TMI}(C,X_i,\boldsymbol{S}) \geq \max_{X_s \in \boldsymbol{S}}{\rm TMI}(C,X_i,X_s)$. Thus, the maximization over $X_s$ terms of CMIM is not as effective as that of maxMIFS. Moreover, applying a maximization jointly to the difference between the inter-feature and the class-relevant redundancy terms clearly favors $X_s$ features that together with $X_i$ have small class-relevant redundancy, i.e. a small value of ${\rm MI}(X_i,X_s|C)$. This goes against the initial purpose of methods that, like CMIM, introduced complementarity effects in forward feature selection methods. We call this drawback \textit{complementarity penalized}. We now give an example that illustrates how this drawback may impact the feature selection process.

\begin{example}
	Assume that we have the same features as in Example \ref{ex:complementarity}, plus two extra features $W$ and $Z$, independent of any vector containing other random variables of the set $\{W,Z,X,Y,C\}$. Moreover, consider the objective function of CMIM.
	
	In the first step, the objective function value is $0$ for all features. We assume that $W$ is selected first. In this case, at the second step, the objective functions value is again $0$ for all features. We assume that $X$ is selected. At the third step, $Y$ should be selected since it is fully relevant and $Z$ is irrelevant. At this step, the objective function value at $Z$ is 0. The objective function at $Y$ requires a closer attention. Since $Y$ is independent of the class, ${\rm MI}(Y,C)=0$, the target objective function evaluated at $Y$ is
	\begin{align*}
	-{\rm TMI}(C,Y,\{W,X\})&=-\left[{\rm MI}(Y,\{W,X\})-{\rm MI}(Y,\{W,X\}|C)\right]\\
	&=-\left[0-(H(Y|C)-H(Y|W,X,C))\right]=-(0-\ln(2))=\ln(2),
	\end{align*}
	and the objective function of CMIM evaluated at $Y$ is
	\begin{align*}
	&	-\max\{{\rm TMI}(C,Y,W),{\rm TMI}(C,Y,X)\}\\
	&\hspace*{1ex}=-\max\{{\rm MI}(Y,W)-{\rm MI}(Y,W|C),{\rm MI}(Y,X)-{\rm MI}(Y,X|C)\}\\
	&\hspace*{1ex}=-\max\{0-0,0-(H(Y|C)-H(Y|X,C))\}=-\max\{0-0,0-\ln(2)\}\\
	&\hspace*{1ex}=-\max\{0,-\ln(2)\}=0.
	\end{align*}
	%FRANCISCO: 'requires a closer attention' does not sound that good to me, even though 'closer attention' does. just the combination sounds strange.
This shows that, according to CMIM, both $Y$ and $Z$ can be selected at this step, whereas $Y$ should be selected first, as confirmed by the target objective function values. The problem occurs because the class-relevant redundancy ${\rm MI}(Y,X|C)$ brings a negative contribution to the term of the maximization that involves $X$, leading to ${\rm TMI}(C,Y,X)=-ln(2)$, thus forcing the maximum to be associated with the competing term, since ${\rm TMI}(C,Y,W)=0$. As noted before, the maximum applied in this way penalizes the complementarity effects between $Y$ and $X$ that, as a result, are not reflected in the objective function of candidate $Y$; contrarily, the term that corresponds to an already selected feature that has no association with $Y$, i.e. the term involving $W$, is the one that is reflected in the objective function of candidate $Y$. 

%'Note that applying the maximum to each TMI term individually would solve this problem, since
%	\begin{align*}
%	&-\left(\max\{{\rm MI}(Y,W),{\rm MI}(Y,X)\}-\max\{{\rm MI}(Y,W|C),{\rm MI}(Y,X|C)\}\right)\\
%	&\hspace*{1ex}=-\left(\max\{0,0\}-\max\{0,\ln(2)\}\right)=-(0-\ln(2))=\ln(2).
%	\end{align*}
%	This approximation achieves an objective function value equal to that of the target objective function.
%FRANCISCO: keep for paper about the new method.
%FRANCISCO: this should be associated with the name of the method, the last part, or if we add the method only after the example, it could work as a motivation, but in this case I think the method should in fact be introduced before and clearly argued to correct the problem here. we should explicitly say that it does not have the mentioned drawback.

\label{ex:problemCMIM}
\end{example}
%FRANCISCO: 'to the term of the maximization that involves $X$' it does not sound sufficiently clear to me that we are referring exactly to ${\rm TMI}(C,Y,X)$. same for the 'competing term'.
%FRANCISCO: 'reflected' change to 'considered'?

Note that since ${\rm MI}(X_i,\boldsymbol{S}) \geq {\rm MI}(X_i,X_s)$ and ${\rm MI}(X_i,\boldsymbol{S}|C) \geq {\rm MI}(X_i,X_s|C)$, this approximation also undervalues both the inter-feature and the class-relevant redundancies. However, since the maximum is applied to the difference of the terms, it can no longer be concluded, as in the case of maxMIFS, that the approximation using a maximum is better than the one using an average (the case of JMI). In this case, the inter-feature redundancy term still pushes towards selecting the $X_s$ that leads to the maximum value of ${\rm MI}(X_i,X_s)$, since it contributes positively to the value inside the maximum operator; contrarily, the class-relevant redundancy term pushes towards selecting $X_s$ features that depart from the maximum value of ${\rm MI}(X_i,X_s|C)$, since it contributes negatively.
%FRANCISCO: 'in this case' is OK but could be replaced by something more useful and concrete.

%RV Reviewed
JMIM uses the approximation based on the maximization operator, like maxMIFS and CMIM. However, the maximization embraces an additional term. Specifically,
\begin{equation*}
{\rm TMI}(C,X_i,\boldsymbol{S})-{\rm MI}(C,\boldsymbol{S})\approx \max_{X_s\in \boldsymbol{S}}\{{\rm TMI}(C,X_i,X_s)-{\rm MI}(C,X_s)\}.
\end{equation*}
The additional term of JMIM, i.e. ${\rm MI}(C,X_s)$, tries to approximate a term of the target objective function that does not depend on $X_i$, i.e. ${\rm MI}(C,\boldsymbol{S})$, and brings additional problems to the selection process. We call this drawback \textit{unimportant term approximated}. JMIM inherits the drawbacks of CMIM, complementarity penalized and redundancy undervalued. Moreover, the extra term adds a negative contribution to each $X_s$ term, favoring $X_s$ features with small association with $C$, which goes against the whole purpose of the feature selection process.

The representations of the objective functions of CMIM and JMIM in the references where they were proposed \citep{MR2248026,bennasar2015feature} differ from the ones in Table \ref{tab:varselmethods}. More concretely, their objective functions were originally formalized in terms of minimum operators:
\begin{align}
	\label{eq:OF_CMIM}
	{\rm OF}_{{\rm CMIM}}(X_i)&= \min_{X_s\in \boldsymbol{S}} {\rm MI}(C,X_i|X_s);\\
	\label{eq:OF_JMIM}
	{\rm OF}_{{\rm JMIM}}(X_i)&= \min_{X_s \in \boldsymbol{S}} \left\{{\rm MI}(C,X_s)+ {\rm MI}(C,X_i|X_s)\right\}.
\end{align}
The representations in Table \ref{tab:varselmethods} result from the above ones using simple algebraic manipulation; recall \eqref{eq:mi3discrete2}. They allow a nicer and unified interpretation of the objective functions. For instance, they allow noticing much more clearly the similarities between maxMIFS and CMIM, as well as between CMIM and JMIM.
%FRANCISCO: it would make sense to relate with the discussion above when this way of presenting really shows more clearly how the term associated with ${\rm MI}(C,\boldsymbol{S})$ is the difference, but anyway maybe it is already clear enough. but maybe it would be easier to understand the discussion about JMIM after this expression. this is in fact the original expression and it was in the context of it that the authors discussed the difference to CMIM.
%FRANCISCO: talking about the similarities to CMIM should be replaced with 'relation' with CMIM probably, as we are exactly arguing about how they differ.

\begin{table}[t!]
\centering
\caption{Drawbacks of the representative feature selection methods.}
%\footnotesize
\scriptsize
\begin{tabular}{ccccccccc}
\toprule
{Drawback} & {MIM} & {MIFS} & {mRMR} & {maxMIFS} & {CIFE} & {JMI} & {CMIM} & {JMIM}\\
\midrule
%\midrule
\begin{tabular}{c} Redundancy\\ ignored\end{tabular} & X \\ %\multirow{2}{*}{X} & & & & & & & \\
\arrayrulecolor{black!30}\midrule
\begin{tabular}{c} Complementarity\\ ignored\end{tabular} & &X & X & X & & & & \\
\midrule
\begin{tabular}{c} Complementarity\\ penalized\end{tabular}& & & & & & & X & X \\
\midrule
\begin{tabular}{c} Redundancy\\ undervalued\end{tabular} & & & X & X & & X & X & X \\
\midrule
\begin{tabular}{c} Unimportant term\\ approximated \end{tabular}& & & & & & & & X \\
\midrule
\begin{tabular}{c} Redundancy\\ overscaled \end{tabular} & & X & & & X & & & \\
\arrayrulecolor{black}\hline%\bottomrule
\end{tabular}
\label{tab:drawback}
\end{table}
%FRANCISCO: add the table earlier, since it is mentioned in the beginning of the subsection?

\subsection{Properties of the methods}
\label{subsec:propsmethods}

%RV Reviewed
The drawbacks presented in Section \ref{subsec:representative} have consequences in terms of the good properties that forward feature selection methods must have, as expressed by Theorem \ref{th:boundsgeneral} and Theorem \ref{th:propsstandard}: (i) the existence of meaningful bounds for the objective function, and (ii) the fact that fully relevant candidate features are the only ones that reach the maximum value of the objective function, while irrelevant and redundant features are the only ones to reach the minimum, which guarantees a perfect ordering of the features. With a few exceptions, the approximations taken by the various methods make them lose these properties.
%FRANCISCO: again 'made' instead of 'taken'? I guess in our context 'taken' works, and may actually be more emphatic, but 'made' is a more common word for this context.

%RV Reviewed
Concerning the preservation of the bounds stated by Theorem \ref{th:boundsgeneral}, it can be shown that MIFS, mRMR, maxMIFS, and CIFE, do not preserve neither the lower bound nor the upper bound. Indeed, the objective function of CIFE is unbounded, both superiorly and inferiorly, due to the overscaled redundancy drawback. Moreover, the objective functions of MIFS, mRMR, and maxMIFS are unbounded inferiorly, due to the complementarity ignored drawback, i.e. due to the lack of the compensation provided by the class-relevant redundancy. For these methods, the upper bound of the objective function becomes $H(C)$. This bound is meaningless since it no longer expresses the uncertainty in $C$ not explained by already selected features.
%FRANCISCO: 'in fact, it is independent of the already selected features' should be added in the last sentence somehow.

%FRANCISCO: new method would also preserve the lower bound. argument is that objective function of the new method is larger than that of CMIM so that this has to be a lower bound for it too. 'In fact, $\max_{X_s\in S}\left\{\rm{MI}(X_i,X_s)-\rm{MI}(X_i,X_s|C)\right\}\geq \max_{X_s'\in S}\rm{MI}(X_i,X_s')-\max_{X_{s''}\in S}\rm{MI}(X_i,X_{s''}|C)$ (needs to be justified?). This means that, considering the actual contributions associated with CMIM and the new method, the comparison that matters follows: $\max_{X_s\in S}\left\{\rm{MI}(X_i,X_s)-\rm{MI}(X_i,X_s|C)\right\}\leq \max_{X_s'\in S}\rm{MI}(X_i,X_s')-\max_{X_{s''}\in S}\rm{MI}(X_i,X_{s''}|C)$'.

%RV Reviewed
MIM, JMI, CMIM, and JMIM preserve one of the bounds: JMIM preserves the upper bound and the remaining methods preserve the lower bound. MIM preserves the lower bound but just because its objective function is too simplistic.

%RV Reviewed
In order to see that JMI preserves the lower bound, note that, using \eqref{eq:mi3discrete2}, its objective function can also be written as
\begin{equation}
{\rm OF}_{{\rm JMI}}(X_i)= \frac{1}{|\boldsymbol{S}|} \sum_{X_s\in \boldsymbol{S}} {\rm MI}(C,X_i|X_s).
\label{eq:OF_JMI}
\end{equation}
%FRANCISCO: similar to the representations \eqref{eq:OF_CMIM} and \eqref{eq:OF_JMIM}.
For any candidate feature $X_i$, since ${\rm MI}(C,X_i|X_s)\geq 0$ for all $X_s\in \boldsymbol{S}$, it follows that ${\rm OF}_{{\rm JMI}}(X_i)\geq 0$. Similarly, using \eqref{eq:OF_CMIM}, it follows immediately from the non-negativity of ${\rm MI}(C,X_i|X_s)$ that ${\rm OF}_{{\rm CMIM}}(X_i)\geq 0$, again for any candidate feature $X_i$.

%RV Reviewed
To see that JMIM preserves the upper bound, note that, using \eqref{eq:OF_JMIM}, its objective function can also be written as
\begin{align*}
{\rm OF}_{{\rm JMIM}}(X_i)&= \min_{X_s \in \boldsymbol{S}} \left\{{\rm MI}(C,X_s)+ {\rm MI}(C,X_i|X_s)\right\}\\ 
&= \min_{X_s\in \boldsymbol{S}} \left\{H(C)-H(C|X_i,X_s)\right\}\\
&= H(C)-\max_{X_s\in S}H(C|X_i,X_s).
\end{align*}
Hence, the objective function of JMIM has $H(C)$ as upper bound for any candidate feature $X_i$ since $H(C|X_i,X_s)\geq 0$ for all $X_s\in \boldsymbol{S}$. This is the desired bound since this method has target objective function ${\rm OF}$ as reference.
%FRANCISCO: 'Note that' in the last sentence?

%RV Reviewed
Despite maintaining one of the bounds stated by Theorem \ref{th:boundsgeneral}, MIM, JMI, CMIM, and JMIM, do not preserve the bound that involves the conditional entropy $H(C|\boldsymbol{S})$. For MIM the upper bound becomes $H(C)$ which, as in the case of methods ignoring complementarity, is meaningless. For the remaining methods, the bound is lost due to the approximation that replaces the terms of the objective function that depend on set $\boldsymbol{S}$ by a function of the already selected features $X_s$ taken individually. As for the lower bound of JMIM and the upper bounds of JMI and CMIM, they are now functions of $H(C|X_s)$. As in the case of MIFS, mRMR, and maxMIFS, the new bounds become meaningless: the upper bounds of JMI and CMIM no longer express the uncertainty in $C$ that is not explained by the \textit{complete} set of already selected features; and the lower bound of JMIM no longer expresses the uncertainty in $C$ already explained by the \textit{complete} set of already selected features.
%FRANCISCO: idea would be to emphasize that $H(C|X_s)$ is expected to be particularly far from $H(C|\boldsymbol{S})$, even though it also has a particular interesting meaning, but which clearly is not the one that is desired.

%RV Reviewed
In Section \ref{sec:setting} we will illustrate the loss of bounds by the various methods.
%FRANCISCO: in CIFE, we do not really show it, but just discuss the hypothetical scenario; also the upper bound of the methods that ignore complementarity, plus MIM, is not really verified.

%RV Reviewed
Regarding the connections between the bounds of the objective functions and the feature types, stated by Theorem \ref{th:propsstandard}, these connections are lost for all methods, despite the fact that some bounds are preserved. It is no longer possible to assure that fully relevant features reach the maximum value of the objective function (when it exists) and that irrelevant and redundant features reach the minimum (when it exists). Moreover, the stopping criterion is lost. We will provide several examples in Section \ref{sec:setting}. This is again due to the approximation that replaces the dependencies on the whole set of already selected features, $\boldsymbol{S}$, by dependencies on individual features $X_s \in \boldsymbol{S}$, which is shared by all methods.
%FRANCISCO: 'Moreover' is not well used since it is a consequence of the maximum not being reached. it is not an additional point. we should use something as 'In particular,'.

%RV Reviewed
It would be useful to have results similar to those of Theorem \ref{th:propsstandard}, if their validity given $X_s$ would imply their validity given $\boldsymbol{S}$. This is only meaningful for feature types, such that if a feature has a type given $X_s$ it will have the same type given $\boldsymbol{S}$. Unfortunately, this is only true for redundant features. In fact, according to Theorem \ref{th:redundancy}, a feature that is redundant given $X_s$ will also be redundant given $\boldsymbol{S}$. The same does not hold for irrelevant and relevant features since, as discussed in Section \ref{subsec:localdefs}, as $\boldsymbol{S}$ grows, relevant features can become irrelevant, and vice-versa. Thus, only properties concerning redundancy given $X_s$ are worth being considered. In this respect, a weaker version of Theorem \ref{th:propsstandard}.3 can be proved for CMIM.
%FRANCISCO: some connection between first and second sentence would be good since they currently say basically the same.

\begin{theorem}
If there exists $X_s\in \boldsymbol{S}$ such that $X_i$ is a redundant feature given $\{X_s\}$, then ${\rm OF}_{{\rm CMIM}}(X_i)=0$, i.e., the minimum possible value taken by the objective function of CMIM is reached. 
\label{th:ofzeroredundCMIM}
\end{theorem}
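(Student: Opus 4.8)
The plan is to argue directly from the minimum-operator representation of the CMIM objective function, \eqref{eq:OF_CMIM}, namely ${\rm OF}_{{\rm CMIM}}(X_i)=\min_{X_s\in\boldsymbol{S}}{\rm MI}(C,X_i|X_s)$. Every term in this minimum is a conditional mutual information, hence non-negative by \eqref{eq:micondnoneg}; this already gives ${\rm OF}_{{\rm CMIM}}(X_i)\geq 0$, so $0$ is indeed the smallest value the objective function can attain (this is the same observation used in Section~\ref{subsec:propsmethods} to show CMIM preserves the lower bound). It therefore suffices to produce a single index $X_s\in\boldsymbol{S}$ at which the corresponding term ${\rm MI}(C,X_i|X_s)$ equals $0$.

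For that, I would take the feature $X_s\in\boldsymbol{S}$ supplied by the hypothesis, so that $X_i$ is redundant given $\{X_s\}$, i.e. $H(X_i|X_s)=0$ by Definition~\ref{def:subset}. Using the symmetric form of the conditional-MI decomposition obtained from \eqref{eq:miconddiscrete} and \eqref{eq:miconddiscrete3}, one writes ${\rm MI}(C,X_i|X_s)=H(X_i|X_s)-H(X_i|C,X_s)$, which is the conditional analogue of \eqref{eq:micondproperty}. The first term vanishes by assumption, and the second satisfies $0\leq H(X_i|C,X_s)\leq H(X_i|X_s)=0$ by non-negativity of conditional entropy in the discrete setting together with \eqref{eq:infodonthurtgen}; hence ${\rm MI}(C,X_i|X_s)=0$. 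Feeding this into the minimum, the term indexed by this particular $X_s$ is $0$ while all remaining terms are $\geq 0$, so ${\rm OF}_{{\rm CMIM}}(X_i)=0$, which is the claim.

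There is no real obstacle here; the proof is short. The only points requiring a little care are: (i) using the \emph{symmetric} decomposition of conditional MI, so that $H(X_i|X_s)$ — rather than $H(C|X_s)$ — appears and the hypothesis $H(X_i|X_s)=0$ can be applied directly; and (ii) noting that the inequality ${\rm MI}(C,X_i|X_s)\leq H(X_i|X_s)$ is the conditional counterpart of \eqref{eq:miupperbound}, valid in the discrete framework of this section. An alternative, essentially equivalent route is to invoke Theorem~\ref{th:redundancy} and simply specialize the argument already given for statement~3 of Theorem~\ref{th:propsstandard} to the singleton ``set'' $\{X_s\}$: a feature redundant given $\{X_s\}$ contributes ${\rm MI}(C,X_i|X_s)=0$, and the minimum of non-negative terms one of which is $0$ equals $0$.
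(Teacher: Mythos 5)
Your proof is correct and follows essentially the same route as the paper's: it uses the hypothesis $H(X_i|X_s)=0$ together with the symmetric conditional-MI decomposition and \eqref{eq:infodonthurtgen} to get ${\rm MI}(C,X_i|X_s)=0$, and then concludes from the representation \eqref{eq:OF_CMIM} that a minimum of non-negative terms, one of which vanishes, equals $0$. The added justification that $0$ is indeed the minimum possible value is a welcome (if minor) elaboration of what the paper leaves implicit.
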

\begin{proof}
Since $X_i$ is a redundant feature given $\{X_s\}$, then ${\rm MI}(C,X_i|X_s)=0$ by \eqref{eq:micondproperty} and \eqref{eq:infodonthurtgen}. As a result, ${\rm OF}_{{\rm CMIM}}(X_i)=0$ follows from \eqref{eq:OF_CMIM}. In fact, in order for $\min_{X_s\in \boldsymbol{S}}{\rm MI}(C,X_i|X_s)$ to be 0, it is enough that ${\rm MI}(C,X_i|X_s)$ is 0 for one particular $X_s$, since the terms involved in the minimization are all non-negative. 
\end{proof}

%FRANCISCO: the theorem above is then also fulfilled by the new method, as expected, since the drawback of CMIM is now less severe while we approximate the terms better individually. just see the arguments above for why the other methods have problems and see that nothing applies to the new method. in fact we even expect now, treating complementarity well, that the maximum will in general be reached for a fully relevant feature.
%FRANCISCO: proof (assuming that $X_i$ is discrete, which is done in all section): 
%'As there is a feature for which $\rm{MI}(X_i,X_s)=H(X_i)$, this is surely the term that contributes for the entry $\max_{X_s\in S}\rm{MI}(X_i,X_s)$ of the objective function as this MI has an upper bound in $H(X_i)$ so that this value cannot be beaten. In order for the value of the objective function to be $0$, we thus just need to prove that $\rm{MI}(X_i,C)+\max_{X_s\in S}\rm{MI}(X_i,X_s|C)$, the rest of the objective function of this method, is smaller than $H(X_i)$. In fact, this would mean that the objective function takes a non-positive value but as we can prove that there is a lower bound in 0, the value can only be 0 then. 
%And it is in fact true that $\rm{MI}(X_i,C)+\max_{X_s\in S}\rm{MI}(X_i,X_s|C)\leq H(X_i)$. $\rm{MI}(X_i,C)+\max_{X_s\in S}\rm{MI}(X_i,X_s|C)=H(X_i)-H(X_i|C)+H(X_i|C)-\min_{X_s\in S}H(X_i|C,X_s)= H(X_i)-\min_{X_s\in S}H(X_i|C,X_s)\leq H(X_i)$ as $\min_{X_s\in S}H(X_i|C,X_s)$ is non-negative.'
%FRANCISCO: detection of redundant features discussed below also applies then in this context.

%RV Reviewed
Theorem \ref{th:ofzeroredundCMIM} states that the objective function of CMIM reaches the minimum for a feature that is redundant given $X_s$. Note that a feature can be redundant given $\boldsymbol{S}$ but not redundant given $X_s$, which renders this result weaker than that of Theorem \ref{th:propsstandard}.3. Theorems analogous to Theorem \ref{th:ofzeroredundCMIM} cannot be proved for the remaining methods, and we provide counter-examples in Section \ref{sec:setting}. In particular, the possibility to discard redundant features from the set of candidate features is lost, except for CMIM in the weaker context of Theorem \ref{th:ofzeroredundCMIM}.
%FRANCISCO: the fact that CMIM verifies this and not, for instance, JMI, comes from the fact that redundancy undervalued is less severe for these methods, applying the maximum is better than the average as in fact already discussed with what we added before, in particular detecting redundancy if only associated with one feature; this should be added here or at least in the conclusions of the paper! then JMIM loses the property because of unnecessary term approximated drawback, even if it also considers a maximum operator, but this is in the counter-example of the next section. only thing really missing is the relation between the theorem and the good characteristics of CMIM.
%FRANCISCO: we cannot really discard redundant features since we are not sure if they are redundant; we cannot check the value of $H(C|\boldsymbol{S})$.

To summarize, the approximations taken by all methods make them lose the good properties exhibited by the target objective functions, namely the assurance that features are correctly ordered, the existence of a stopping criterion, and the possibility to discard redundant features (here the exception is CMIM, in the weaker context of Theorem \ref{th:ofzeroredundCMIM}).
%FRANCISCO: I would prefer 'guarantee' instead of 'assurance'.
%FRANCISCO: again 'made' instead of 'taken'? I guess in our context 'taken' works, and may actually be more emphatic, but 'made' is a more common word for this context.
%FRANCISCO: we cannot really discard redundant features since we are not sure if they are redundant; we cannot check the value of $H(C|\boldsymbol{S})$.

\subsection{Other methods}
\label{subsec:othermethods}

We now briefly discuss other methods that have appeared in the literature, explaining why they have not been included as part of the representative methods presented previously. 

MIFS-U \citep{citeulike:2607721} differs from MIFS in the estimation of the MI between the candidate feature and the class---this is a meaningless difference for the type of theoretical properties of the methods that we intend to address, in which estimation does not play a role. MIFS-ND \citep{hoque2014mifs} considers the same reference terms as mRMR, employing a genetic algorithm to select the features, thus again not changing anything in theoretical terms. ICAP \citep{jakulin2005machine} is similar to CIFE, while forcing the terms ${\rm TMI}(C,X_i,X_s)$, $X_s\in \boldsymbol{S}$, to be seen as redundancy terms by only considering their contribution when they are positive (negative for the objective function). IGFS \citep{el2008powerful} chooses the same candidate features in each step as JMI; and CMIM-2 \citep{vergara2010cmim} is also just the same as JMI, as its objective function is defined exactly as \eqref{eq:OF_JMI}.
%FRANCISCO: should be clear why these methods are close enough to the representative ones for them not to be needed in the previous study (reason why the others are representative!).
%FRANCISCO: connect TMI terms with the previous concepts. we are ignoring the possibility that complementarity exists somehow again (context of ICAP). also we do not have explicitly the objective function so that we should talk about the TMI term more generally probably, or mentioning the contribution to the objective function is a bit lost here.

A particular type of methods that were also not considered as representative is characterized by considering similar objective functions to those of the introduced representative methods, with the difference that all MI terms are replaced by corresponding normalized MI terms. More concretely: NMIFS \citep{estevez2009normalized} is an enhanced version of MIFS, MIFS-U, and mRMR; DISR \citep{meyer2006use} is adapted from JMI, and considers a type of normalization called symmetrical relevance; NJMIM \citep{bennasar2015feature} is adapted from JMIM, using also symmetrical relevance. Past experiments \citep[cf.][]{Brown:2012:CLM:2188385.2188387,bennasar2015feature} show that such normalizations make the methods more expensive, due to associated extra computations, with no compensation in terms of performance. In fact, it is argued in the mentioned experiments that the performance of such methods is actually worse than the performance of the corresponding methods that do not use normalized MI, which should be, as added by \cite{Brown:2012:CLM:2188385.2188387}, related to the additional variance introduced by the estimation of the extra normalization term.
%FRANCISCO: more details about what 'normalized MI' is? it is not always the same type of normalization and it is not relevant in the sense that it is in the class of the other methods, but at the same time it is now the object of a full and important paragraph in this subsection, and it is something really concrete (general idea and concrete idea of symmetrical relevance, and also the one of NMIFS).
%FRANCISCO: no justification about why we are not considering the normalized MI methods since if the question is about performance, we would also already know that methods that ignore complementarity for instance are bad. anyway we can just think that they are not that famous to require being addressed, which is not the case of the early methods, but anyway this should be included above.

\section{Comparison of feature selection methods on a distributional setting}
\label{sec:setting}

This section compares the feature selection methods using a distributional setting, based on a specific definition of class, features, and a performance metric. The setting provides an ordering for each of the methods, which is independent of specific datasets and estimation methods, and is compared with the ideal feature ordering. The aim of the setting is to illustrate how the drawbacks of the methods lead to incorrect feature ordering and to the loss of the good properties of the target objective functions.
%FRANCISCO: 'ideal' feature ordering should be related with the previously done analysis, in terms of the target objective functions.

We start by introducing a performance measure for feature selection methods that does not rely on the specificities of a fixed classifier---the \textit{minimum Bayes risk} (MBR). We then describe the characteristics of the setting, namely the definitions of class and features, and show how the quantities required to calculate the objective functions of the methods, i.e. the various types of MI, are calculated. Finally, we present and discuss the results.

\subsection{Minimum Bayes risk}
\label{subsec:otpm}

Commonly, the performance measures used to compare forward selection methods depend on how a particular classifier performs for certain data sets. As a result, it is not clear if the obtained conclusions are exclusively explained by the characteristics of the feature selection method, or if the specificities of the classifier and/or the data under study create confounding effects. To overcome this limitation, we consider a different type of performance measure that is computed at each step of the forward selection method under consideration. Using the set of selected features until a given step, we obtain, for a fixed classifier, the associated \textit{Bayes risk} (BR) or \textit{total probability of misclassification} \citep[Ch. 11]{johnson2002applied}. Bayes risk is a theoretical measure in the sense that it does not rely on data but instead directly on the, assumed to be known, distributions of the involved features into consideration \citep[see][for practical contexts where it was used]{kumar2004minimum,goel2000minimum}. The \textit{Bayes classifier} \citep[see][Ch. 1]{MR2422423} is a classifier that defines a classification rule associated with the minimum Bayes risk, which will be our performance measure. The suitability of this measure to our setting results from the fact that it relies on the distributions of the features, and also on their class-conditional distributions.

\subsubsection{Bayes risk and Bayes classifier}

For a given class $C$, with values on the set $\{0,1,...,c\}$, and a set of selected features $\boldsymbol{S}$, with support $\mathcal{S}$, a $(C,\boldsymbol{S})$-classifier $g$ is a (Borel-)measurable function from 
$\mathcal{S}$ to $\{0,1,...,c\}$, and $g(\boldsymbol{s})$ denotes the value of $C$ to which the observation $\boldsymbol{s}$ is assigned by the classifier. Then, the Bayes risk of the $(C,\boldsymbol{S})$-classifier $g$ is given by
\begin{equation*}
{\rm BR}(C,\boldsymbol{S},g)=P(g(\boldsymbol{S})\neq C))=\sum_{j=0}^{c}P(g(\boldsymbol{S})\neq j|C=j)P(C=j).
\end{equation*}

The proposed performance evaluation measure consists of the minimum possible value of the BR, called \textit{minimum Bayes risk} (MBR). Thus, for a given class $C$ and a set of selected features $\boldsymbol{S}$, the associated minimum Bayes risk, ${\rm MBR}(C,\boldsymbol{S})$, is given by: 
\begin{equation*}
{\rm MBR}(C,\boldsymbol{S})=\min_{g}{\rm BR}(C,\boldsymbol{S},g).
\label{eq:otpm}
\end{equation*}

The minimum Bayes risk corresponds to the Bayes risk of the so-called \textit{Bayes classifier}. The $(C,\boldsymbol{S})$ Bayes classifier assigns an object $\boldsymbol{s}\in \mathcal{S}$ to the value that $C$ is most likely to take given that $\boldsymbol{S}=\boldsymbol{s}$. That is, the $(C,\boldsymbol{S})$ Bayes classifier
$g$ is such that
\begin{align*}
g(\boldsymbol{s})&=\argmax_{j\in \{0,1,...,c\}} P(C=j|\boldsymbol{S}=\boldsymbol{s})\\
&=\argmax_{j\in \{0,1,...,c\}} P(C=j) f_{\boldsymbol{S}|C=j}(\boldsymbol{s}).
\end{align*}
Note that, in particular, when there are two possible values for the class, i.e. $c=1$, the Bayes classifier $g$ is such that \citep[see][Ch. 11]{johnson2002applied}:
\begin{equation}
g(\boldsymbol{s})=1\iff\frac{f_{\boldsymbol{S}|C=0}(\boldsymbol{s})}{f_{\boldsymbol{S}|C=1}(\boldsymbol{s})}\leq \frac{P(C=1)}{P(C=0)}. %,\quad \boldsymbol{s}\in \boldsymbol{S}.
\label{eq:otpmequation}
\end{equation}

\subsubsection{Properties of the minimum Bayes risk}

We now discuss a few properties of the minimum Bayes risk, the proposed performance evaluation criterion. In the following, measurable should be read as Borel-measurable.

\begin{theorem}
If $C$ is a measurable function of $\boldsymbol{S}$, then ${\rm MBR}(C,\boldsymbol{S})=0$.
\label{th:tpmallinfo}
\end{theorem}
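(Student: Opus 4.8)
The plan is to show that when $C$ is a measurable function of $\boldsymbol{S}$, say $C = \varphi(\boldsymbol{S})$ for some measurable $\varphi$, the Bayes classifier recovers $C$ exactly, so its probability of misclassification is zero. First I would take the candidate classifier $g = \varphi$ itself: this is a $(C,\boldsymbol{S})$-classifier because $\varphi$ is measurable from $\mathcal{S}$ to $\{0,1,\dots,c\}$. For this choice, $g(\boldsymbol{S}) = \varphi(\boldsymbol{S}) = C$ almost surely, hence ${\rm BR}(C,\boldsymbol{S},g) = P(g(\boldsymbol{S}) \neq C) = 0$.

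Then I would invoke the fact that the minimum Bayes risk is a minimum over all $(C,\boldsymbol{S})$-classifiers and that the Bayes risk is a probability, hence non-negative. Combining ${\rm MBR}(C,\boldsymbol{S}) = \min_g {\rm BR}(C,\boldsymbol{S},g) \geq 0$ with the existence of a classifier achieving value $0$ gives ${\rm MBR}(C,\boldsymbol{S}) = 0$. Equivalently, one can argue directly through the Bayes classifier of \eqref{eq:otpmequation} (or its multiclass analogue): conditionally on $\boldsymbol{S} = \boldsymbol{s}$, the value $C = \varphi(\boldsymbol{s})$ has conditional probability $1$, so the $\argmax$ picks exactly $\varphi(\boldsymbol{s})$, and the Bayes classifier coincides with $\varphi$.

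There is essentially no hard part here; the only point requiring a little care is the measure-theoretic bookkeeping. Strictly speaking $C = \varphi(\boldsymbol{S})$ need only hold $P$-almost surely (two measurable functions inducing the same random variable may differ on a null set), so I would phrase the conclusion $g(\boldsymbol{S}) = C$ as an almost-sure identity, which still yields $P(g(\boldsymbol{S}) \neq C) = 0$. Also, one should note that the infimum defining ${\rm MBR}$ is attained (or at least that the value $0$ is a lower bound attained by some classifier), which the construction above makes transparent. Since the argument is short and self-contained, I would keep the proof to two or three sentences: exhibit $g = \varphi$, observe the Bayes risk is $0$, and conclude by non-negativity of probabilities together with the minimality in the definition of ${\rm MBR}$.
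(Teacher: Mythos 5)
Your proposal is correct and follows essentially the same route as the paper's own proof: exhibit the classifier $g=\varphi$, note that its Bayes risk is $P(g(\boldsymbol{S})\neq C)=0$, and conclude from the non-negativity of ${\rm MBR}$ together with ${\rm MBR}(C,\boldsymbol{S})\leq {\rm BR}(C,\boldsymbol{S},g)$. Your additional remarks on almost-sure equality and on the Bayes classifier's $\argmax$ are sensible refinements but not needed beyond what the paper already does.
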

\begin{proof}
Let $g$ be the measurable function such that $C=g(\boldsymbol{S})$. As $C=g(\boldsymbol{S})$, it follows that ${\rm BR}(C,\boldsymbol{S},g)=P(g(\boldsymbol{S})\neq C))=0$. As $\rm{MBR}(C,\boldsymbol{S})$ is non-negative and $\rm{MBR}(C,\boldsymbol{S})\leq \rm{BR}(C,\boldsymbol{S},g)$, we conclude that ${\rm MBR}(C,\boldsymbol{S})=0$, as intended.
\end{proof}

Note that $C$ being a measurable function of $\boldsymbol{S}$ is equivalent to saying that features in $\boldsymbol{S}$ fully explain the class.
%FRANCISCO: this is added because this theorem should be related with the important concepts from Section \ref{subsec:localdefs}. 
%FRANCISCO: idea is that if we have all information about the class, then the measure has the desired property that the minimum value is reached. I removed the other implication of the theorem not to create confusion, even though I wanted it to argue that $H(C|\boldsymbol{S})=0$ is equivalent to ${\rm MBR}(C,\boldsymbol{S})=0$, so that stopping criterion defined before is coherent with this measure; with this implication we know that the stopping criterion is associated with an MBR of 0 at least.

\begin{theorem}
If $X_i$ is a measurable function of $\boldsymbol{S}$, then ${\rm MBR}(C,\boldsymbol{S} \cup \{X_i\})={\rm MBR}(C,\boldsymbol{S})$.
\label{th:tpmred}
\end{theorem}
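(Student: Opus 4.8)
The plan is to reduce the claim to Theorem \ref{th:tpmallinfo} by observing that if $X_i$ is a measurable function of $\boldsymbol{S}$, say $X_i=\phi(\boldsymbol{S})$, then the augmented feature vector $\boldsymbol{S}\cup\{X_i\}$ carries exactly the same information as $\boldsymbol{S}$ itself. More precisely, the map $\boldsymbol{s}\mapsto(\boldsymbol{s},\phi(\boldsymbol{s}))$ is a measurable bijection from $\mathcal{S}$ onto its image, with measurable inverse given by the coordinate projection. I would first argue the inequality ${\rm MBR}(C,\boldsymbol{S}\cup\{X_i\})\leq{\rm MBR}(C,\boldsymbol{S})$: any $(C,\boldsymbol{S})$-classifier $g$ induces a $(C,\boldsymbol{S}\cup\{X_i\})$-classifier $\tilde g(\boldsymbol{s},x)=g(\boldsymbol{s})$ with identical Bayes risk, since $\tilde g(\boldsymbol{S},X_i)=g(\boldsymbol{S})$ almost surely; taking the minimum over $g$ gives the inequality. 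For the reverse inequality, any $(C,\boldsymbol{S}\cup\{X_i\})$-classifier $h$ induces a $(C,\boldsymbol{S})$-classifier $h'(\boldsymbol{s})=h(\boldsymbol{s},\phi(\boldsymbol{s}))$, which is measurable as a composition of measurable maps, and satisfies $h'(\boldsymbol{S})=h(\boldsymbol{S},\phi(\boldsymbol{S}))=h(\boldsymbol{S},X_i)$ almost surely, so ${\rm BR}(C,\boldsymbol{S},h')={\rm BR}(C,\boldsymbol{S}\cup\{X_i\},h)$; minimizing over $h$ yields ${\rm MBR}(C,\boldsymbol{S})\leq{\rm MBR}(C,\boldsymbol{S}\cup\{X_i\})$. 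Combining the two inequalities gives the equality.

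An alternative, slightly slicker route avoids the two-sided argument: note that the $(C,\boldsymbol{S}\cup\{X_i\})$ Bayes classifier assigns $(\boldsymbol{s},x)$ to $\argmax_j P(C=j\mid\boldsymbol{S}=\boldsymbol{s},X_i=x)$, and since $X_i=\phi(\boldsymbol{S})$ is $\sigma(\boldsymbol{S})$-measurable, conditioning on $(\boldsymbol{S},X_i)$ is the same as conditioning on $\boldsymbol{S}$ alone, i.e. $P(C=j\mid\boldsymbol{S},X_i)=P(C=j\mid\boldsymbol{S})$ almost surely. Hence the two Bayes classifiers coincide (as functions on the support of $(\boldsymbol{S},X_i)$) and have the same Bayes risk, which is exactly the corresponding MBR.

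The only point requiring a little care is the measurability bookkeeping in the reverse inequality: one must check that $h'=h\circ(\mathrm{id},\phi)$ is a genuine $(C,\boldsymbol{S})$-classifier, i.e. measurable from $\mathcal{S}$ to $\{0,1,\dots,c\}$, and that the almost-sure identity $h'(\boldsymbol{S})=h(\boldsymbol{S},X_i)$ holds on the full probability-one set where $X_i=\phi(\boldsymbol{S})$. This is routine given that compositions of measurable functions are measurable, but it is the substantive content of the proof — everything else is bookkeeping about Bayes risk being preserved under an information-preserving reparametrization. I expect no real obstacle beyond stating these measurability facts cleanly.
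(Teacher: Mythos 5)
Your proposal is correct and follows essentially the same route as the paper: the paper also establishes ${\rm MBR}(C,\boldsymbol{S})\leq{\rm MBR}(C,\boldsymbol{S}\cup\{X_i\})$ by composing a $(C,\boldsymbol{S}\cup\{X_i\})$-classifier with $\boldsymbol{s}\mapsto(\boldsymbol{s},\xi(\boldsymbol{s}))$ (applying this to the Bayes classifier rather than minimizing over all classifiers, which is equivalent), and obtains the reverse inequality from the monotonicity of MBR under enlarging the feature set. Your second, "slicker" argument via $P(C=j\mid\boldsymbol{S},X_i)=P(C=j\mid\boldsymbol{S})$ is a valid variant but not what the paper does.
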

\begin{proof}
Let $\xi$ be the measurable function such that $X_i=\xi(\boldsymbol{S})$, and $g$ be the $(C,\boldsymbol{S}\cup\{X_i\})$ Bayes classifier, so that, in particular, ${\rm MBR}(C,\boldsymbol{S}\cup\{X_i\})={\rm BR}(C,\boldsymbol{S}\cup\{X_i\},g)$. Let $g'$ be the $(C,\boldsymbol{S})$ classifier such that, given an observation $\boldsymbol{s}\in \mathcal{S}$, $g'(\boldsymbol{s})=j$ when $g(\boldsymbol{s},\xi(\boldsymbol{s}))=j$. Then ${\rm BR}(C,\boldsymbol{S} \cup \{X_i\},g)={\rm BR}(C,\boldsymbol{S},g')$. As a consequence, by transitivity, ${\rm MBR}(C,\boldsymbol{S}\cup\{X_i\})={\rm BR}(C,\boldsymbol{S},g')$. This implies that ${\rm MBR}(C,\boldsymbol{S})\leq {\rm MBR}(C,\boldsymbol{S} \cup \{X_i\})$. In turn, it always holds that ${\rm MBR}(C,\boldsymbol{S})\geq{\rm MBR}(C,\boldsymbol{S} \cup \{X_i\})$ since $\boldsymbol{S}\subset \boldsymbol{S} \cup \{X_i\}$. Therefore, ${\rm MBR}(C,\boldsymbol{S} \cup \{X_i\})={\rm MBR}(C,\boldsymbol{S})$.
\end{proof}

Note that $X_i$ being a measurable function of $\boldsymbol{S}$ is equivalent to saying that $X_i$ is redundant given $\boldsymbol{S}$.
%FRANCISCO: this is added because this theorem should also be related with the important concepts from Section \ref{subsec:localdefs}. 

\subsection{Setting description}
\label{subsec:setting}

%RV Reviewed
We now describe the distributional setting used to illustrate the various deficiencies of the feature selection methods. The class chosen for this setting is a generalization of the one proposed by \cite{claudiapaper}, which was based on the scenario introduced by \cite{citeulike:2607721} and later used by \cite{MR2422423}. It is defined as 
\begin{equation}\label{eq:class}
C_k = \left\{ \begin{array}{rl}
 0, &\mbox{$X+kY<0$} \\
 1, &\mbox{$X+kY\geq0$}
       \end{array}, \right.
\end{equation}
where $X$ and $Y$ are independent features with standard normal distributions and $k \in \, ]0,+\infty[$.

%RV Reviewed
According to the discussion in Section \ref{sec:problem}, our scenario includes fully relevant, relevant, redundant, and irrelevant features. Specifically, our features are $X$, $X-k'Y$, $k'>0$, $Z$ and $X_{{\rm disc}}$. $X$ and $X-k'Y$ were chosen as relevant features that, taken together, fully explain the class. As irrelevant feature, we chose $Z$, independent of $X$ and $Y$, which for simplicity is considered to follow a Bernoulli distribution with success probability $1/2$. Finally, as redundant feature we chose
\begin{equation}
\label{eq:xdisc}
X_{{\rm disc}} = \left\{ \begin{array}{rl}
0, &\mbox{$X<0$} \\
1, &\mbox{$X\geq0$}
\end{array}, \right.
\end{equation}

%RV Reviewed
The first selected feature is the candidate $X_i$ that has the largest value of ${\rm MI}(C_k,X_i)$. The possible candidates are $X$, $X-k'Y$, and $X_{{\rm disc}}$, which are the initially relevant features. $Z$ is an irrelevant feature and, therefore, will not be selected first. We want $X$ to be selected first to assure that, at the second step of the algorithm, there will be, as candidates, one fully relevant, one redundant, and one irrelevant feature. This provides an extreme scenario, where the relevancy level of the relevant feature is the maximum possible, making a wrong decision the hardest to occur. We next discuss the conditions for selecting $X$ before $X-k'Y$ and before $X_{{\rm disc}}$.

%RV Reviewed
$X$ is selected before $X-k'Y$ if ${\rm MI}(C_k,X)>{\rm MI}(C_k,X-k'Y)$, which is equivalent to the condition
\begin{equation*}
	\arctan{k}< (\pi-\arctan{k'})-\arctan{k},
\end{equation*}
where the left term represents the angle between the lines $X=0$ and $X+kY=0$ and the right term represents the angle between the lines $X-k'Y=0$ and $X+kY=0$, in the context of the two-dimensional space defined by the pair of orthogonal vectors $(X,Y)$. The condition can be written in terms of $k$ as
\begin{equation}
k<\tan\left(\frac{\pi-\arctan{k'}}{2}\right).
\label{eq:xfirst}
\end{equation}

%RV Reviewed
Feature $X_{{\rm disc}}$ is never selected before $X$ since ${\rm MI}(C_k,X_{{\rm disc}})\leq {\rm MI}(C_k,X)$ for all $k>0$. To see this, note that this inequality can be written, using \eqref{eq:micondproperty}, as $H(C_k)-H(C_k|X_{{\rm disc}})\leq H(C_k)-H(C_k|X)$, which is equivalent to $H(C_k|X_{{\rm disc}})\geq H(C_k|X)$. This is equivalent to $H(C_k|X_{{\rm disc}})\geq H(C_k|X_{{\rm disc}},X)$, which holds by \eqref{eq:infodonthurtgen}. In turn, $H(C_k|X) = H(C_k|X_{{\rm disc}},X)$ is equivalent to ${\rm MI}(C_k,X_{{\rm disc}}|X)=0$ by \eqref{eq:miconddiscrete}. Finally, since $X_{{\rm disc}}$ is redundant given $\{X\}$, equations \eqref{eq:miconddiscrete} and \eqref{eq:infodonthurtgen} can be used to verify that ${\rm MI}(C_k,X_{{\rm disc}}|X)=0$.
%FRANCISCO: for $k=0$, we would have the equality, but in the remaining cases it is a strict inequality, and this must be there in order to be sure that $X$ comes first. anyway, we could just decide to choose $X$ in case of a draw, since that is our case of interest, and draws allow deciding in which direction to go.
%FRANCISCO: is the context in which this paragraph is related with two paragraphs ago still clear? 

% RV Reviewed
In view of the above discussion, the ideal feature ordering coming out of the distributional setting is $X$ in first place and $X-k'Y$ in second place. Ideally, the feature selection method should stop at this step, since a fully relevant feature has been found. However, further steps need to be considered, since actual methods do not preserve the stopping criterion, as discussed in Section \ref{subsec:propsmethods}. Then, at the third step, the remaining features, $Z$ and $X_{{\rm disc}}$, must be equally likely to be selected since, according to Theorems \ref{th:propsstandard}.2 and \ref{th:propsstandard}.3, both their target objective functions reach the lower bound.
%FRANCISCO: 'added' instead of 'found', since it is added before we should stop.
%FRANCISCO: $X$ being ideally first is different from the other ideals since $X$ first was about a choice. topic already discussed that it should be clear that $X$ first is what we want to make the setting interesting while the rest is then a consequence of this.
%FRANCISCO: 'as noted in' instead of 'as discussed in' since we make just a brief comment in Section \ref{subsec:propsmethods} 'Moreover, the stopping criterion is lost.'

\subsection{Required quantities}
\label{subsec:requiredquantities}

In order to be able to determine the order in which the features are selected by the different methods, we have to derive expressions, depending on $k$ and $k'$, needed for evaluating the corresponding objective functions. We need: the MI between each candidate feature and the class, the MI between different pairs of candidate features, and the class-conditional MI between pairs of candidate features. The computation of these quantities require obtaining the univariate entropies of the candidate features and of the class. The derivations of such expressions are provided in Appendix  \ref{app:A1} %A1 of \cite{ourpaper.arXiv.2017} %
 and their final forms are available in Tables \ref{tab:entropyvariables} to \ref{tab:miinputinputcond}.

\begin{paragraph}{Univariate entropies}

We start with a summary of the univariate entropies of the different features and of the class presented in Table \ref{tab:entropyvariables}. The corresponding derivations can be found in Appendix \ref{app:A1}.%A1 of \cite{ourpaper.arXiv.2017}. % 

\begin{table}[t!]
\centering
\caption{Entropies of the class, $C_k$, and the input features.}
\footnotesize
\begin{tabular}{l ccccc}
\hline
 & $C_k$ & $X$ & $X-k'Y$ & $Z$ & $X_{{\rm disc}}$\\
\midrule
%\midrule
Entropy & $\ln(2)$ & $\frac{1}{2}\ln(2\pi e)$ & $\frac{1}{2}\ln(2\pi e (1+k^{'2}))$ & $\ln(2)$ & $\ln(2)$\\
\hline
\end{tabular}
\label{tab:entropyvariables}
\end{table}

\end{paragraph}

\begin{paragraph}{MI between input features and the class}

As for the MI between input features and the class, they are provided in Table \ref{tab:miinputclass}. The corresponding derivations can be found in Appendix \ref{app:A2}.%A2 of \cite{ourpaper.arXiv.2017}. %

\begin{table}[t!]
\centering
\caption{MI between each input feature and the class, $C_k$.}
\footnotesize
\label{tab:miinputclass}
\begin{threeparttable}
\begin{tabular}{c c}
\hline
$A$ & ${\rm MI}(C_k,A)$\\
\midrule
%\midrule
$X$ & $\frac{1}{2}\ln(2\pi e)-\frac{1}{2}\sum_{j=0}^1 \int_{\mathbb{R}} f_{X|C_k=j}(u) \ln f_{X|C_k=j}(u) du$ \\[0.1cm]
%\midrule
$X-k'Y$ & $\frac{1}{2}\ln(2\pi e (1+k^{'2}))-\frac{1}{2}\sum_{j=0}^1 \int_{\mathbb{R}} f_{X-k'Y|C_k=j}(u) \ln f_{X-k'Y|C_k=j}(u) du$  \tnote{a}\\[0.1cm]
%\midrule
$Z$ &  $0$ \\[0.1cm]
%\midrule
$X_{{\rm disc}}$ & $2\ln(2)+\frac{\arctan{k}}{\pi}\ln(\frac{\arctan{k}}{2\pi})+(1-\frac{\arctan{k}}{\pi})\ln(\frac{1}{2}-\frac{\arctan{k}}{2\pi})$ \tnote{b}\\[0.1cm]
\bottomrule
\end{tabular}
\begin{tablenotes}
\item [a] $X|C_k=j\sim {\rm SN}(0,1,\frac{(-1)^{j+1}}{k})$, $j=0,1$.
\item [b] $X-k'Y|C_k=j\sim {\rm SN}(0,\sqrt{1+k'^2},(-1)^{j+1}(\frac{1-kk'}{k+k'}))$, $j=0,1$.
\end{tablenotes}
\end{threeparttable}
\end{table}

It must be added that the notation $W\sim {\rm SN}(\mu,\sigma,\alpha)$ (where $\mu\in \mathbb{R}$, $\sigma>0$, and $\alpha\in \mathbb{R}$), means that the random variable $W$ follows a skew-normal distribution, so that it has probability density function \citep{MR877720} 
\begin{equation}
f_W(w)=\frac{2}{\sigma}\phi(\frac{w-\mu}{\sigma})\Phi(\frac{\alpha(w-\mu)}{\sigma}), \quad w\in \mathbb{R},
\label{eq:skew}
\end{equation}
where $\Phi(z)$ denotes the value of the standard normal distribution function at point $z$, while $\phi(z)$ denotes the probability density function, for the same distribution, also at $z$.

\end{paragraph}

\begin{paragraph}{MI between pairs of input features}

As for the MI between the different pairs of input features, they are provided in Table \ref{tab:miinputinput}. The corresponding derivations can be found in Appendix  \ref{app:A3}.%A3 of \cite{ourpaper.arXiv.2017}. %

\begin{table}[t!]
\centering
\caption{MI between pairs of input features.}
\footnotesize
\label{tab:miinputinput}
\begin{threeparttable}
\begin{tabular}{cc c}
\hline
$A$ & $B$ & ${\rm MI}(\cdot,\cdot)$ \\[0.4ex]
\midrule
%\midrule
$X$ & $X-k'Y$ & $\frac{1}{2}\ln(1+\frac{1}{k'^2})$ \\[0.1cm]
%\arrayrulecolor{black!30}\midrule
$X$ & $X_{{\rm disc}}$ & $\ln(2)$ \\[0.1cm]
%\midrule
$X-k'Y$ & $X_{{\rm disc}}$ & $\frac{1}{2}\ln(2\pi e)-\frac{1}{2}\sum_{j=0}^1 \int_{\mathbb{R}} f_{X|C_{k'}=j}(u) \ln f_{X|C_{k'}=j}(u) du$ \tnote{a}\\[0.1cm]
%\midrule
$Z$ & $B$ & $0$, \hspace{1ex} $B\in \{X,X-k'Y,X_{{\rm disc}}\}$ \\[0.1cm]
\arrayrulecolor{black}\hline%\bottomrule
\hline
\end{tabular}
\begin{tablenotes}
\item [a] $X|C_{k'}=j\sim {\rm SN}(0,1,\frac{(-1)^{j+1}}{k'})$, $j=0,1$.
\end{tablenotes}
\end{threeparttable}
\end{table}

\end{paragraph}

\begin{paragraph}{Class-conditional MI between pairs of input features}

As for the class-conditional MI between the different pairs of input features, they are provided in Table \ref{tab:miinputinputcond}. The corresponding derivations can be found in Appendix  \ref{app:A4}.%A4 of \cite{ourpaper.arXiv.2017}. %

\begin{table}[t!]
\centering
\caption{Class-conditional MI between pairs of input features.}
\footnotesize
\label{tab:miinputinputcond}
\begin{threeparttable}
\begin{tabular}{cc c}
\hline
$A$ & $B$ & ${\rm MI}(\cdot,\cdot|C_k)$ \\[0.4ex]
\midrule
%\midrule
$X$ & $X-k'Y$ &\begin{tabular}{c}$\frac{1}{2}\sum_{j=0}^1 \int_{\mathbb{R}} f_{X|C_k=j}(u) \ln f_{X|C_k=j}(u) du+$\\[0.1cm]
                                 $\frac{1}{2}\sum_{j=0}^1 \int_{\mathbb{R}} f_{X-k'Y|C_k=j}(u) \ln f_{X-k'Y|C_k=j}(u) du-$\\[0.1cm]
                                 $(1+\ln{\pi}+\ln{k'})$\tnote{a,b}\\[0.1cm] \end{tabular}\\ 
%\midrule
\arrayrulecolor{black!30}\midrule
$X$ & $X_{{\rm disc}}$ & $-\frac{\arctan{k}}{\pi}\ln(\frac{\arctan{k}}{\pi})-(1-\frac{\arctan{k}}{\pi})\ln(1-\frac{\arctan{k}}{\pi})$ \\[0.1cm]
\midrule
$X-k'Y$ & $X_{{\rm disc}}$ & $\frac{1}{2}\sum_{j=0}^1 \int_{\mathbb{R}} f_{X-k'Y|C_k=j}(u) \ln f_{X-k'Y|C_k=j}(u) du-h(X-k'Y|X_{{\rm disc}},C_k)$ \tnote{b,c}\\[0.1cm]
\midrule
$Z$ & $B$ & $0$, \hspace{1ex} $B\in \{X,X-k'Y,X_{{\rm disc}}\}$ \\[0.1cm]
\arrayrulecolor{black}\hline%\bottomrule
\end{tabular}
\begin{tablenotes}
\item [a] $X|C_{k'}=j\sim {\rm SN}(0,1,\frac{(-1)^{j+1}}{k'})$, $j=0,1$.
\item [b] $X-k'Y|C_k=j\sim {\rm SN}(0,\sqrt{1+k'^2},(-1)^{j+1}(\frac{1-kk'}{k+k'}))$, $j=0,1$.
\item [c] $h(X-k'Y|X_{{\rm disc}},C_k)$ in  \eqref{eq:giantentropy}.%Appendix A4 of \cite{ourpaper.arXiv.2017}. %%%
\end{tablenotes}
\end{threeparttable}
\end{table}

\end{paragraph}

\subsection{Applying the different feature selection methods}
\label{subsec:ordervariables}

%RV Reviewed
We now present the results of applying the various feature selection methods to the distributional setting. The feature ordering will be discussed for different values of $k$ and $k'$. Taking the objectives of this study into consideration, for fixed $k'$, the most interesting case is that where $X$ alone leaves the largest possible amount of information undetermined about the class; this leads to $X-k'Y$ having the most importance in the explanation of the class, making the error of not choosing it after $X$ the worst possible. According to the performance metric introduced in Section \ref{subsec:otpm}, we want to choose a value of $k$ that leads to a large MBR when $X$ is the only selected feature, ${\rm MBR}(C_k,\{X\})$, which is given by (see \ref{app:B}):% Appendix B of \cite{ourpaper.arXiv.2017}): % 
\begin{equation}
{\rm MBR}(C_k,\{X\})=\frac{\arctan{k}}{\pi}.
\label{eq:otpmx}
\end{equation}
Since ${\rm MBR}(C_k,\{X\})$ is an increasing function of $k$, we want $k$ to be as large as possible, under the restriction \eqref{eq:xfirst}. We consider $k=\tan\left((\pi-\arctan{k'}-10^{-6})/2\right)$.

%RV Reviewed
Given that, in our setting, the features $X$ and $X-k'Y$ fully explain the class, and that, according to Theorem \ref{th:tpmallinfo}, ${\rm MBR}(C_k,\{X,X-k'Y\})=0$, it makes sense to take the MBR based on the first two selected features as performance measure for characterizing each forward feature selection method. This measure is denoted by ${\rm MBR}_2$.
%FRANCISCO: 'and that' should be replaced with 'so that' because it is naturally a reason itself that there are two features that explain the class, but in terms of MBR, we want to see the MBR being 0 as a consequence of the first idea, so that we then justify that we can stop in the second step.
%FRANCISCO: relate this with the stopping criterion? stopping criterion being reached implies MBR 0 in sequence of the mentioned theorem.

%RV Reviewed
We will carry out two different studies. In the first one, we concentrate on the methods that ignore complementarity; i.e. MIFS, mRMR, and maxMIFS, and study the feature ordering as a function of $k'$. The purpose of this study is to highlight the consequences of ignoring complementarity. In the second study, we compare the feature ordering of all methods under analysis, for fixed $k$ and $k'$. The goal is to provide examples showing wrong decisions made by the various methods.
%FRANCISCO: repetitive idea that we want to focus on complementarity ignored. twice above.
%FRANCISCO: 'as a function of $k'$' implies that we should also state 'for fixed $k'$' only.
%FRANCISCO: last sentence should be related with the remaining drawbacks, since we mentioned a particular one in the case of the first study.

\subsubsection{The consequences of ignoring complementarity}

%RV Reviewed	
In this section, we address the consequences of ignoring complementarity, as a function of $k'$; thus, we concentrate on methods MIFS ($\beta=1$), mRMR, and maxMIFS. The motivation for scanning $k'$ is that it provides different levels of association between the already selected feature $X$ and the candidate feature $X-k'Y$. For small values of $k'$, $X$ and $X-k'Y$ are strongly associated, and the level of association decreases as $k'$ increases.
%FRANCISCO: 'levels of association' seems like a name that should be already interpreted before, but I agree not to use the word redundancy also directly here, making this still a less technical introduction/motivation. in particular, maybe too much detail to refer to '; recall Tables \ref{tab:miinputinput} and \ref{tab:miinputinputcond}' if the specific values involved are not mentioned, for instance.
%FRANCISCO: we motivated considering $X-k'Y$ instead of $Y$ in this setting with the argument of 'increasing flexibility in exploring different scenarios' but this is actually the reason originally. the fact that this drawback becomes really clear. the rest, concerning the remaining drawbacks, just came naturally when exploring the setting in more detail, in fact.
%FRANCISCO: this motivation can be more connected with the arguments below about the wrong ordering which are clearly associated with such large values of ${\rm MI}(X-k'Y,X)$ and ${\rm MI}(X-k'Y,X|C_k)$; I did not do it since I always add too much connections even when they are clear, but I would say that it is interesting to make such connections in this case.

%RV Reviewed
Scanning $k'$ from $0$ to $+\infty$ defines three regions, each corresponding to a specific feature ordering. This is shown in Figure \ref{fig:diffparameters}, where $k'$ was scanned with a step size of $0.01$, starting at $0.01$. To complete the discussion, we include the objective function values in the second step of the algorithms in Figure \ref{fig:diffofs}, and the corresponding ${\rm MBR}_2$ values in Figure \ref{fig:oerk}. Note that, at this step, the objective function takes the same value for all candidate features and methods and ${\rm MBR}_2$ takes the same value for all methods.
%FRANCISCO: we should separate in the last sentence idea that, for a fixed candidate, all methods evaluate the same, instead of adding everything about the methods and candidates being the same for all methods. also, we should write 'in particular', the MBR is the same, since it is a consequence because the features that are chosen have the same order as a result.

\begin{figure}[t!]
\centering
\begin{tikzpicture}[domain=-6:6]
\filldraw[gray!25](3,0)--(3+2*0.575,2)--(3,2)--cycle;
\filldraw[gray!50](3,0)--(3+2*0.575,2)--(3+2,2)--(3+2,2/2.115)--cycle;
\filldraw[gray!75](3,0)--(5,0)--(5,2/2.115)--cycle;
\draw[smooth,thick,domain=3-2*0.575:3+2*0.575]plot(\x,{(1/0.575)*(\x-3)}) node[right] {$k'\approx 0.575$};
\draw[smooth,thick,domain=1:5]plot(\x,{(1/2.115)*(\x-3)}) node[right] {$k'\approx 2.115$};
\draw[->] (1,0) -- (5,0) node[right] {$x$};
 \draw[->] (3,-2) -- (3,2) node[above] {$y$};
 \node[] at (3+0.3,1.2){$\boldsymbol{(a)}$};
  \node[] at (3+1.4,1.4){$\boldsymbol{(b)}$};
   \node[] at (3+1.25,0.6/2.12){$\boldsymbol{(c)}$};
   
\filldraw[gray!25](-3,0)--(-3+2*0.575,2)--(-3,2)--cycle;
\filldraw[gray!50](-3,0)--(-3+2*0.575,2)--(-3+2,2)--(-3+2,2/2.115)--cycle;
\filldraw[gray!75](-3,0)--(-1,0)--(-1,2/2.115)--cycle;
\draw[smooth,thick,domain=-3-2*0.565:-3+2*0.565]plot(\x,{(1/0.565)*(\x+3)}) node[right] {$k'\approx 0.565$};
\draw[smooth,thick,domain=-5:-1]plot(\x,{(1/2.115)*(\x+3)}) node[right] {$k'\approx 2.115$};
\draw[->] (-5,0) -- (-1,0) node[right] {$x$};
 \draw[->] (-3,-2) -- (-3,2) node[above] {$y$};
 \node[] at (-3+0.3,1.2){$\boldsymbol{(a)}$};
  \node[] at (-3+1.4,1.4){$\boldsymbol{(b)}$};
   \node[] at (-3+1.25,0.6/2.12){$\boldsymbol{(c)}$};
  
\end{tikzpicture}
\caption{Regions associated with a specific ordering of the features, defined by the values of $k'$. In this representation, $k'$ is defined through the line $x-k'y=0$. For region $(a)$, the ordering is $\{X,Z,X_{{\rm disc}},X-k'Y\}$; for $(b)$, it is $\{X,Z,X-k'Y,X_{{\rm disc}}\}$; and for $(c)$, the ordering is already correct since $X$ is chosen first and $X-k'Y$ second. For methods MIFS and maxMIFS (resp. mRMR), represented in the left (resp. right), $(a)$ is associated with $0<k'<0.575$ (resp. $0<k'<0.565$) and $(b)$ with $0.575<k'<2.115$ (resp. $0.565<k'<2.115$). Region $(c)$ is associated with $k'>2.115$ for the three methods.}
%FRANCISCO: we removed such 'resp.' and just left the rest in a similar previous case.
\label{fig:diffparameters}
\end{figure}
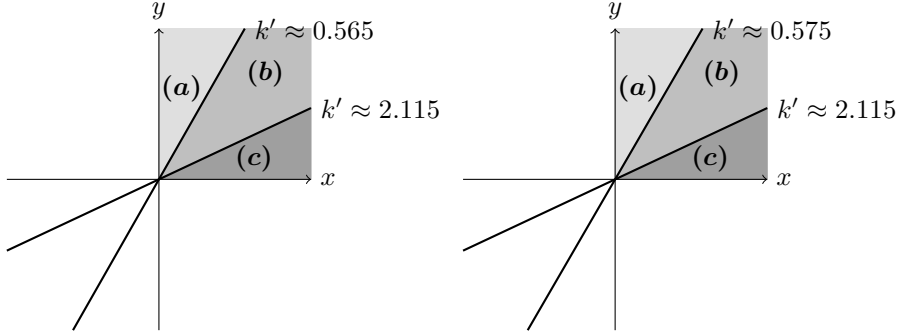

%RV Reviewed
For small values of $k'$, smaller than $0.565$ for MIFS and maxMIFS, and than $0.575$ for mRMR---region (a)---the feature ordering is  $X$, $Z$, $X_{{\rm disc}}$, $X-k'Y$. In this region, $X-k'Y$ is chosen last due to a large inter-feature redundancy with $X$. As shown in Figure \ref{fig:diffofs}, in this region, the objective function values of $X-k'Y$ and $X_{{\rm disc}}$ at the second step are negative and smaller than that of $Z$, which explains why $Z$ is selected in second place. At the third step, the objective function values of $X-k'Y$ and $X_{{\rm disc}}$ are exactly the same as in the second step for MIFS and maxMIFS, and only slightly different for mRMR. Thus, in this region, the objective functions of $X-k'Y$ are more negative than those of $X_{{\rm disc}}$, which explains why $X_{{\rm disc}}$ is selected in third place.
%FRANCISCO: ordering should be in a set as in the caption of the figure, or also OK this way? maybe change in the caption actually. it is an ordering, not a set.
%FRANCISCO: add that value at $Z$ is simply 0? it is kind of implicit. note that this has no meaning when there is no lower bound in this value.
%FRANCISCO: when talking about third step, write 'would be the same' instead of 'are the same' since we do not provide such representations?

%RV Reviewed
For intermediate values of $k'$, smaller than $2.115$, and larger than $0.565$ for MIFS and maxMIFS and than $0.575$ for mRMR---region (b)---the feature ordering is $X$, $Z$, $X-k'Y$, $X_{{\rm disc}}$. In this region, the objective functions of $X-k'Y$ are larger than those of $X_{{\rm disc}}$, but smaller than those of $Z$.
%FRANCISCO: same comment about the introduction of '---'.

%RV Reviewed
For large values of $k'$, larger than $2.115$---region (c)---the correct feature ordering is achieved since $X-k'Y$ is selected in second place. Note that in this region, there are two possible orderings for $Z$ and $X_{{\rm disc}}$, but this issue is not relevant for our discussion.
%FRANCISCO: same comment about the introduction of '---'.
%FRANCISCO: last sentence should be complemented later in the second study where in fact we note that it is relevant that $Z$ and $X_{{\rm disc}}$ do not take the same, the minimum, value. also, we do not need to be so specific about the two possible orderings after we mentioned there are three in total; we can just cut the issue by stating that the ordering of the other two is not relevant for the discussion.
%FRANCISCO: in the context of the comment above, recall that for these methods that ignore complementarity, even if $X-k'Y$ is correctly ordered in some cases, the value of the objective function does not provide any reference that we have a fully relevant feature so that we would not have a way to detect that the algorithm should stop. basically, even in such cases in region $(c)$ we would not detect that the method should stop. additionally, the value of the objective function at $Z$ and $X_{{\rm disc}}$ is not the minimum (this comment only comes later in the context of Figure \ref{fig:diffofs}) so that this would also not allow deducing this from the fact that there are only irrelevant and redundant features left. SO even region $(c)$ has a strong limitation! we should not say that it has the correct ordering. even not having the values of the objective functions here, the ordering suggests that they are different. RELATED to what we have in next paragraph.

%RV Reviewed
The problem of these methods in regions (a) and (b) is due to the lack of the class-relevant redundancy term in their objective functions, which expresses the complementarity effects. In fact, the association between $X$ and $X-k'Y$, as measured by ${\rm MI}(X-k'Y,X)$, grows significantly as $k'$ approaches $0$, but so does the class-relevant redundancy, which is given by ${\rm MI}(X-k'Y,X|C_k)$. Ignoring the compensation given by the latter term leads to objective function values that can take negative values. Moreover, this explains why the lower bound of $0$ from Theorem \ref{th:boundsgeneral}, associated with the target objective function, is lost for these methods. Also, in contradiction with the good properties of the target objective function, the objective functions of these methods do not take the same (minimum) values at $X_{{\rm disc}}$ and $Z$. The ${\rm MBR}_2$ values of these methods (see Figure \ref{fig:oerk}) confirm that the performance is very poor in regions (a) and (b): it is above $0.4$ in region (a) and above $0.3$ in region (b). These results show that ignoring complementarity is a severe drawback that can lead to gross errors in the feature selection process.
%FRANCISCO: concerning values of ${\rm MI}(X-k'Y,X)$ and ${\rm MI}(X-k'Y,X|C_k)$, add '; recall Tables \ref{tab:miinputinput} and \ref{tab:miinputinputcond}.
%FRANCISCO: 'Moreover, this' should simply be '; this', since the idea is just the same.

\begin{figure}[t!]
\centering
\includegraphics[scale=0.5]{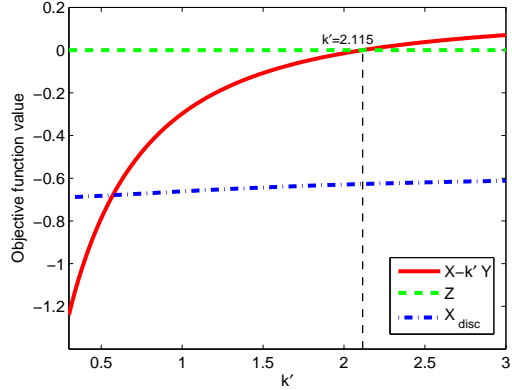}
\caption{Evaluation of the objective function for the different candidate features in the second step of the algorithms (MIFS, mRMR, and maxMIFS) depending on the value of $k'$.}
\label{fig:diffofs}
\end{figure}
%FRANCISCO: add the figure immediately after the previous? all comments above concern the two.

Figure \ref{fig:diffofs} also shows that results analogous of Theorem \ref{th:ofzeroredundCMIM} do not hold for MIFS, mRMR, and maxMIFS. In fact, the objective function at $X_{{\rm disc}}$, a redundant feature, is not necessarily the minimum; in particular, this happens for small values of $k'$, where the objective function at $X-k'Y$ takes lower values than that at $X_{{\rm disc}}$.

\begin{figure}[t!]
\centering
\includegraphics[scale=0.5]{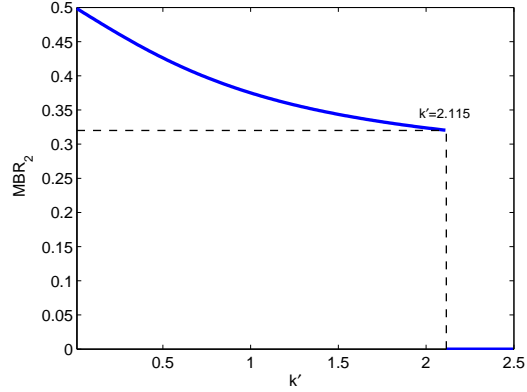}
\caption{${\rm MBR}_2$ for the different algorithms (MIFS, mRMR, and maxMIFS) depending on the value of $k'$. For $k'<2.115$, ${\rm MBR}_2=\frac{\pi-\arctan{k'}-10^{-6}}{2\pi}$ using $k=\tan\left((\pi-\arctan{k'}-10^{-6})/2\right)$ and \eqref{eq:otpmx}; for $k'>2.115$, the right second feature is chosen, $X-k'Y$, so that ${\rm MBR}_2=0$.}
\label{fig:oerk}
\end{figure}

\subsubsection{Feature ordering}
%FRANCISCO: add 'for all representative methods'?

%RV Reviewed	
We now compare the feature ordering of all methods, for fixed $k$ and $k'$. In order to place the methods in a challenging situation, we use $k$ and $k'$ values that maximize the MBR. Per \eqref{eq:otpmx} we need to maximize $k$ and per \eqref{eq:xfirst} we need to minimize $k'$. We choose for $k'$ the first value of the grid used in the context of Figure \ref{fig:diffparameters}, i.e. $k'=0.01$. In this case, $k=\tan(\frac{\pi - \arctan{k'}-10^{-6}}{2})=199.985$ and ${\rm MBR}(C_k,\{X\})\approx 0.498$. Recall that, since \eqref{eq:xfirst} holds, and therefore ${\rm MI}(C_k,X)>{\rm MI}(C_k,X-k'Y)$, $X$ is always selected in first place. 
%FRANCISCO: 'all representative methods' instead of 'all methods'?
%FRANCISCO: rule for $k'$ is fixed while $k$ comes from the relation we defined previously, so we should say 'fixed $k'$ (and $k$)' I would say.
%FRANCISCO: idea of maximizing MBR was already explained, but only in $k$. 'place the methods in a challenging situation' should be related with this; it is not really about them being in a challenging situation since it should be a case where ordering well should be particularly easy; somehow the opposite. written differently in the thesis.

%RV Reviewed
Table \ref{tab:varsel} shows the feature ordering and the associated values of ${\rm MBR}_2$. The ordering of features relates to the concrete values of the terms that compose the objective functions. These are provided in Table \ref{tab:miclass}, which contains the values of MI between each candidate feature and the class; Table \ref{tab:miinput}, which contains the values of MI between the different features; and Table \ref{tab:micond}, which contains the values of the class-conditional MI between the different input features. Note that, in Table \ref{tab:miclass}, ${\rm MI}(C_k,X)$, ${\rm MI}(C_k,X-k'Y)$, and ${\rm MI}(C_k,X_{{\rm disc}})$ are all shown as taking approximately the value $0$, but actually ${\rm MI}(C_k,X)$ is the largest one.
%FRANCISCO: should we justify further that the value is the largest because it is always larger than that for $X_{{\rm disc}}$, while we imposed a condition for it to be larger than that of $X-k'Y$; recall \eqref{eq:xfirst}? In principle, both facts should be clear by now. and they were mentioned when noting that $X$ is chosen first.

\begin{table}[t!]
\centering
\caption{Feature ordering and corresponding ${\rm MBR}_2$, for $k=199.985$ and $k'=0.01$.}
\footnotesize
\begin{tabular}{c ccccc}
\hline
Methods & \multicolumn{4}{c}{Order of feature selection} & ${\rm MBR}_2$\\
\midrule
MIM &  $X$ & $X-k'Y$ & $X_{{\rm disc}}$ & $Z$ & $0$\\
MIFS ($\beta=1$) & $X$ & $Z$ & $X_{{\rm disc}}$ & $X-k'Y$ & $0.498$\\[0.1cm]
mRMR   & $X$ & $Z$ & $X_{{\rm disc}}$ & $X-k'Y$ & $0.498$\\[0.1cm]
maxMIFS   & $X$ & $Z$ & $X_{{\rm disc}}$ & $X-k'Y$ & $0.498$\\[0.1cm]
CIFE & $X$ & $X-k'Y$ & $X_{{\rm disc}}$ & $Z$ & $0$\\[0.1cm]
JMI &  $X$ & $X-k'Y$ & $X_{{\rm disc}}$ & $Z$ & $0$\\[0.1cm]
CMIM & $X$ & $X-k'Y$ & $Z$/$X_{{\rm disc}}$ & $X_{{\rm disc}}$/$Z$ & $0$\\[0.1cm]
JMIM & $X$ & $X-k'Y$ & $X_{{\rm disc}}$ & $Z$ & $0$\\[0.1cm]
\bottomrule
\end{tabular}
\label{tab:varsel}
\end{table}

\begin{table}[t!]
\centering
\caption{MI between the class and each input feature, for $k=199.985$ and $k'=0.01$.}
\footnotesize
%\begin{tabular}{c m{1cm}m{1cm}m{1cm}m{1cm}m{1cm}m{1cm}m{1cm}m{1cm}m{1cm}m{1cm}}
\begin{tabular}{c cccc}
\hline
 & $X$ & $X-k'Y$ & $Z$ & $X_{{\rm disc}}$\\
\midrule
${\rm MI}(\cdot,C_k)$ & $\approx 0$ & $\approx 0$ & $0$ & $\approx 0$\\
\bottomrule
\end{tabular}
\label{tab:miclass}
\end{table}

\begin{table}[t!]
\centering
\caption{MI between pairs of input features, for $k=199.985$ and $k'=0.01$.}
\footnotesize
%\begin{tabular}{c m{1cm}m{1cm}m{1cm}m{1cm}m{1cm}m{1cm}m{1cm}m{1cm}m{1cm}m{1cm}}
\begin{tabular}{c ccc}
\hline
${\rm MI}(\cdot,\cdot)$ & $X$ & $X-k'Y$ & $Z$ \\
\midrule
$X-k'Y$    & $4.605$ & & \\
$Z$    & $0$ & $0$ & \\
$X_{{\rm disc}}$    & $0.693$ & $0.686$ & $0$ \\
\bottomrule
\end{tabular}
\label{tab:miinput}
\end{table}

\begin{table}[t!]
\centering
\caption{Class-conditional MI between pairs of input features, for $k=199.985$ and $k'=0.01$.}
\footnotesize
%\begin{tabular}{c m{1cm}m{1cm}m{1cm}m{1cm}m{1cm}m{1cm}m{1cm}m{1cm}m{1cm}m{1cm}}
\begin{tabular}{c ccc}
\hline
${\rm MI}(\cdot,\cdot|C_k)$ & $X$ & $X-k'Y$ & $Z$ \\
\midrule
$X-k'Y$    & $5.298$ & & \\
$Z$    & $0$ & $0$ & \\
$X_{{\rm disc}}$    & $0.693$ & $0.689$ & $0$ \\
\bottomrule
\end{tabular}
\label{tab:micond}
\end{table}

%RV Reviewed
Table \ref{tab:varsel} shows that all methods, except MIFS, mRMR, and maxMIFS, achieve an ${\rm MBR}_2$ of 0. However, the third step of the algorithm is only completely correct for CMIM. In fact, it should be equally likely to choose $Z$ or $X_{{\rm disc}}$, but CIFE, JMI, JMIM, and MIM select $X_{{\rm disc}}$ first.
MIM suffers from redundancy ignored drawback. The fact that the selection is correct at the first two steps of the feature selection process is meaningless; it only happens because ${\rm MI}(C_k,X-k'Y)$ is slightly larger than ${\rm MI}(C_k,X_{{\rm disc}})$.
%FRANCISCO: idea is that only this basic relevance is considered, which is why the first two are correctly assigned while the third is already also wrong! MBR is 0 but third is wrongly selected in fact!
%FRANCISCO: 'In particular, in the same context from CIFE and JMI, when the candidate feature is $X_{{\rm disc}}$, and $X$ and $X-k'Y$ were selected, the fact that $X_{{\rm disc}}$ is redundant given $\{X\}$ becomes completely undetectable. The objective function takes the value ${\rm MI}(C_k,X_{{\rm disc}})\approx 0$; recall Table \ref{tab:miclass}; which is close to $0$ just because of how $k'$ and consequently $k$ were chosen. In fact, what matters is that it is not $0$, which is the value of the objective function again, in that same step, at $Z$.' this is lacking! but it cannot be stated this way, with CIFE and JMI as reference, because now MIM comes before the others methods; additionally, we make a final paragraph stating that these methods have this problem, so this should be OK without being added.
%FRANCISCO: should we add that again we say that both ${\rm MI}(C_k,X-k'Y)$ and ${\rm MI}(C_k,X_{{\rm disc}})$ are approximately 0 in Table \ref{tab:miclass} but we know which one is the largest? similar to how we said the term for $X$ is even larger than these two.

%RV Reviewed
The methods that ignore complementarity, i.e. MIFS, mRMR, and maxMIFS, fail at the second step of the feature selection process, by not selecting $X-k'Y$. For all methods, the objective function is $0$ for $Z$, ${\rm MI}(C_k,X-k'Y)-{\rm MI}(X-k'Y,X)=-4.605$ for $X-k'Y$, and ${\rm MI}(C_k,X_{{\rm disc}})-{\rm MI}(X_{{\rm disc}},X)=-0.693$ for $X_{{\rm disc}}$, which explains why $Z$ is selected at this step. Adding the class-relevant redundancy term to the objective functions, would make them take the value $\ln(2)$ for $X-k'Y$ and $0$ for $X_{{\rm disc}}$, leading to the selection of $X-k'Y$. In fact, the class-relevant redundancy term is ${\rm MI}(X-k'Y,X|C_k)=5.298$ for $X-k'Y$, and ${\rm MI}(X_{{\rm disc}},X|C_k)=0.693$ for $X_{{\rm disc}}$. Note that $\ln(2)$ is precisely the maximum of the target objective function, which is achieved for fully relevant features (the case of $X-k'Y$), and the minimum is $0$, achieved by irrelevant and redundant features (the cases of $Z$ and $X_{{\rm disc}}$). Thus, accounting for the class-relevant redundancy compensates the potentially large negative values associated with the inter-feature redundancy.
%FRANCISCO: relate the sentence about the maximum value with $\ln(2)=H(C_k)$ and Theorem \ref{th:propsstandard} (reference is the target objective function since this is the second step and so even with approximations it would still be the same). include something like 'In coherence with Theorems \ref{th:boundsgeneral} and \ref{th:propsstandard}.1, this must be the value $H(C_k|X)$, which should in fact be $\ln(2)$ since $X$ does not contain much information about the class---analytically, $H(C_k|X)=H(C_k)-{\rm MI}(C_k,X)=\ln(2)-0$.'
%FRANCISCO: we should relate this value of $k'$ with region (a) in Figure \ref{fig:diffparameters}, while the idea of why this happens was also discussed in the context of Figure \ref{fig:diffofs}. 
%FRANCISCO: 'Furthermore, the objective functions at $Z$ and $X_{{\rm disc}}$ take different values, as already discussed for general $k'$ in the context of Figure \ref{fig:oerk}.' idea that these two variables are ordered differently also reflects the drawback. this should be also added, just as it was in the study of reference.

%RV Reviewed
With the exception of CMIM, the methods that do not ignore complementarity, fail at the third step of the feature selection process by preferring $X_{{\rm disc}}$ over $Z$, as shown in Table \ref{tab:varsel}. 
%FRANCISCO: 'since their objective functions take larger values at .. than at .., implying that $X_{{\rm disc}}$ is preferred over $Z$' instead of 'preferring ...'?
 
%RV Reviewed
As discussed in Section \ref{sec:methods}, CIFE suffers from overscaled redundancy drawback. At the third step of the feature selection process, after selecting $X$ and $X-k'Y$, the objective function for candidate feature $X_i$ is
\begin{equation}
{\rm MI}(C_k,X_i) - {\rm MI}(X_i,X) + {\rm MI}(X_i,X|C_k) - {\rm MI}(X_i,X-k'Y) + {\rm MI}(X_i,X-k'Y|C_k),
\label{eq:cifeobj}
\end{equation}
while the associated target objective function is
\begin{equation}
{\rm MI}(C_k,X_i) - {\rm MI}(X_i,\{X,X-k'Y\}) + {\rm MI}(X_i,\{X,X-k'Y\}|C_k).
\label{eq:target}
\end{equation}
%FRANCISCO: if we add the reference to the section where the drawback was first introduced, the same should be done for the methods that ignore complementarity in the previous paragraph.

%RV Reviewed
Both objective functions take the value $0$ for the candidate feature $Z$. For the candidate $X_{{\rm disc}}$, the target objective function \eqref{eq:target} can be written as
\begin{equation}
{\rm MI}(C_k,X_i) - {\rm MI}(X_i,X) + {\rm MI}(X_i,X|C_k),
\label{eq:targetreduced}
\end{equation}
given that ${\rm MI}(X_{{\rm disc}},\{X,X-k'Y\})={\rm MI}(X_{{\rm disc}},X)$ and ${\rm MI}(X_{{\rm disc}},\{X,X-k'Y\}|C_k)={\rm MI}(X_{{\rm disc}},X|C_k)$. Concerning the first condition, note that ${\rm MI}(X_i,\boldsymbol{S})={\rm MI}(X_{{\rm disc}},\{X,X-k'Y\})=H(X_{{\rm disc}})-H(X_{{\rm disc}}|X,X-k'Y)=H(X_{{\rm disc}})-H(X_{{\rm disc}}|X)={\rm MI}(X_{{\rm disc}},X)$, since $H(X_{{\rm disc}}|X)=0$ implies that $H(X_{{\rm disc}}|X,X-k'Y)=0$ also, by \eqref{eq:infodonthurtgen}; a similar reasoning can be used to show that the second condition also holds.

%RV Reviewed
Thus, in the case of $X_{{\rm disc}}$, we see that, when comparing the objective function of CIFE, given by \eqref{eq:cifeobj}, with the target objective function, given by \eqref{eq:targetreduced}, CIFE includes an extra part with two terms, $-{\rm MI}(X_i,X-k'Y) + {\rm MI}(X_i,X-k'Y|C_k)$, which is responsible for increasing the redundancy scale. In our case, the extra term takes the value $- {\rm MI}(X_{{\rm disc}},X-k'Y) + {\rm MI}(X_{{\rm disc}},X-k'Y|C_k)=-0.686+0.689=0.003$; recall Tables \ref{tab:miinput} and \ref{tab:micond}. This is exactly the value of the objective function at $X_{{\rm disc}}$, since the remaining terms sum to $0$, which explains why $X_{{\rm disc}}$ is selected before $Z$. To see that the remaining terms sum to $0$, note that these terms correspond exactly to the evaluation of the target objective function OF', and recall that the target objective function value must be $0$ for a redundant feature.
%FRANCISCO: last sentence is really necessary? usually I am the one who says we should add things but not sure in this case. it is in fact good to add it but seems a bit long.

%RV Reviewed
The amount of overscaling is relatively modest in this example but, clearly, the problem gets worse as $\boldsymbol{S}$ increases, since more terms are added to the objective function. We also note that, while in this case the objective function has been overvalued, it could have equally been undervalued. This fact together with the overscaling problem is what makes the objective function of CIFE not bounded, neither from below nor from above.
%FRANCISCO: 'overscaling effect' instead of 'amount of overscaling'?
%FRANCISCO: use 'overestimating' instead of 'overvaluing' in the sentence about the objective function to avoid confusion with the drawback, and emphasizing more the idea that it is an approximation in that case.

%RV Reviewed
JMI tried to overcome the problem of CIFE by introducing the scaling factor $1/|\boldsymbol{S}|$ in the TMI approximation. However, as discussed in Section \ref{sec:methods}, this leads to redundancy undervalued drawback. At the third step of the feature selection process, the objective function of JMI is
\begin{equation*}
{\rm MI}(C_k,X_i) - \frac{1}{2} {\rm MI}(X_i,X) + \frac{1}{2} {\rm MI}(X_i,X|C_k) - \frac{1}{2} {\rm MI}(X_i,X-k'Y) + \frac{1}{2} {\rm MI}(X_i,X-k'Y|C_k)
\end{equation*}
for the candidate $X_i$. Its value equals $0$ for the candidate $Z$, but for candidate $X_{{\rm disc}}$ it equals $0-0.5\times 0.693+0.5\times 0.693-0.5\times 0.689+0.5\times 0.686=0.0015$, which explains why $X_{{\rm disc}}$ is selected before $Z$.
%FRANCISCO: referring to the way JMI tries to solve the problem is too much detail when we omitted comments about TMI in CIFE.
%FRANCISCO: same comment as for CIFE about referring to the section where the drawback is introduced.
%FRANCISCO: add 'when $X$ and $X-k'Y$ have been selected' after 'at the third step'? we already noted that in CIFE.

%RV Reviewed
This results directly from the undervaluing of the terms ${\rm MI}(X_i,\boldsymbol{S})$ and ${\rm MI}(X_i,\boldsymbol{S}|C)$ of the target objective function at $X_i=X_{{\rm disc}}$. In fact, ${\rm MI}(X_i,\boldsymbol{S})={\rm MI}(X_{{\rm disc}},X)=0.693$, but JMI approximates it by a smaller value, i.e. $\frac{1}{2}{\rm MI}(X_{{\rm disc}},X)+\frac{1}{2}{\rm MI}(X_{{\rm disc}},X-k'Y)=\frac{1}{2}\times 0.693 + \frac{1}{2}\times 0.686 = 0.6895$. Similarly, ${\rm MI}(X_{{\rm disc}},\boldsymbol{S}|C_k)={\rm MI}(X_{{\rm disc}},X|C_k) = 0.693$, but again JMI approximates it by a smaller value, i.e. $\frac{1}{2}{\rm MI}(X_{{\rm disc}},X|C_k)+\frac{1}{2}{\rm MI}(X_{{\rm disc}},X-k'Y|C_k) = \frac{1}{2}\times 0.693 + \frac{1}{2}\times 0.689 = 0.691$.

%RV Reviewed
JMIM introduced an additional term in the objective function which, as discussed in Section \ref{sec:methods}, is unimportant and may lead to confusion in the selection process---unimportant term approximated drawback. At the third step of the selection process, the objective function of JMIM is
\begin{align*}
{\rm MI}(C_k,X_i)-&\max\left\{{\rm MI}(X_i,X)- {\rm MI}(X_i,X|C_k)-{\rm MI}(C_k,X),\right.\\
&\left.\hspace*{6ex}{\rm MI}(X_i,X-k'Y)- {\rm MI}(X_i,X-k'Y|C_k)-{\rm MI}(C_k,X-k'Y)\right\},
\end{align*}
for candidate feature $X_i$. In this case, the objective function for candidate feature $Z$ is $$0-\max \left\{0-0-{\rm MI}(C_k,X), 0-0-{\rm MI}(C_k,X-k'Y)\right\},$$ and for candidate $X_{{\rm disc}}$ it is $0-\max \left\{0.693-0.693-{\rm MI}(C_k,X), 0.686-0.689-{\rm MI}(C_k,X-k'Y)\right\}$. We first note that ${\rm MI}(C_k,X)$ and ${\rm MI}(C_k,X-k'Y)$ are both approximately $0$, while ${\rm MI}(C_k,\{X,X-k'Y\})$, the quantity they try to approximate, takes the value $\ln(2)$. Since, per design of our experiment ${\rm MI}(C_k,X) > {\rm MI}(C_k,X-k'Y)$, it turns out that the objective function of $Z$ equals ${\rm MI}(C_k,X-k'Y)$, and that of $X_{{\rm disc}}$ equals ${\rm MI}(C_k,X)$, leading to the selection of $X_{{\rm disc}}$. There are two observations that should be pointed out. First, contrarily to the previous cases of CIFE and JMI, the objective function for $Z$ takes a value that is no longer according to the corresponding target objective function, which in this case should be ${\rm MI}(C_k,\{X,X-k'Y\})=\ln(2)$. Second, the choice between the two features, $Z$ and $X_{{\rm disc}}$, is being done by two terms, ${\rm MI}(C_k,X)$ and ${\rm MI}(C_k,X-k'Y)$, that try to approximate a term that does not depend on the candidate features, ${\rm MI}(C,\boldsymbol{S})$, and therefore should take the same value for both features and not become a deciding factor. 
The results regarding MIM, CIFE, JMI, and JMIM provide counter-examples showing that theorems analogous to Theorem \ref{th:ofzeroredundCMIM} do not hold for these methods. Indeed, in all cases, the objective function at $X_{{\rm disc}}$, a redundant feature, takes values different from the minimum of the corresponding objective function.
%FRANCISCO: add 'at the third step of the algorithms'?
%FRANCISCO: same holds for MIM also. in this case, it is obvious since it ignores redundancy.

%RV Reviewed
CMIM is the only method that performs correctly in the distributional setting. At the third step of the feature selection process, the objective function is $0$ for both $Z$ and $X_{{\rm disc}}$. The latter result can be obtained from Theorem \ref{th:ofzeroredundCMIM}, since $X_{{\rm disc}}$ is redundant given $\{X\}$. This can be confirmed numerically. The objective function of CMIM at the third step of the feature selection process is
%FRANCISCO: it can be confirmed numerically for both, not only for the latter. could be misleading.
\begin{equation*}
{\rm MI}(C_k,X_i)-\max\left\{{\rm MI}(X_i,X)- {\rm MI}(X_i,X|C_k),{\rm MI}(X_i,X-k'Y)- {\rm MI}(X_i,X-k'Y|C_k)\right\},
\end{equation*}
for candidate feature $X_i$. In this case, the objective function for candidate feature $Z$ is $0$, and the same holds for $X_{{\rm disc}}$ since $0-\max \left\{0.693-0.693, 0.686-0.689\right\}=0$. Note however that this does not mean that CMIM always performs correctly. As discussed in Section \ref{subsec:representative}, CMIM suffers from the problems of redundancy undervalued and complementarity penalized, and Example \ref{ex:problemCMIM} provides a case where CMIM decides incorrectly.
%FRANCISCO: last sentence should clearly separate the ideas. the theorem is true because of the type of undervaluing redundancy that exists, but then the mentioned example clearly shows the huge implications of the other drawback. also has to be better explained in the conclusion of the paper.

\section{Conclusions}
\label{sec:concs}

%RV Reviewed
This paper carried out an evaluation and a comparison of forward feature selection methods based on mutual information. For this evaluation we selected methods representative of all types of feature selection methods proposed in the literature, namely MIM, MIFS, mRMR, maxMIFS, CIFE, JMI, CMIM, and JMIM. The evaluation was carried out theoretically, i.e. independently of the specificities of datasets and classifiers; thus, our results establish unequivocally the relative merits of the methods.

%RV Reviewed
Forward feature selection methods iterate step-by-step and select one feature at each step, among the set of candidate features, the one that maximizes an objective function expressing the contribution each candidate feature to the explanation of the class. In our case, the mutual information (MI) is used as the measure of association between the class and the features. Specifically, the candidate feature selected at each step is the one that maximizes the MI between the class and the set formed by the candidate feature and the already selected features.
%FRANCISCO: before 'the one that maximizes' there should be ';' or ':', not ','.

%RV Reviewed
Our theoretical evaluation is grounded on a target objective function that the methods try to approximate and on a categorization features according to their contribution to the explanation of the class. The features are categorized as irrelevant, redundant, relevant, and fully relevant. This categorization has two novelties regarding previous works: first, we introduce the important category of fully relevant features; second, we separate non-relevant features in two categories of irrelevant and redundant features. Fully relevant features are features that fully explain the class and, therefore, its detection can be used as a stopping criterion of the feature selection process. Irrelevant and redundant features have different properties, which explains why we considered them separately. In particular, we showed that a redundant feature will always remain redundant at subsequent steps of the feature selection process, while an irrelevant feature may later turn into relevant. An important practical consequence is that redundant features, once detected, may be removed from the set of candidate features.
%FRANCISCO: first sentence must be in the plural.
%FRANCISCO: detecting redundancy can be done through evaluating $H(X_i|\boldsymbol{S})$.

%RV Reviewed
We derive upper and lower bounds for the target objective function and relate these bounds with the feature types. In particular, we showed that fully relevant features reach the maximum of the target objective function, irrelevant and redundant features reach the minimum, and relevant features take a value in between. This framework (target objective function, feature types, and objective function values for each feature type) provides a theoretical setting that can be used to compare the actual feature selection methods. Under this framework, the correct decisions at each step of the feature selection process are to select fully relevant features first and only afterwards relevant features, leave irrelevant features for future consideration (since they can later turn into relevant), and discard redundant features (since they will remain redundant).
%FRANCISCO: again plural for target objective function.

%RV Reviewed
Besides the theoretical framework, we defined a distributional setting, based on the definition of specific class, features, and a performance metric, designed to highlight the various deficiencies of methods. The setting includes four features, each belonging to one of the feature types defined above, and a class with two possible values. As performance metric, we introduced the minimum Bayes risk, a theoretical measure that does not rely on specific datasets and classifiers. The metric corresponds to the minimum total probability of misclassification for a certain class and set of selected features.
%FRANCISCO: more emphasis on fact that MBR was NEVER used in such context of feature selection methods. 'In the context of thus setting, a conceptually strong measure for performance evaluation was used for the first time in the context of feature selection methods: the minimum Bayes risk. This measure is independent of the specificities of a classifier, while also theoretical in the sense that it does not rely on data.' 

%RV Reviewed
Actual feature selection methods are based on approximations of the target objective function. The target objective function comprises three terms, expressing the association between the candidate feature and the class (the relevance), the association between the candidate feature and the already selected features (the inter-feature redundancy), and the association between the candidate feature and the already selected features given the class (the class-relevant redundancy). The class-relevant redundancy is sometimes coined as the \textit{good} redundancy, since it expresses the contribution of the candidate feature to the explanation of the class, when taken together with already selected features. We also say that this term reflects the \textit{complementarity} between the candidate and the already selected features with respect to the class.
%FRANCISCO: again plural of target objective function? here we are clearly referring to OF' only.
%FRANCISCO: 'when taken together with already selected features' it is not together, but knowing the information from them. otherwise it would be the MI of the union and the class. same comment already written before.

%RV Reviewed
Method MIM was the first method to be proposed, and completely ignored redundancy. Methods MIFS, mRMR, and maxMIFS ignored complementary effects, i.e. they did not include the class-relevant redundancy term in their objective functions. These methods lose both the upper and lower bounds of the target objective function and, more importantly, lose the connection between the bounds and the specific feature types, i.e. it is no longer possible to guarantee that fully relevant and relevant features are selected before redundant and irrelevant features, or that fully relevant come before relevant.

%RV Reviewed
Methods CIFE, JMI, CMIM, and JMIM considered complementarity effects, but in different ways. The main difference between these methods lies in the approximation of the redundancy terms (the ones related with inter-feature and class-relevant redundancies). These terms depend on the complete set of already selected features and are difficult to estimate. To overcome this difficulty, the methods approximate the redundancy terms by a function of already selected features taken individually. In particular, CIFE uses the sum of the associations with the individual already selected features, JMI uses the average, and both CMIM and JMIM use the maximum. In relation to other methods, JMIM introduced an extra term in its objective function, which is unimportant and leads to confusion in the selection process. The approximations of the remaining methods lead to the following problems: CIFE overscales the redundancy, JMI undervalues the redundancy, and CMIM undervalues the redundancy in a lower extent than JMI but penalizes the complementarity. The consequences of these approximations are that CIFE loses both the upper and lower bound of the target objective function, JMI and CMIM preserve only the lower bound, and JMIM preserves only the upper bound. Moreover, as in the case of the methods that ignore complementary, the methods lose the connection between the bounds of the target objective function and the specific feature types, except in a specific case for CMIM. The drawbacks of the various methods were summarized in Table \ref{tab:drawback}.
%FRANCISCO: CMIM and JMIM do not approximate the two terms individually and this is related with the drawback of penalizing complementarity (and with the obvious need for our method!). in this context, it is not easy to compare undervaluing redundancy this directly.
%FRANCISCO: bounds lost and good properties are not related with these drawbacks in general, except in the case of CIFE and the methods that ignore complementarity. in the remaining cases it is just about the common approximation, which makes the bounds on $H(C|\boldsymbol{S})$ be lost. CMIM having the good property and not the others, it is in fact associated with the individual drawbacks more directly. probably this is clear already but it seems that the lost properties are associated with the specific drawbacks now.

%NEW METHOD!:
%FRANCISCO: concerning the new method, it can also be somehow seen as applied to JMIM as we get again a linear function and the extra term would disappear! but in general it is a good idea to have this idea of building a CMIM that avoids the drawback.
%FRANCISCO: by applying the new method, we can probably prove a theorem that is similar to Theorem \ref{th:ofzeroredundCMIM} but for fully relevant features, so that we would also recover the stopping criterion! this cannot be proved but it tends to hold from the experience of this paper since a large inter-feature redundancy will be associated with a large class-relevant redundancy, so that the feature to be considered is in fact likely to be the same.

%RV Reviewed
These results show that, for all methods, it is always possible to find cases where incorrect decisions are produced, and we have provided several examples throughout the paper and as part of our distributional setting. However, the drawbacks of the methods have different degrees of severity. MIM is a very basic method that we only considered for reference purposes. Ignoring complementary is a severe drawback that can lead to gross errors in the selection process. Thus, MIFS, mRMR, and maxMIFS, should be avoided. Regarding the methods that include complementarity effects, CIFE and JMIM should also be avoided, CIFE because its objective function is unbounded both inferiorly and superiorly due to the overscaled redundancy drawback, and JMIM because its objective function includes a bad approximation of an unimportant term that leads to confusion. Thus, the methods that currently have superior performance are JMI and CMIM. There is no clear-cut decision between these two methods, since both present drawbacks of equivalent degree of severity. JMI undervalues both the inter-feature and the class-relevant redundancy. CMIM also undervalues both types of redundancy. However, it tends to approximate better the inter-feature redundancy, but worse the class-relevant redundancy due to the problem of complementarity penalized.

\appendix

\section{Computation of the terms in the tables of Section \ref{subsec:requiredquantities}}
\label{app:A}

In this section, we derive the expressions required for completing the tables given in Section \ref{subsec:requiredquantities}.

\subsection{Values in Table \ref{tab:entropyvariables}}
\label{app:A1}

\begin{paragraph}{Univariate differential entropies (continuous features).}

The entropy of $X$ is obtained from Example \ref{ex:normal}, in Section \ref{subsec:diffentropy}, considering $n=1$. As for the entropy of the other continuous feature, $X-k'Y$, the same expression can be used since it is widely known that linear or affine combinations of independent univariate features following normal distributions also follow normal distributions. All we need are the variances of these two features. The variance of $X$ is $1$ and the variance of $X-k'Y$ is $1+k^{'2}$. Therefore, the corresponding entropies are $h(X)=\frac{1}{2}\ln(2\pi e)$ and $h(X-k'Y)=\frac{1}{2}\ln(2\pi e (1+k^{'2}))$, respectively.

\end{paragraph}

\begin{paragraph}{Univariate entropies (discrete features and class).}

Concerning $Z$, applying Definition \ref{def:entropydiscrete}, $H(Z)=\ln(2)$.

We now discuss the values of $H(X_{{\rm disc}})$ and $H(C_k)$. Given that $X$ and $Y$ are independent and individually follow standard normal distributions, the joint density function of $(X,Y)$ is
%\textcolor{red}{
\begin{equation*}%}
f_{X,Y}(x,y)=\frac{1}{2\pi}\exp(-(x^2+y^2))=\phi(x)\phi(y).
%\label{eq:symmetryorigin}
%not called, so no label needed.
%\textcolor{red}{
\end{equation*}%}
Therefore, the density at point $(x,y)$ only depends on the distance from this point to the origin, $\sqrt{x^2+y^2}$, in the context of the two-dimensional space defined by $(X,Y)$. As a consequence, the probability of $(X,Y)$ taking values in a region limited by two rays having the origin as starting point is given by $\alpha/(2\pi)$, with $\alpha$ denoting the angle between the two rays, as illustrated in Figure \ref{fig:symmetryorigin}. The circle is dashed in the figure since we can consider an infinite radius. 

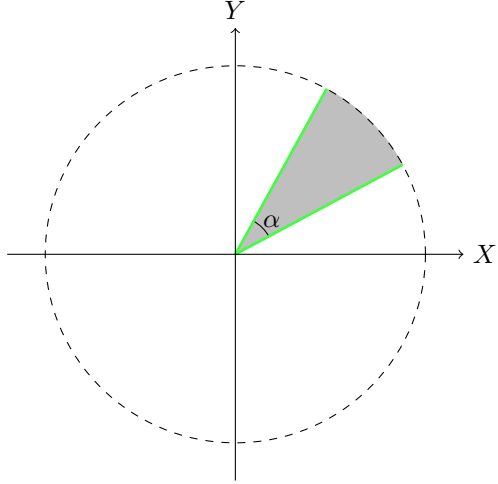
\begin{figure}[t!]
\centering
\begin{tikzpicture}[domain=0:3]
\filldraw[gray!50] (0,0) -- (2.2,1.187434) arc (0.4949340*57.2958:1.070142*57.2958:2.5) -- cycle;
%FRANCISCO: we can close any path and fill it, so that this is what we need in the region we want to fill in below. we need to do this before the curve is defined since we want $\alpha$ to be visible. note that we need to again consider the technicality in the arc taken for the $\alpha$ angle.
\draw[dashed] (0,0) circle (2.5cm) node[right] {};
\draw (0.44,0.2374868) arc (0.4949340*57.2958:1.070142*57.2958:0.5) node[right] {$\alpha$};
%draws an arc that starts at (0.44,0.2374868), and would be the part of the circle from 0.4949340*57.2958 degrees to 1.070142*57.2958 degrees of radius 0.5.
\draw[green!75,line width=1.0pt] (0,0) -- (1.2,2.193171);
\draw[green!75,line width=1.0pt] (0,0) -- (2.2,1.187434);
\draw[->] (-3,0) -- (3,0) node[right] {$X$};
 \draw[->] (0,-3) -- (0,3) node[above] {$Y$};
\end{tikzpicture}
\caption{Angle between two rays starting from the origin; $\alpha/(2\pi)$ is the probability that $(X,Y)$ belongs to the region delimited by the two rays, when $X$ and $Y$ are two independent random variables following standard normal distribution.}
\label{fig:symmetryorigin}
\end{figure}

Considering an infinite radius is in fact what we need. Both $C_k$ and $X_{{\rm disc}}$ are characterized by a partition of $\mathbb{R}^2$ in two regions separated by a line that crosses the origin. Therefore, each region covers an angle $\alpha=\pi$, so that each region has associated probability $1/2$. Thus, $C_k$ and $X_{{\rm disc}}$ follow a Bernoulli distribution with success probability $1/2$, just as $Z$. As a result, $H(C_k)=H(X_{{\rm disc}})=H(Z)=\ln(2)$.

\end{paragraph}

\subsection{Values in Table \ref{tab:miinputclass}}
\label{app:A2}

\begin{paragraph}{Continuous features.}

In order to derive ${\rm MI}(C_k,X)$ and ${\rm MI}(C_k,X-k'Y)$, expression \eqref{eq:micondpropertymix} should be, in general, preferred over \eqref{eq:micondpropertymix2}. In fact, the entropy of a feature given the class can be obtained through the corresponding probability density functions; recall \eqref{eq:entropycontinuous}. These, in turn, are possible to derive easily in our setting. Therefore, we calculate the MI of interest using the representation
\begin{equation}
{\rm MI}(X_i,C_k)= h(X_i)-\sum_{j=0}^1 \int f_{X_i|C_k=j}(u) P(C_k=j) \ln f_{X_i|C_k=j}(u) du,
\label{eq:miwithclasscont}
\end{equation}
where $P(C_k=j)=1/2$, $j=0,1$. It all comes down to determining $f_{X_i|C_k=j}(u)$, $j=0,1$, as $h(X_i)$ is known (vide Table \ref{tab:entropyvariables}).

Given that the features follow a normal distribution, the conditional distribution of interest is the well-known skew-normal distribution \citep[vide][Ch. 5]{claudia}. Therefore, we only need to determine, in each case, the parameters of the mentioned distribution; recall \eqref{eq:skew}.

In the case of ${\rm MI}(C_k,X)$, it was proved \citep[Ch. 5]{claudia} that $X|C_k=j$, $j=0,1$, follow skew-normal distributions with parameters $(0,1,\frac{(-1)^{j+1}}{k})$; i.e. $X|C_k=j \sim {\rm SN}(0,1,\frac{(-1)^{j+1}}{k})$, $j=0,1$.

As for ${\rm MI}(C_k,X-k'Y)$, we use the procedure used for the determination of ${\rm MI}(C_k,X)$ \citep[see][Ch. 5]{claudia} to prove that $X-k'Y|C_k=j\sim {\rm SN}(0,\sqrt{1+k'^2},(-1)^{j+1}(\frac{1-kk'}{k+k'}))$, $j=0,1$. The procedure consists of obtaining the conditional distribution functions of the feature given the two different possible values of the class, taking then the corresponding derivatives in order to obtain the associated probability density functions.

In this context, we will need the probability density function $f_{X+kY,X-k' Y}(z,w)$. This can be obtained from the joint density of the pair $(X,Y)$. In fact, there is a way to obtain the probability density function of $g(X,Y)$, with $g$ being a bijective function, from the probability density function of $(X,Y)$, using the general well-known expression \citep[Ch. 2]{MR1231974}
\begin{equation}
f_{g(X,Y)}(z,w)=f_{X,Y}(g^{-1}(z,w))\left|\frac{dg^{-1}(z,w)}{d(z,w)}\right|,
\label{eq:inversepdf}
\end{equation}
where $|\frac{dg^{-1}(z,w)}{d(z,w)}|$ denotes the absolute value of the Jacobian of the inverse of the function $g$.

As for the inverse function of the transformation $g(X,Y)=(X+kY,X-k'Y)$, it is given by
\begin{equation*}
g^{-1}(z,w)=\left(\frac{kz+k' w}{k+k'},\frac{w-z}{k+k'}\right).
\end{equation*}
The absolute value of its Jacobian is $|1/(k+k')|$. As both $k$ and $k'$ are non-negative, this can be simply written as $1/(k+k')$. 

As a result, we have
\begin{equation*}
f_{X+kY,X-k' Y}(z,w)=\frac{1}{k+k'} \phi\left(\frac{kz+k' w}{k+k'}\right)\phi\left(\frac{w-z}{k+k'}\right), \quad (z,w)\in \mathbb{R}^2.
\end{equation*}

We can now proceed with the derivation of the distribution functions of interest. From now on, the distribution function of $\boldsymbol{Z}$ at $\boldsymbol{z}$ will be represented by $F_{\boldsymbol{Z}}(z)$.

We start with the case $C_k=0$:
\begin{align*}
&F_{X-k' Y|X+kY<0}(u)= P(X-k' Y \leq u|X+kY<0) \\
 &= \frac{P(X-k' Y\leq u,X+kY< 0)}{P(X+kY< 0)}\\
 &= 2 \int_{-\infty}^{0} \int_{-\infty}^{u} f_{X+kY,X-k' Y}(z,w) dw\, dz \\
 &= 2 \int_{-\infty}^{0} \int_{-\infty}^{u} \frac{1}{k+k'} \phi(\frac{kz+k' w}{k+k'})\phi(\frac{w-z}{k+k'}) dw\, dz\\
&=2 \int_{-\infty}^{0} \int_{-\infty}^{u} \frac{1}{k+k'}\frac{1}{2\pi}\exp\left\{-\frac{1}{2}\frac{(k^2+1)z^2}{(k+k')^2}\right\} \exp\left\{-\frac{1}{2}(k'^2+1)[w-\frac{z(1-kk')}{1+k'^2}]^2+\right.\\
& \hspace*{2.0cm} \left. z^2[\frac{(1+k^2)(1+k'^2)-(1-kk')^2}{1+k'^2}]\right\}dw\, dz\\
&=\int_{-\infty}^{u} \frac{1}{\sqrt{\pi}}\exp\left\{-\frac{1}{2}\frac{(k+k')^2 z^2}{(k+k')^2}\right\} \int_{-\infty}^{0} \frac{1}{\sqrt{\pi}(k+k')} \times \\
& \hspace*{2.0cm}\exp\{-\frac{1}{2}\frac{(1+k'^2)(w-\frac{z(1-kk')}{1+k'^2})^2}{(k+k')^2}\} dz\, dw\\
&=\sqrt{2} \int_{-\infty}^{u} \frac{1}{\sqrt{2\pi}}\frac{\sqrt{2}}{\sqrt{1+k'^2}} \int_{-\infty}^{0} \frac{1}{\frac{\sqrt{2\pi}(k+k')}{\sqrt{1+k'^2}}} \exp\left\{-\frac{1}{2}\frac{z^2(k+k')^2}{1+k'^2}\right\} dz\, dw\\
&=\frac{\sqrt{2}}{\sqrt{1+k'^2}} \int_{-\infty}^{u} \frac{1}{\sqrt{2\pi}{\sqrt{1+k'^2}}(k+k')} \exp\left\{-\frac{1}{2}\frac{z^2}{1+k'^2}\right\} \Phi(-\frac{(1-kk')z}{(k-k')\sqrt{1+k'^2}}) dz\\
&=\int_{-\infty}^{u} \frac{2}{\sqrt{1+k'^2}}\phi(\frac{z}{\sqrt{1+k'^2}}) \Phi(-\frac{(1-kk')z}{(k-k')\sqrt{1+k'^2}}) dz.
\end{align*}
Hence, $X-k'Y|X+kY<0\sim {\rm SN}(0,\sqrt{1+k'^2},\frac{-(1-kk')}{k+k'})$.

Some auxiliary steps were required in the first step of the derivation above in which $\frac{1}{k+k'} \phi(\frac{kz+k' w}{k+k'})\phi(\frac{z-w}{k-k'})$ was transformed significantly. The main technicality about such steps was the algebraic manipulation
\begin{align*}
&(kz+k'w)^2+(w-z)^2\\
&=(1+k^2)z^2-2wz(1-kk')+(k'^2+1)w^2\\
&=(1+k'^2)\{w^2-2w\frac{z(1-kk')}{1+k'^2}+[\frac{z(1-kk')}{1+k'^2}]^2- [\frac{z(1-kk')}{1+k'^2}]^2\}+(1+k^2)z^2\\
&=(1+k'^2)\{[w-\frac{z(1-kk')}{1+k'^2}]^2-\frac{[z(1-kk')]^2}{(1+k'^2)^2}\}+ (1+k^2)z^2\\
&=(1+k'^2)[w-\frac{z(1-kk')}{1+k'^2}]^2+z^2\frac{(k+k')^2}{1+k'^2}.
\end{align*}

As for the conditional case in which $C_k=1$, we provide a briefer version of the computation as most steps are the same as for $C_k=0$.
\begin{align*}
&F_{X-k' Y|X+kY\geq 0}(u) \\
 &= P(X-k' Y \leq u|X+kY\geq 0)\\
 &= \frac{P(X-k' Y\leq u,X+kY\geq 0)}{P(X+kY\geq 0)}\\
 &= 2 \int_{0}^{+\infty} \int_{-\infty}^{u} f_{X+kY,X-k' Y}(z,w) dw\, dz \\
 &= 2 \int_{0}^{+\infty} \int_{-\infty}^{u} \frac{1}{k+k'} \phi(\frac{kz+k' w}{k+k'})\phi(\frac{w-z}{k+k'}) dw\, dz\\
 &=2 \int_{-\infty}^{0} \int_{-\infty}^{u} \frac{1}{k+k'}\frac{1}{2\pi}\exp\left\{-\frac{1}{2}\frac{(k^2+1)z^2}{(k+k')^2}\right\} \exp\left\{-\frac{1}{2}(k'^2+1)[w-\frac{z(1-kk')}{1+k'^2}]^2+\right.\\
 & \hspace*{1.5cm} \left. z^2[\frac{(1+k^2)(1+k'^2)-(1-kk')^2}{1+k'^2}]\right\} dw\, dz\\
&=\int_{-\infty}^{u} \frac{2}{\sqrt{1+k'^2}}\phi(\frac{z}{\sqrt{1+k'^2}}) [1-\Phi(-\frac{(1-kk')z}{(k-k')\sqrt{1+k'^2}})] dz.
\end{align*}
Given the symmetry of the normal distribution, we know that $(1-\Phi(-x))=\Phi(x)$. This allows reducing the expression to
\begin{equation*}
\int_{-\infty}^{u} \frac{2}{\sqrt{1+k'^2}}\phi(\frac{z}{\sqrt{1+k'^2}}) \Phi(\frac{(1+kk')z}{(k-k')\sqrt{1+k'^2}}) dz.
\end{equation*}
Thus, $X-k' Y|X+kY\geq 0\sim {\rm SN}\left(0,\sqrt{1+k'^2},\frac{(1-kk')}{k+k'}\right)$. 

\end{paragraph}

\begin{paragraph}{Discrete features.}

In order to obtain ${\rm MI}(C_k,X_{{\rm disc}})$, we use \eqref{eq:micondproperty2}. We have ${\rm MI}(C_k,X_{{\rm disc}})=H(C_k)+H(X_{{\rm disc}})-H(C_k,X_{{\rm disc}})$. We only need to compute $H(C_k,X_{{\rm disc}})$ since the required univariate entropies are known from Table \ref{tab:entropyvariables}. From Definition \ref{def:entropydiscrete}, this requires obtaining the probabilities of the four possible combinations of values associated with the pair $(C_k,X_{{\rm disc}})$. We represent the regions associated with such values in Figure \ref{fig:midisc}, considering the two-dimensional space defined by the pair $(X,Y)$. Considering the reasoning used to obtain $H(C_k)$ and $H(X_{{\rm disc}})$, associated with Figure \ref{fig:symmetryorigin}, we only need the four angles covered by the associated four regions in order to compute their corresponding probabilities. The determination of such angles only requires the knowledge of $\theta$, represented in Figure \ref{fig:midisc} since the remaining angles consist of its supplementary, its opposite, and the opposite of its supplementary. 

\begin{figure}[t!]
\centering
\begin{tikzpicture}[domain=0:3]
\filldraw[gray!95](-3,3)--(3,3)--(3,-3)--(-3,-3)--cycle;
\filldraw[gray!65](-1.5,3)--(0,0)--(0,-3)--(-3,-3)--(-3,3)--cycle;
\filldraw[gray!35](0,0)--(0,-3)--(1.5,-3)--cycle;
\filldraw[gray!5](0,0)--(0,3)--(-1.5,3)--cycle;
%the darkest should be the ones that are further from each other, as these are clearly separated, just as the lightest ones should also be those that are far. also the lighter should be those covering smaller regions or the figure becomes strange.
\node[] at (-1.5,-0.8){$\boldsymbol{(c)}$};
\node[] at (0.5,-2.0){$\boldsymbol{(b)}$};
\node[] at (-0.5,2.0){$\boldsymbol{(a)}$};
\node[] at (1.5,0.8){$\boldsymbol{(d)}$};
%one label per region.
\draw[smooth,thick,color=green,domain=-1.5:1.5]plot(\x,{-2*\x});
%\draw[smooth,thick,color=green,domain=-1.5:1.5]plot(\x,{-2*\x}) node[right] {\scriptsize{$C_k$}};
%we decided not to keep the $C_k$ in the figure, just as for $X_{{\rm disc}}$ that appears some lines below.
%\draw[smooth,dashed,domain=-2:2]plot(\x,{-\x})node[right]{};
\draw (0,1.0) arc (90:90+26.565:1.0) node[above,right] {};
\node[] at (-0.25,1.2){$\theta$};
\draw[->] (-3,0) -- (3,0) node[right] {$X$};
 \draw[->] (0,-3) -- (0,3) node[above] {$Y$};
\draw[smooth,thick,color=blue] (0,3) -- (0,-3) ;
%\draw[smooth,thick,color=blue] (0,3) -- (0,-3) node[below] {\scriptsize{$X_{{\rm disc}}$}};
\end{tikzpicture}
\caption{Four regions associated with the joint distribution of $(X_{{\rm disc}},C_k)$, where $\theta=\arctan k$, when $X$ and $Y$ are two independent random variables following standard normal distribution.}
\label{fig:midisc}
\end{figure}
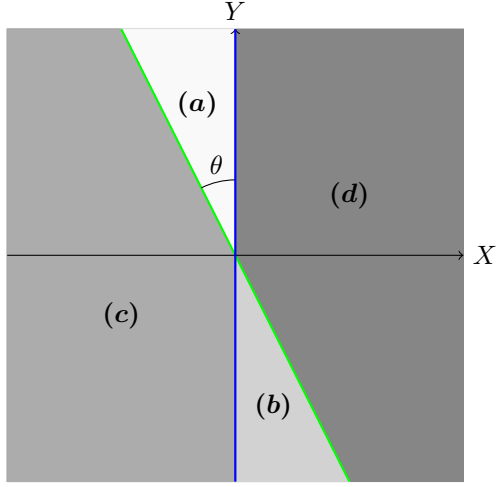

In the end, there are two angles (associated with regions $(a)$ and $(b)$ in Figure \ref{fig:midisc}) whose value is $\arctan k$, while the other two (associated with regions $(c)$ and $(d)$ in Figure \ref{fig:midisc}) have the value $(\pi-\arctan k)$, implying that
\begin{equation}
P(X_{{\rm disc}}=u,C_k=j) = \left\{ \begin{array}{cl}
 \frac{\pi-\arctan{k}}{2\pi},\!\!\!\! &\mbox{$u=0,j=0$ and $u=1,j=1$} \\
  \frac{\arctan{k}}{2\pi},\!\!\!\! &\mbox{$u=0,j=1$ and $u=1,j=0$}
       \end{array}. \right.
       \label{eq:jointdiscprobs}
\end{equation}
As a result,
\begin{equation*}
H(C_k,X_{{\rm disc}})=-2\times\left(\frac{\arctan k}{2\pi}\ln(\frac{\arctan k}{2\pi})+(\frac{1}{2}-\frac{\arctan{k}}{2\pi})\ln(\frac{1}{2}-\frac{\arctan{k}}{2\pi})\right).
\end{equation*}

We obtain
\begin{equation*}
{\rm MI}(C_k,X_{{\rm disc}})=2\ln(2)+\frac{\arctan{k}}{\pi}\ln(\frac{\arctan{k}}{2\pi})+(1-\frac{\arctan{k}}{\pi})\ln(\frac{1}{2}-\frac{\arctan{k}}{2\pi}).
\end{equation*}
%'As for the MI between the discrete feature and the class,' again omitted from the beginning as we have the paragraph to clearly mark this.

As for ${\rm MI}(C_k,Z)$, its value is 0 since $C_k$ and $Z$ are independent.

\end{paragraph}

\subsection{Values in Table \ref{tab:miinputinput}}
\label{app:A3}

We start with ${\rm MI}(X,X_{{\rm disc}})$. $X_{{\rm disc}}$ is redundant given $\{X\}$, so that $H(X_{{\rm disc}}|X)=0$. Additionally, $H(X_{{\rm disc}})=\ln(2)$ (vide Table \ref{tab:entropyvariables}), so that, by \eqref{eq:micondproperty}, ${\rm MI}(X,X_{{\rm disc}})=H(X_{{\rm disc}})-H(X_{{\rm disc}}|X)=\ln(2)-0=\ln(2)$. 

As for ${\rm MI}(X-k'Y,X_{{\rm disc}})$, we first note that the deduction of ${\rm MI}(C_k,X)$ \citep[Ch. 5]{claudia} can be similarly done for the case where $k$ in \eqref{eq:class} is allowed to be negative, from which one would conclude that ${\rm MI}(C_{-k'},X)={\rm MI}(C_{k'},X)$. In turn, $X_{{\rm disc}}$ and $X-k'Y$ are obtained from $C_{k'}$ and $X$, respectively, by rotating $k'$ degrees anticlockwise, considering the two-dimensional space defined by the pair $(X,Y)$. As a result, ${\rm MI}(X_{{\rm disc}},X-k'Y)={\rm MI}(C_{k'},X)$. In fact, MI is invariant under one-to-one transformations \citep[see][for more details]{dadkhah2010performance}. Finally, by transitivity, ${\rm MI}(X-k'Y,X_{{\rm disc}})={\rm MI}(C_{k'},X)$.

As for ${\rm MI}(X,X-k'Y)$, we use \eqref{eq:micondproperty2}. We have ${\rm MI}(X,X-k'Y)=h(X)+h(X-k'Y)-h(X,X-k'Y)$. We only need to compute $h(X,X-k'Y)$ since the univariate entropies are known (vide Table \ref{tab:entropyvariables}). As both features follow normal distributions, the joint distribution is a bivariate normal distribution, whose entropy depends on the determinant of the covariance matrix, as described in Example \ref{ex:normal}. The value of the mentioned determinant is $k'^2$, so that ${\rm MI}(X,X-k'Y)=\frac{1}{2}\ln\left(1+1/k'^2\right)$. 

The MI terms involving $Z$ do not require any calculation given that $Z$ is independent of $X$, $X_{{\rm disc}}$, and $X-k'Y$, implying that ${\rm MI}(Z,X)={\rm MI}(Z,X_{{\rm disc}})={\rm MI}(Z,X-k'Y)=0$. 

\subsection{Values in Table \ref{tab:miinputinputcond}}
\label{app:A4}

For deriving ${\rm MI}(X,X_{{\rm disc}}|C_k)$, we use \eqref{eq:miconddiscrete}. We have ${\rm MI}(X,X_{{\rm disc}}|C_k)=H(X_{{\rm disc}}|C_k)-H(X_{{\rm disc}}|X,C_k)=0$. Noting that $H(X_{{\rm disc}}|X,C_k)=0$ since $H(X_{{\rm disc}}|X)=0$, using \eqref{eq:infodonthurtgen}, we have ${\rm MI}(X,X_{{\rm disc}}|C_k)=H(X_{{\rm disc}}|C_k)$. In turn, $H(X_{{\rm disc}}|C_k)=H(X_{{\rm disc}})-{\rm MI}(X_{{\rm disc}},C_k)$, where the values $H(X_{{\rm disc}})$ and ${\rm MI}(X_{{\rm disc}},C_k)$ have been obtained; recall Tables \ref{tab:entropyvariables} and \ref{tab:miinputinput}, respectively. We conclude that 
\begin{equation*}
{\rm MI}(X,X_{{\rm disc}}|C_k)=-\frac{\arctan{k}}{\pi}\ln(\frac{\arctan{k}}{\pi})-(1-\frac{\arctan{k}}{\pi})\ln(1-\frac{\arctan{k}}{\pi}).
\end{equation*}

Concerning ${\rm MI}(X,X-k'Y|C_k)$, we use \eqref{eq:miconddiscrete3}. We have ${\rm MI}(X,X-k'Y|C_k)=h(X|C_k)+h(X-k'Y|C_k)-h(X,X-k'Y|C_k)$. The first two terms were obtained in the context of the determination of ${\rm MI}(X,C_k)$ and ${\rm MI}(X-k'Y|C_k)$; they consist of the second term in \eqref{eq:miwithclasscont}.

As in the computation of class-conditional entropies for univariate continuous features, recall \eqref{eq:miwithclasscont}, we only need the joint probability density functions of $(X,X-k'Y)$ conditioned on the two possible values of the class to determine $h(X,X-k'Y|C_k)$. These are obtained from the corresponding conditional probability density functions associated with the pair $(X,Y)$, using \eqref{eq:inversepdf}.

We first need to derive the probability density functions associated with the distributions $(X,Y)|C_k=j$, $j=0,1$. These are obtained starting from the corresponding distribution functions, as done in the context of the determination of the probability density functions needed in sequence of \eqref{eq:miwithclasscont}.

We start with $C_k=1$. We have
\begin{align*}
F_{(X,Y)|C_k=1}(x,y) =& \frac{P(X\leq x,Y\leq y,X+kY\geq 0)}{P(X+kY\geq 0)}\\
=& 2 P(X\leq x,Y\leq y,X+kY\geq 0).
\end{align*}

In order to proceed, we separate the expression in two cases. In fact, if $x<-ky$, the value is simply 0. If, instead, $x\geq -ky$, we have
\begin{align*}
 2 P(X\leq x,Y\leq y,X+kY\geq 0)=& 2 \int_{-\infty}^{x} \int_{-\frac{u}{k}}^{y} \phi(u)\phi(v) dv\, du\\
=& 2 \int_{-\infty}^{x} \phi(u)[\Phi(v)-\Phi(-\frac{u}{k})] du\\
=& 2 \Phi(y)\Phi(x) - F_{{\rm SN}(0,1,-\frac{1}{k})}(x).
\end{align*}

Thus, the corresponding density is given by
\begin{equation*}
f_{(X,Y)|C_k=1}(x,y) = \left\{ \begin{array}{cl}
% \frac{\partial^2}{\partial x \, \partial y}[2 \Phi(y)\Phi(x) - F_{{\rm SN}(0,1,-\frac{1}{k})}(x)]=2 \phi(y)\phi(x), &\mbox{$x\geq %-ky$} \\
2 \phi(y)\phi(x), &\mbox{$x\geq -ky$} \\
0, &\mbox{$x< -ky$}
       \end{array}. \right.
\end{equation*}

We now make the same type of deduction for $C_k=0$. We have
\begin{align*}
F_{(X,Y)|C_k=0}(x,y) =& \frac{P(X\leq x,Y\leq y,X+kY< 0)}{P(X+kY< 0)}\\
=& 2 P(X\leq x,Y\leq y,X+kY< 0).
\end{align*}

If $x<-ky$, we obtain
\begin{align*}
 2 P(X\leq x,Y\leq y,X+kY< 0)=& 2 \int_{-\infty}^{x} \int_{-\infty}^{y} \phi(u)\phi(v) dv\, du\\
=& 2 \Phi(y)\Phi(x).
\end{align*}

If, instead, $x\geq -ky$, we have
\begin{align*}
2 P(X\leq x,Y\leq y,X+kY\geq 0)=& 2 \left[\int_{-\infty}^{x} \int_{-\infty}^{y} \phi(u)\phi(v) du\, dv - \int_{-\infty}^{x} \int_{-\frac{u}{k}}^{y} \phi(u)\phi(v) dv\, du\right]\\
=& 2 \Phi(y)\Phi(x)-[2\Phi(y)\Phi(x)-F_{{\rm SN}(0,1,-\frac{1}{k})}(x)]\\
=& F_{{\rm SN}(0,1,-\frac{1}{k})}(x).
\end{align*}

Thus, the corresponding density is given by
\begin{equation*}
f_{(X,Y)|C_k=0}(x,y) = \left\{ \begin{array}{cl}
 0, &\mbox{$x\geq -ky$} \\
2 \phi(y)\phi(x), &\mbox{$x< -ky$}
       \end{array}. \right.
\end{equation*}
% \frac{\partial^2}{\partial x \, \partial y}[2 \Phi(y)\Phi(x)]=2 \phi(y)\phi(x), &\mbox{$x< -ky$}

We can now obtain $f_{X,X-k' Y}(u,v)$ using \eqref{eq:inversepdf}. In this case, $g^{-1}(z,w)=(z,\frac{z-w}{k'})$. The determinant of the corresponding Jacobian is $1/k'$. Therefore, we have $f_{(X,X-k' Y)|C_k=c}(u,v)=f_{(X,Y)|C_k=c}(u,\frac{u-v}{k'})\frac{1}{k'}$, $c=0,1$. Thus, we obtain
\begin{equation*}
f_{(X,X-k'Y)|C_k=1}(u,v) = \left\{ \begin{array}{cl}
 2\phi(u)\phi(\frac{v-u}{k'})\frac{1}{k'},\!\!\!\! &\mbox{$u> \frac{k}{k'+k}v$} \\
 0,\!\!\!\! &\mbox{$u\leq \frac{k}{k'+k}v$}
       \end{array} \right.
\end{equation*}
and
\begin{equation*}
f_{(X,X-k'Y)|C_k=0}(u,v) = \left\{ \begin{array}{cl}
 0,\!\!\!\! &\mbox{$u> \frac{k}{k'+k}v$} \\
 2\phi(u)\phi(\frac{v-u}{k'})\frac{1}{k'},\!\!\!\! &\mbox{$u\leq \frac{k}{k'+k}v$}
       \end{array}, \right.
\end{equation*}
where we note that $\phi(\frac{v-u}{k'})\frac{1}{k'}$ can be also seen as the density for a normal distribution with parameters $(u,k'^2)$ evaluated at $v$.

We have
\begin{equation*}
h(X,X-k'Y|C_k=1)=-\int_{-\infty}^{+\infty} \int_{\frac{k}{k'+k}v}^{+\infty} 2\phi(u)\phi(\frac{v-u}{k'})\frac{1}{k'} \ln(2\phi(u)\phi(\frac{v-u}{k'})\frac{1}{k'}) dv\, du
\end{equation*}
and 
\begin{equation*}
h(X,X-k'Y|C_k=0)=-\int_{-\infty}^{+\infty} \int_{-\infty}^{\frac{k}{k'+k}v} 2\phi(u)\phi(\frac{v-u}{k'})\frac{1}{k'} \ln(2\phi(u)\phi(\frac{v-u}{k'})\frac{1}{k'}) dv\, du.
\end{equation*}

This implies that
\begin{equation*}
h(X,X-k'Y|C_k)=-\int_{-\infty}^{+\infty} \int_{-\infty}^{+\infty} \phi(u)\phi(\frac{v-u}{k'})\frac{1}{k'} \ln(2\phi(u)\phi(\frac{v-u}{k'})\frac{1}{k'}) dv\, du.
\end{equation*}

The part inside the logarithm can be re-written considering the explicit expression of the density of a standard normal distribution. We have
\begin{align*}
\ln(2\phi(u)\phi(\frac{v-u}{k'})\frac{1}{k'})&=\ln(\frac{1}{\pi k'}\exp(-\frac{1}{2k'^2}[(1+k')u^2-2uv+v^2]))\\
&=-\ln(\pi k')-\frac{1}{2k'^2}[(1+k')u^2-2uv+v^2].
\end{align*}

We can still write this as
\begin{equation*}
h(X,X-k'Y|C_k)=\int_{-\infty}^{+\infty} \int_{-\infty}^{+\infty} \phi(u)\phi(v) [\ln(\pi k')+\frac{1}{2k'^2}[(1+k')u^2-2uv+v^2]] dv\, du.
\end{equation*}
The final result is $\ln(\pi k')+\frac{1}{2}+\frac{1}{2}=\ln{\pi}+\ln{k'}+1$. In fact, the first term is a double integral in the whole space of a probability density function, associated with $(X,Y)$, times a constant, so that the result is such constant, $\ln(\pi k')$; while the remaining terms are also easy to obtain since they consist of constants multiplied with first and second order moments from the normal distribution with parameters $(u,k')$.

As for ${\rm MI}(X-k' Y,X_{{\rm disc}}|C_k)$, we compute it, using \eqref{eq:miconddiscrete}, through its representation ${\rm MI}(X-k' Y,X_{{\rm disc}}|C_k)=h(X-k' Y|C_k)-h(X-k'Y|X_{{\rm disc}},C_k)$. Note that $h(X-k' Y|C_k)$ has already been derived, in the context of the determination of ${\rm MI}(X-k'Y,C_k)$; recall \eqref{eq:miwithclasscont}. Thus, we only need to derive $h(X-k'Y|X_{{\rm disc}},C_k)$.

We need to obtain $f_{X-k'Y|X_{{\rm disc}}=u,C_k=j}(v)$ for each possible combination of pairs $(u,j)$. We first note that
\begin{align*}
f_{X-k'Y|X_{{\rm disc}}=u,C_k=j}&(v)=\frac{d}{dv} F_{X-k'Y|X_{{\rm disc}}=u,C_k=j}(v)\\
&= \frac{d}{dv} \frac{P(X-k'Y\leq v,X_{{\rm disc}}=u,C_k=j)}{ P(X_{{\rm disc}}=u,C_k=j)}\\
&= \frac{1}{P(X_{{\rm disc}}=u,C_k=j)} \frac{d}{dv} P(X-k'Y\leq v,X_{{\rm disc}}=u,C_k=j).
\end{align*} 
The values $P(X_{{\rm disc}}=u,C_k=j)$, $u=0,1$ and $j=0,1$, can be found in \eqref{eq:jointdiscprobs}.

Therefore, we only need to obtain $P(X-k'Y\leq v,X_{{\rm disc}}=u,C_k=j)$.
We start with $u=1$ and $j=1$. We have to split in two cases. For $v\geq 0$, 
\begin{align*}
& P(X-k'Y\leq v,X_{{\rm disc}}=1,C_k=1)\\
&= \int_{-\frac{v}{k'+k}}^{0} \int_{-kz}^{v+k'z} \phi(w)\phi(z) dw\, dz+ \int_{0}^{+\infty} \int_{0}^{v+k'z} \phi(w)\phi(z) dz\, dw\\
&= \int_{-\frac{z}{k'+k}}^{0} \phi(z)[\Phi(v+k'z)-\Phi(-kz)] dz + \int_{0}^{+\infty} \phi(z)[\Phi(v+k'z)-\frac{1}{2}] dz\\
&= \int_{-\frac{v}{k'+k}}^{0} \phi(z)\Phi(v+k'z) dz - \frac{1}{2}[F_{{\rm SN}(0,1,-k)}(0)-F_{{\rm SN}(0,1,-k)}(-\frac{v}{k'+k})] \\
&\hspace*{1.0cm}+ \int_{0}^{+\infty} \phi(z)\Phi(v+k'z) dz-\frac{1}{4}\\
&= \int_{-\frac{v}{k'+k}}^{+\infty} \phi(z)\Phi(v+k'z) dz -\frac{1}{2}F_{{\rm SN}(0,1,-k)}(0)+\frac{1}{2}F_{{\rm SN}(0,1,-k)}(-\frac{v}{k'+k})-\frac{1}{4}.
\end{align*}
In turn, for $v<0$,
\begin{align*}
& P(X-k'Y\leq v,X_{{\rm disc}}=1,C_k=1)\\
&= \int_{-\frac{v}{k'}}^{+\infty} \int_{0}^{v+k'z} \phi(w)\phi(z) dz\, dw\\
&= \int_{-\frac{v}{k'}}^{+\infty} \phi(z)[\Phi(v+k'z)-\frac{1}{2}] dz\\
&= \int_{-\frac{v}{k'}}^{+\infty} \phi(z)\Phi(v+k'z) dz -\Phi(\frac{v}{k'}).
\end{align*}

We now need to take the derivative of the two expressions with respect to $v$ to obtain the corresponding conditional density functions. In the case of $v\geq 0$,
\begin{align*}
&\frac{d}{dv}\left[\int_{-\frac{v}{k'+k}}^{+\infty} \phi(z)\Phi(v+k'z) dz -\frac{1}{2}F_{{\rm SN}(0,1,-k)}(0)+\frac{1}{2}F_{{\rm SN}(0,1,-k)}(-\frac{v}{k'+k})-\frac{1}{4}\right]\\
&= \int_{-\frac{v}{k'+k}}^{+\infty} \phi(z)\phi(v+k'z) dz + \frac{1}{2}f_{{\rm SN}(0,1,-k)}(-\frac{v}{k'+k})\frac{1}{k'+k}\\
& \hspace*{1.0cm} -\frac{1}{2}f_{{\rm SN}(0,1,-k)}(-\frac{v}{k'+k})\frac{1}{k'+k}\\
&=\int_{-\frac{v}{k'+k}}^{+\infty} \phi(z)\phi(v+k'z) dz;
\end{align*}
while, for $v<0$, 
\begin{align*}
&\frac{d}{dv}\left[\int_{-\frac{v}{k'}}^{+\infty} \phi(z)\Phi(v+k'z) dz -\frac{1}{2}\Phi(\frac{v}{k'})\right]\\
&=\int_{-\frac{v}{k'}}^{+\infty} \phi(z)\phi(v+k'z) dz +\frac{1}{2}\phi(\frac{v}{k'})\frac{1}{k'}-\frac{1}{2}\phi(\frac{v}{k'})\frac{1}{k'}\\
&=\int_{-\frac{v}{k'}}^{+\infty} \phi(z)\phi(v+k'z) dz.
\end{align*}
Note that the following important result \citep[Ch. 3]{abramowitz1964handbook} was required in order to obtain both final expressions above:
\begin{equation}
\frac{d}{dx}\int_{a(x)}^{b(x)}g(x,y)dy=\int_{a(x)}^{b(x)}\frac{dg(x,y)}{dx}dy+g(x,b(x))b'(x)-g(x,a(x))a'(x).
\label{eq:derivint}
\end{equation}
This result was applied to $\frac{d}{dv} \int_{-\frac{v}{k'+k}}^{+\infty} \phi(z)\Phi(v+k'z) dz$, in the expression for $v\geq 0$, and also to $\frac{d}{dv} \int_{-\frac{v}{k'}}^{+\infty} \phi(z)\Phi(v+k'z) dz$, concerning the case $v<0$.

The desired probability density function is 
\begin{equation*}
f_{X-k'Y|X_{{\rm disc}}=1,C_k=1}(v) = \left\{ \begin{array}{cl}
\frac{2\pi}{\pi-\arctan{k}}\int_{-\frac{v}{k'+k}}^{+\infty} \phi(z)\phi(v+k'z) dz, &\mbox{$v\geq 0$}\\
\frac{2\pi}{\pi-\arctan{k}}\int_{-\frac{v}{k'}}^{+\infty} \phi(z)\phi(v+k'z) dz, &\mbox{$v<0$}
       \end{array}. \right.
\end{equation*}

We now consider $u=0$ and $j=1$. We have to split again in two cases. For $v\geq 0$, 
\begin{align*}
 P(X-k'Y\leq v,X_{{\rm disc}}=0,C_k=1)
=& \int_{0}^{+\infty} \int_{-kz}^{0} \phi(w)\phi(z) dz\, dw\\
=& \int_{0}^{+\infty} \phi(z)[\frac{1}{2}-\Phi(-kz)] dz\\
=& \frac{1}{2}F_{{\rm SN}(0,1,-k)}(0)-\frac{1}{4}.
\end{align*}
For $v<0$,
\begin{align*}
& P(X-k'Y\leq v,X_{{\rm disc}}=0,C_k=1)\\
&= \int_{-\frac{v}{k'+k}}^{-\frac{v}{k'}} \int_{-kz}^{v+k'z} \phi(w)\phi(z) dw\, dz + \int_{-\frac{v}{k'}}^{+\infty} \int_{-kz}^{0} \phi(w)\phi(z) dz\, dw\\
&= \int_{-\frac{v}{k'+k}}^{-\frac{v}{k'}} \phi(z)[\Phi(v+k'z)-\Phi(-kz)] dz +  \int_{-\frac{v}{k'}}^{+\infty} \phi(z)[\frac{1}{2}-\Phi(-kz)] dz\, dw\\
&= \int_{-\frac{v}{k'+k}}^{-\frac{v}{k'}} \phi(z)\Phi(v+k'z) dz-\frac{1}{2}[F_{{\rm SN}(0,1,-k)}(-\frac{v}{k'})-F_{{\rm SN}(0,1,-k)}(-\frac{v}{k'+k})]+\frac{1}{2}[1-\Phi(-\frac{v}{k'})]\\
&\hspace*{1.0cm}-\frac{1}{2}[1-F_{{\rm SN}(0,1,-k)}(-\frac{v}{k'})]\\
&= \int_{-\frac{v}{k'+k}}^{-\frac{v}{k'}} \phi(z)\Phi(v+k'z)dz+ \frac{1}{2}F_{{\rm SN}(0,1,-k)}(-\frac{v}{k'+k})+\frac{1}{2}\Phi(\frac{v}{k'}).
\end{align*}

We now need to take the derivative of the two expressions with respect to $v$ to obtain the corresponding conditional density functions. In the case of $v\geq 0$, it is simply $0$ since there is no dependency on $v$. As for $v<0$,
\begin{align*}
& \frac{d}{dv}\left[\int_{-\frac{v}{k'+k}}^{-\frac{v}{k'}} \phi(z)\Phi(v+k'z) dz + \frac{1}{2}F_{{\rm SN}(0,1,-k)}(-\frac{v}{k'+k})+\frac{1}{2}\Phi(\frac{v}{k'})\right]\\
&=\int_{-\frac{v}{k'+k}}^{-\frac{v}{k'}} \phi(z)\phi(v+k'z) dz + \frac{1}{2}f_{{\rm SN}(0,1,-k)}(-\frac{v}{k'+k})\frac{1}{k'+k}-\frac{1}{2}\phi(\frac{v}{k'})\frac{1}{k'}\\
&\hspace*{1.0cm}-\frac{1}{2}f_{{\rm SN}(0,1,-k)}(-\frac{v}{k'+k})\frac{1}{k'+k}+\frac{1}{2}\phi(\frac{v}{k'})\frac{1}{k'}\\
&=\int_{-\frac{v}{k'+k}}^{-\frac{v}{k'}} \phi(z)\phi(v+k'z) dz.
\end{align*}
Once again, \eqref{eq:derivint} was applied, in this case to $\int_{-\frac{v}{k'+k}}^{-\frac{v}{k'}} \phi(z)\Phi(v+k'z) dz$.

The desired probability density function is 
\begin{equation*}
f_{X-k'Y|X_{{\rm disc}}=0,C_k=1}(v) = \left\{ \begin{array}{cl}
 0, &\mbox{$v\geq 0$} \\
\frac{2\pi}{\arctan{k}} \int_{-\frac{v}{k'+k}}^{-\frac{v}{k'}} \phi(z)\phi(v+k'z) dz, &\mbox{$v<0$}
       \end{array}. \right.
\end{equation*}

We now consider $u=1$ and $j=0$. For $v\geq 0$, 
\begin{align*}
& P(X-k'Y\leq v,X_{{\rm disc}}=1,C_k=0)\\
&= \int_{-\frac{v}{k'}}^{-\frac{v}{k'+k}} \int_{0}^{v+k'z} \phi(w)\phi(z) dw\, dz + \int_{-\frac{v}{k'+k}}^{0} \int_{0}^{-kz} \phi(w)\phi(z) dz\, dw\\
&= \int_{-\frac{v}{k'}}^{-\frac{v}{k'+k}} \phi(z)[\Phi(v+k'z)-\frac{1}{2}] dz + \int_{-\frac{v}{k'+k}}^{0} \phi(z)[\Phi(-kz)-\frac{1}{2}] dz\, dw\\
&= \int_{-\frac{v}{k'}}^{-\frac{v}{k'+k}} \phi(z)\Phi(v+k'z) dz -\frac{1}{2}[\Phi(-\frac{v}{k'+k})-\Phi(-\frac{v}{k'})]\\
& \hspace*{2.0cm} +\frac{1}{2}[F_{{\rm SN}(0,1,-k)}(0)-F_{{\rm SN}(0,1,-k)}(-\frac{v}{k'+k})]-\frac{1}{2}\Phi(\frac{v}{k'+k})\\
&= \int_{-\frac{v}{k'}}^{-\frac{v}{k'+k}} \phi(z)\Phi(v+k'z) dz -\frac{1}{2}\Phi(\frac{v}{k'+k})\\
& \hspace*{2.0cm} +\frac{1}{2}F_{{\rm SN}(0,1,-k)}(0)-\frac{1}{2}F_{{\rm SN}(0,1,-k)}(-\frac{v}{k'+k}).
\end{align*}
As for the case $v<0$, $P(X-k'Y\leq v,X_{{\rm disc}}=1,C_k=0)=0$.

We now need to take the derivative with respect to $v$ of the expression obtained for $v\geq 0$ (the derivative of the one for $v< 0$ is 0) to obtain the corresponding conditional density functions. We have
\begin{align*}
&\frac{d}{dv}\left[\int_{-\frac{v}{k'}}^{-\frac{v}{k'+k}} \phi(z)\Phi(v+k'z) dz -\frac{1}{2}\Phi(-\frac{v}{k'+k})+ \frac{1}{2}F_{{\rm SN}(0,1,-k)}(0)- \frac{1}{2}F_{{\rm SN}(0,1,-k)}(-\frac{v}{k'+k})\right] \\
& = \int_{-\frac{v}{k'}}^{-\frac{v}{k'+k}} \phi(z)\phi(v+k'z) dz - \frac{1}{2}\phi(-\frac{v}{k'+k})-\frac{1}{2}f_{{\rm SN}(0,1,-k)}(-\frac{v}{k'+k})\frac{1}{k'+k}\\
& \hspace*{1.0cm} +\frac{1}{2}\phi(-\frac{v}{k'+k})+\frac{1}{2}f_{{\rm SN}(0,1,-k)}(-\frac{v}{k'+k})\frac{1}{k'+k}\\
&= \int_{-\frac{v}{k'}}^{-\frac{v}{k'+k}} \phi(z)\phi(v+k'z) dz.
\end{align*}
We again applied \eqref{eq:derivint}, in this case to $\int_{-\frac{v}{k'}}^{-\frac{v}{k'+k}} \phi(z)\Phi(v+k'z) dz$.

The desired probability density function is 
\begin{equation*}
f_{X-k'Y|X_{{\rm disc}}=1,C_k=0}(v) = \left\{ \begin{array}{cl}
\frac{2\pi}{\pi-\arctan{k}} \int_{-\frac{v}{k'}}^{-\frac{v}{k'+k}} \phi(z)\phi(v+k'z) dz, &\mbox{$v\geq 0$} \\
 0, &\mbox{$v<0$}
       \end{array}. \right.
\end{equation*}

We finally consider $u=0$ and $j=0$. For $v\geq 0$, 
\begin{align*}
& P(X-k'Y\leq v,X_{{\rm disc}}=0,C_k=0)\\
&= \int_{-\infty}^{0} \int_{-\infty}^{+\infty} \phi(w)\phi(z) dw\, dz - \int_{0}^{+\infty} \int_{-kz}^{0} \phi(w)\phi(z) dz\, dw \\
&\hspace*{1.0cm} -\int_{-\infty}^{-\frac{v}{k'}} \int_{v+k'z}^{0} \phi(w)\phi(z) dz\, dw\\
&= \frac{1}{2}- \int_{-\infty}^{0} \phi(z)\left[\Phi(-kz)-\frac{1}{2}\right] dz- \int_{-\infty}^{-\frac{v}{k'}} \left[\frac{1}{2}-\Phi(v+k'z)\right]\phi(z) dz\\
&= \frac{3}{4}-\frac{1}{2}F_{{\rm SN}(0,1,-k)}(0)-\frac{1}{2}\Phi\left(-\frac{v}{k'}\right)+ \int_{-\infty}^{-\frac{v}{k'}} \Phi(v+k'z)\phi(z) dz.
\end{align*}
As for the case $v<0$,
\begin{align*}
& P(X-k'Y\leq v,X_{{\rm disc}}=0,C_k=0)\\
&= \int_{-\infty}^{-\frac{v}{k'+k}} \int_{-\infty}^{v+k'z} \phi(w)\phi(z) dw\, dz + \int_{-\frac{v}{k'+k}}^{+\infty} \int_{-\infty}^{-kz} \phi(w)\phi(z) dz\, dw\\
&= \int_{-\infty}^{-\frac{v}{k'+k}} \Phi(v+k'z)\phi(z) dw\, dz+ \int_{-\frac{v}{k'+k}}^{+\infty} \Phi(-kz)\phi(z) dz\\
&= \int_{-\infty}^{-\frac{v}{k'+k}} \Phi(v+k'z)\phi(z)  dz+\frac{1}{2}- \frac{1}{2}F_{{\rm SN}(0,1,-k)}\left(-\frac{v}{k'+k}\right).
\end{align*}

We again need to take the derivative of the two expressions with respect to $v$ to obtain the corresponding conditional density functions. In the case of $v\geq 0$,
\begin{align*}
&\frac{d}{dv}\left[\frac{1}{4}+\frac{1}{2}F_{{\rm SN}(0,1,k)}(0)-\frac{1}{2}\Phi(-\frac{v}{k'})+ \int_{-\infty}^{\frac{v}{k'}} \Phi(v+k'z)\phi(z) dz\right]\\
&= \int_{-\infty}^{\frac{v}{k'}} \phi(v+k'z)\phi(z) dz - \frac{1}{2}\phi(-\frac{v}{k'})\frac{1}{k'} + \frac{1}{2}\phi(-\frac{v}{k'})\frac{1}{k'}\\
&=\int_{-\infty}^{\frac{v}{k'}} \phi(z)\phi(v+k'z) dz;
\end{align*}
while, for $v<0$, 
\begin{align*}
&\frac{d}{dv}\left[\int_{-\infty}^{-\frac{v}{k'+k}} \Phi(v+k'z)\phi(z) dz+\frac{1}{2}-\frac{1}{2}F_{{\rm SN}(0,1,-k)}(-\frac{v}{k'+k})\right]\\
& = \int_{-\infty}^{-\frac{v}{k'+k}} \phi(z)\phi(v+k'z) dz -\frac{1}{2}f_{{\rm SN}(0,1,-k)}(-\frac{v}{k'+k})\frac{v}{k'+k}\\
& \hspace*{1.0cm} +\frac{1}{2}f_{{\rm SN}(0,1,-k)}(-\frac{v}{k'+k})\frac{v}{k'+k}\\
&=\int_{-\infty}^{-\frac{v}{k'+k}} \phi(z)\phi(v+k'z) dz.
\end{align*}
We applied \eqref{eq:derivint} to $\int_{-\infty}^{\frac{v}{k'}} \Phi(v+k'z)\phi(z) dz$ and $\int_{-\infty}^{-\frac{v}{k'+k}} \Phi(v+k'z)\phi(z) dz$.

The desired probability density function is
\begin{equation*}
f_{X-k'Y|X_{{\rm disc}}=0,C_k=0}(v) = \left\{ \begin{array}{cl}
\frac{2\pi}{\pi-\arctan{k}} \int_{-\infty}^{\frac{v}{k'}} \phi(z)\phi(v+k'z) dz, &\mbox{$v\geq 0$}\\
\frac{2\pi}{\pi-\arctan{k}} \int_{-\infty}^{-\frac{v}{k'+k}} \phi(z)\phi(v+k'z) dz, &\mbox{$v<0$}
       \end{array}. \right.
\end{equation*}

We can finally obtain an expression for $h(X-k'Y|X_{{\rm disc}},C_k)$:
\begin{align}
\nonumber
-\frac{\pi-\arctan{k}}{2\pi}&\left(\int_{-\infty}^{0} (\int_{-\frac{v}{k'}}^{+\infty} \frac{2\pi}{\pi-\arctan{k}} \zeta(z,v) dz) \ln(\int_{-\frac{v}{k'}}^{+\infty} \frac{2\pi}{\pi-\arctan{k}} \zeta(z,v) dz) dv+\right.\\
\nonumber
& \hspace*{0.3cm}\int_{0}^{+\infty} (\int_{-\frac{v}{k'+k}}^{+\infty} \frac{2\pi}{\pi-\arctan{k}} \zeta(z,v) dz) \ln(\int_{-\frac{v}{k'+k}}^{+\infty} \frac{2\pi}{\pi-\arctan{k}} \zeta(z,v) dz) dv+\\
\nonumber
& \hspace*{0.3cm}\int_{-\infty}^{0} (\int_{-\infty}^{\frac{v}{k'+k}} \frac{2\pi}{\pi-\arctan{k}}\zeta(z,v) dz) \ln(\int_{-\infty}^{\frac{v}{k'+k}} \frac{2\pi}{\pi-\arctan{k}} \zeta(z,v) dz) dv+\\
\nonumber
& \hspace*{0.3cm}\left. \int_{0}^{+\infty} (\int_{-\infty}^{\frac{v}{k'}} \frac{2\pi}{\pi-\arctan{k}} \zeta(z,v) dz) \ln(\int_{-\infty}^{\frac{v}{k'}} \frac{2\pi}{\pi-\arctan{k}} \zeta(z,v) dz) dv \right)\\ 
\nonumber
 -\frac{\arctan{k}}{2\pi}&\left(\int_{-\infty}^{0} (\int_{-\frac{v}{k'+k}}^{-\frac{v}{k'}} \frac{2\pi}{\arctan{k}}\zeta(z,v) dz) \ln(\int_{-\frac{v}{k'+k}}^{-\frac{v}{k'}} \frac{2\pi}{\arctan{k}}\zeta(z,v) dz) dv+ \right.\\
 \label{eq:giantentropy}
& \hspace*{0.3cm} \left.\int_{0}^{+\infty} (\int_{-\frac{v}{k'}}^{-\frac{v}{k'+k}} \frac{2\pi}{\arctan{k}}\zeta(z,v) dz) \ln(\int_{-\frac{v}{k'}}^{-\frac{v}{k'+k}} \frac{2\pi}{\arctan{k}}\zeta(z,v) dz) dv \right),
\end{align}
where $\zeta(z,v)$ is the function $\phi(v+k'z)\phi(z)$.

As for the class-conditional MI values that involve $Z$, ${\rm MI}(Z,X|C_k)={\rm MI}(Z,X-k'Y|C_k)={\rm MI}(Z,X_{{\rm disc}}|C_k)=0$. This requires checking that pairwise class-conditional independence holds for the three involved pairs. This follows, as argued in Section \ref{subsec:setting}, from the fact that $Z$ is independent of the pair composed by $C_k$ and any other input feature. 

\section{Calculations of MBR values in Section \ref{subsec:ordervariables}}
\label{app:B}

In this section, we start by obtaining the value of ${\rm MBR}(C_k,\{X\})$. We then prove that ${\rm MBR}(C_k,\{X,Z\})={\rm MBR}(C_k,\{X\})$.

Concerning the computation of ${\rm MBR}(C_k,\{X\})$, the $(C,\{X\})$ Bayes classifier assigns $x$, $x\in\mathcal X$, to $1$ if and only if, recall \eqref{eq:otpmequation},
%\textcolor{red}{
\begin{equation*}%}
\frac{f_{X|X+kY<0}(x)}{f_{X|X+kY\geq 0}(x)}\leq 1.
%\label{eq:otpmresultx}
%not called, so no label needed.
%\textcolor{red}{
\end{equation*}%}
The required densities $f_{X|X+kY\geq 0}$ and $f_{X|X+kY<0}$ are known from Appendix \ref{app:A2}. We take this into account to re-write the expression above as
\begin{equation*}
\frac{2\exp\left\{\frac{-x^2}{2\sigma^2_{X}\sigma^2_{X+kY}}\right\}\frac{1}{\sqrt{2\pi \sigma_{X}}}\Phi\left(-\frac{\rho \sqrt{\sigma^2_{X+kY}} x}{\sqrt{\sigma^2_{X}}}\frac{1}{\sqrt{\sigma^2_{X+kY}(1-\rho^2)}}\right)}{2\exp\left\{\frac{-x^2}{2\sigma^2_{X}\sigma^2_{X+kY}}\right\}\frac{1}{\sqrt{2\pi \sigma_{X}}}\Phi\left(\frac{\rho \sqrt{\sigma^2_{X+kY}} x}{\sqrt{\sigma^2_{X}}}\frac{1}{\sqrt{\sigma^2_{X+kY}(1-\rho^2)}}\right)}\leq 1,
\end{equation*}
where $\sigma_X^2=1$ is the variance of $X$, $\sigma_{X+kY}^2=1+k^2$ is the variance of $X+kY$, and $\rho=\frac{1}{k}$ is the correlation between $X$ and $X-k'Y$.
%FRANCISCO: could have directly been written with the notation from Example \ref{ex:normal}.

Many terms cancel out, and we get simply
\begin{equation*}
%\Phi(-\frac{\rho x}{\sqrt{\sigma^2_{X}}(1-\rho^2)}})\leq\Phi(\frac{\rho x}{\sqrt{\sigma^2_{X}(1-\rho^2)}}).
\Phi\left(-\frac{x}{k(1-\frac{1}{k^2})}\right)\leq\Phi\left(\frac{x}{k(1-\frac{1}{k^2})}\right).
\end{equation*}
As $\Phi$ is a non-decreasing function, this condition is the same as
\begin{equation*}
-\frac{x}{k(1-\frac{1}{k^2})}\leq \frac{x}{k(1-\frac{1}{k^2})}.
\end{equation*}
After further cancellations of the denominators, we simply obtain the condition $-x\leq x$, which holds if and only if $x\geq 0$. As a result, the classifier assigns $x$ to $1$ if $x$ is non-negative, and to $0$ otherwise; note that the classifier applied to $X$ gives $X_{{\rm disc}}$. 

Therefore, ${\rm MBR}(C_k,\{X\})$ depends on the angle, in the two-dimensional space defined by the pair $(X,Y)$, between the lines associated with $X$ and $X+kY$, $\arctan{k}$. Note that the only knowledge of $X$ needed concerns the value that $X_{{\rm disc}}$ takes; recall \eqref{eq:xdisc}. As a result, Figure \ref{fig:midisc} also illustrates the regions where a wrong classification will occur, which are the regions $(a)$ and $(b)$. The probabilities associated with the different regions have been given in \eqref{eq:jointdiscprobs}, allowing us to obtain
\begin{equation*}
{\rm MBR}(C_k,\{X\})=2 \frac{\arctan{k}}{2\pi}=\frac{\arctan{k}}{\pi}.
\end{equation*}

We now verify that ${\rm MBR}(C_k,\{X,Z\})={\rm MBR}(C_k,\{X\})$. By \eqref{eq:otpmequation}, the $(C_k,\{X,Z\})$ Bayes classifier associated with the MBR takes the value $1$ if and only if
\begin{equation*}
\frac{f_{(X,Z)|X+kY<0}(x,z)}{f_{(X,Z)|X+kY\geq 0}(x,z)}\leq 1.
\end{equation*}
Using the facts that $Z$ and $X$ are class-conditionally independent and that $Z$ is independent of $C_k$, the condition above reduces to
\begin{equation*}
\frac{f_{X|X+kY<0}(x)}{f_{X|X+kY\geq 0}(x)}\leq 1.
\end{equation*}
As a result, the points $(x,z)$ assigned by the classifier to the value $1$ are those that verify $x\geq 0$. As a result, ${\rm MBR}(C_k,\{X,Z\})={\rm MBR}(C_k,\{X\})$.

%\newpage
%\section*{References}

\section*{Acknowledgements}
%\acks{

Research was partially sponsored by national funds through the Fundação Nacional
para a Ciência e Tecnologia (FCT), Portugal, under the projects PEstOE/MAT/UI0822/2014
and  PTDC/EEI-TEL/5708/2014. F. Macedo was funded by the FCT PhD grant SFRH/BD/51930/2012.

\bibliography{biblifinal3}

\end{document}